\documentclass[10pt]{article}
\usepackage{mystyle}
\RequirePackage[top=2.25cm, bottom=2.5cm, left=2.5cm, right=2.5cm, columnsep=0.65cm]{geometry}
\usepackage{wrapfig}
\usepackage{etoc}
\IfFileExists{microtype.sty}{\usepackage{microtype}}{}
\allowdisplaybreaks[2]
\providecommand{\E}{\mathbb E}
\providecommand{\Prob}{\mathbb P}
\providecommand{\KL}{\operatorname{KL}}
\title{\LARGE When Does Momentum Help in Reinforcement Learning with Verifiable Rewards?}
\author{%
Kaifei Wang\thanks{\mbox{Peking University. Email: \texttt{\detokenize{{wkf5094,shanzikang}@stu.pku.edu.cn}}, \texttt{wanglw@pku.edu.cn}.}}\hspace{1.5em}
Zikang Shan\footnotemark[1]\hspace{1.5em}
Liwei Wang\footnotemark[1]\hspace{1.5em}
Yinyu Ye\thanks{\mbox{Shanghai Jiao Tong University, SIMIS, and Stanford University. Email: \texttt{yinyu-ye@stanford.edu}.}}\hspace{1.5em}
Han Zhong\thanks{\mbox{Shanghai Jiao Tong University. Email: \texttt{han.zhong@sjtu.edu.cn}.}}%
}
\date{}
\begin{document}
\etocdepthtag.toc{main}
\maketitle

\begin{abstract}
Momentum is a classical tool for accelerating convex optimization and a standard component of modern large language model training. 
Yet recent empirical work has questioned the usefulness of first-moment momentum in reinforcement learning with verifiable rewards (RLVR). 
We study this question through a bandit formulation of RLVR with bias-corrected exponential moving average (EMA) momentum, 
where the EMA coefficient $\beta$ controls how quickly the weights on past gradients decay. When policy gradients are evaluated exactly, 
we prove that increasing $\beta$ within a range that prevents overshoot strictly accelerates optimization. 
When gradients are estimated from sampled responses, a positive expected gain can be obscured by sampling fluctuations. 
We show that as the learning rate $\eta$ decreases, the expected gain with
fixed $\beta$ vanishes faster than these fluctuations, whereas increasing
$\beta$ to keep $\eta/(1-\beta)$ within the safe regime preserves a
nonvanishing gain that eventually dominates them.
Bandit simulations quantitatively validate these predictions.
RLVR experiments on Qwen3-4B-Base using SGD with EMA and AdamW support
our principle of choosing larger coefficients in practical LLM post-training.
On Qwen2.5-32B with AdamW, simply increasing $\beta_1$ improves mean
validation accuracy by nearly $2$ percentage points over the conventional
choice, demonstrating the principle's effectiveness at a larger model scale.
\end{abstract}

\section{Introduction}
Reinforcement learning with verifiable rewards (RLVR) has become an important approach to improving the reasoning capabilities of large language models (LLMs) \citep{lambert2024tulu,deepseek2025r1,yu2025dapo}. In RLVR, responses are sampled from the current policy and evaluated using verifiable rewards, so the training data distribution evolves as the policy changes. This differs from pretraining and supervised fine-tuning, which optimize next-token prediction (NTP) on given text sequences \citep{brown2020language,ouyang2022training3}. Despite these differences, RLVR commonly adopts Adam \citep{kingma2015adam} or AdamW \citep{loshchilov2019decoupled}, together with conventional first-moment momentum settings \citep{zhang2025adams,mukherjee2026adam}.
The design and configuration of optimizers for RLVR have received relatively limited attention.

One important aspect of optimizer design and configuration is the use of first-moment momentum, implemented in Adam and AdamW as an exponential moving average (EMA) of past gradients. While EMA has shown empirical benefits in NTP training
\citep{chen2023symbolic,zhao2025deconstructing},
\citet{mukherjee2026adam} report that momentum does not consistently improve
performance in their RLVR experiments and can even reduce it under a
conventional choice of the momentum coefficient. These observations motivate a closer examination of whether EMA
can benefit RLVR and what factors influence its effectiveness.

To understand when EMA benefits RLVR and how to choose its coefficient,
we adopt a KL-regularized response-level bandit formulation
\citep{xiong2024iterative}. We analyze natural policy gradient
(NPG, \citealp{kakade2001natural}) with a softmax policy parameterization
and bias-corrected EMA. NPG provides a tractable theoretical framework
for studying policy improvement while controlling changes in the policy
distribution, a principle shared by TRPO and PPO
\citep{schulman2015trpo,schulman2017ppo,shani2020adaptive}.
NPG-based analysis has also provided practical guidance for LLM
post-training \citep[e.g., ][]{gai2026demystifying}. Our contributions are as follows.

\vspace{5pt}
\noindent\textbf{Deterministic acceleration and stochastic gains.}
We first prove that, with exact gradients, increasing the EMA coefficient
within a safe regime strictly accelerates optimization, which appears to contrast with the limited benefits reported in prior RLVR experiments
\citep{mukherjee2026adam}. 
A deterministic gain, however, need not translate into a consistent
improvement when gradients are estimated from sampled responses,
as sampling fluctuations may obscure the benefit.
We characterize the expectation and variance of the objective gain
and prove that, for a fixed EMA coefficient, the expected gain
remains positive for sufficiently small learning rates but vanishes
faster than the sampling fluctuations as the learning rate decreases
(Theorems~\ref{thm:exact-benefit} and~\ref{cor:fixed-beta}).

\vspace{5pt}
\noindent\textbf{Principled selection of momentum coefficients.}
We next show that scaling the EMA coefficient
$\beta$ with the learning rate $\eta$ so that $\eta/(1-\beta)$ is
a positive constant within the safe regime yields a nonvanishing
expected gain that eventually dominates sampling fluctuations
(Theorem~\ref{thm:fixed-horizon}). We further prove that an appropriately
increased coefficient allows a smaller learning rate to achieve comparable expected performance under the same update and sample budgets,
with asymptotically lower variance of the objective value
(Theorem~\ref{thm:smaller-lr}).

\vspace{5pt}
\noindent\textbf{Validation in bandits and LLM post-training.}
We quantitatively validate the predicted gain and fluctuation scales
in numerical bandit experiments. GRPO experiments
\citep{shao2024deepseekmath} on Qwen3-4B-Base \citep{yang2025qwen3}
support our theoretical predictions in practical LLM post-training
with both SGD with EMA and the widely used AdamW optimizer: larger
coefficients improve performance and, when paired with smaller learning
rates, yield more stable training with comparable or slightly improved
performance. We further demonstrate the principle's effectiveness at
a larger model scale: on Qwen2.5-32B \citep{yang2024qwen25}, simply increasing AdamW's first-moment coefficient improves mean validation accuracy by nearly $2$ percentage points over the conventional choice.

\section{Related work}
\label{sec:related}

\noindent\textbf{Optimization and momentum in RLVR.}
Recent work revisits optimizer design for RLVR through momentum-based
normalization~\citep{zhang2025adams}, spectral
transformations~\citep{fan2026rethinking,shi2026pion,zhu2026iso}, and
parameter-space exploration~\citep{venkatkrishna2026exploration}.
Focusing on optimizer momentum, \citet{fontana2026hidden} analyze how
accumulated AdamW momentum can continue driving updates after clipping
suppresses the current policy-gradient contribution during repeated
optimization of the same batch.
Most directly related to our motivation, \citet{mukherjee2026adam} find
that SGD can match or outperform AdamW and report no consistent benefit
from momentum.
Motivated by these empirical findings, we study how the benefit of
ordinary EMA momentum in RLVR depends on the momentum coefficient,
learning rate, and sampling noise.

\vspace{5pt}
\noindent\textbf{Momentum in reinforcement learning.}
Prior work studies momentum for accelerating policy optimization and
reducing gradient-estimation error in conventional RL.
\citet{chen2024accelerated} establish accelerated convergence for
exact-gradient policy optimization under tabular softmax parameterization using
Nesterov momentum with a restart rule.
Momentum-based variance reduction has also been studied for policy
gradient~\citep{huang2020momentum,ding2022global} and natural policy
gradient~\citep{feng2024global}. \citet{huang2022bregman} develop Bregman
policy optimization with basic momentum and a variance-reduced variant.
This literature primarily concerns convergence and sample complexity in
conventional RL, with reported experiments on control and Atari benchmarks.
We study bias-corrected averaging of past natural gradients in a
KL-regularized RLVR bandit model, comparing objective values with and
without momentum and examining the practical implications through GRPO
experiments with language models.

\vspace{5pt}
\noindent\textbf{Momentum in supervised learning and stochastic optimization.}
Analyses of online learning and general stochastic optimization
characterize regimes in which momentum behaves similarly to SGD under
appropriate learning-rate scaling~\citep{yuan2016momentum,wang2024marginal}.
In language-model pretraining, AdEMAMix combines fast and slow gradient
averages to exploit longer gradient memory~\citep{pagliardini2025ademamix},
while ADana combines time-varying momentum schedules with a direct
gradient term and damping~\citep{ferbach2026logarithmic}.
These pretraining studies concern next-token prediction on given text
sequences. Our focus is momentum in RLVR, where reward feedback is
obtained from responses sampled from the evolving policy.

\section{Preliminaries}
\label{sec:setting}
In this section, we introduce the model, algorithms, and notation used throughout the paper, providing the foundation for our theoretical analysis.
\subsection{Model and parameterization}

\noindent\textbf{Basic Setup.}
Let $x\in \mathcal X$ be a given input prompt, and let $y\in \mathcal Y$ be a corresponding response generated by the language model. 
The model is parameterized as a policy $\pi$ characterized by $\theta$, from which responses are sampled $y\sim\pi_{\theta}(\cdot\mid x)$. 
Each prompt-response pair $(x,y)$ is evaluated by a verifier and receives a reward $r(x,y)$.
We consider the case of deterministic binary rewards: $r(x,y)$ is 1 if and only if the response $y$ is a correct answer to the prompt $x$, and 0 otherwise.
Let $\mathbf r\in \{0,1\}^{|\mathcal Y|}$ denote the vector of rewards for all possible responses to the prompt $x$.
We maximize the expected reward while penalizing divergence from a fixed reference policy $\pi_{\mathrm{ref}}$, using the objective
\begin{equation}
J(\theta) = \E_{y\sim \pi_\theta(\cdot\mid x)}[r(x,y)] - \gamma \KL(\pi_\theta(\cdot\mid x)\Vert \pi_{\mathrm{ref}}(\cdot\mid x)),
\label{eq:objective}
\end{equation}
where $\KL(\pi_\theta(\cdot\mid x)\Vert \pi_{\mathrm{ref}}(\cdot\mid x)):= \sum_{y \in \mathcal Y} \pi_\theta(y\mid x) \log (\pi_\theta(y\mid x)/\pi_{\mathrm{ref}}(y\mid x))$ is the Kullback-Leibler divergence between the policy $\pi_\theta(\cdot\mid x)$ and the reference policy, and $\gamma>0$ is a regularization coefficient.
Assume $\pi_{\mathrm{ref}}$ is a full-support policy.
To exclude degenerate cases, we assume that the initial policy is suboptimal and that the response set contains both correct and incorrect responses.
The objective $J(\theta)$ encourages higher expected reward while penalizing deviations from a fixed reference policy, such as one obtained through supervised fine-tuning.

\vspace{5pt}
\noindent\textbf{Policy parameterization.}
We adopt the standard softmax parameterization to model response generation.
Given a prompt $x$, we parameterize the conditional distribution over responses
using a logit vector $\theta\in\mathbb R^{|\mathcal Y|}$ as
$\pi_\theta(y\mid x)=e^{\theta(x,y)}/\sum_{y'\in\mathcal Y}e^{\theta(x,y')}$,
where $\theta(x,y)$ is the logit score for response $y$ given prompt $x$.
This response-level softmax
parameterization provides a tractable abstraction of autoregressive
generation in modern LLMs, whose token-level policies use softmax
distributions~\citep{ziegler2019fine,zhong2025dpo}.
\subsection{Training dynamics}

\noindent\textbf{Natural policy gradient.}
Our theoretical analysis focuses on the natural policy gradient (NPG) method, which 
preconditions the policy gradient using the Fisher information matrix \citep{kakade2001natural}.
Let $\theta_t$ denote the parameter vector at iteration $t$ and $\pi_{\theta_t}(\cdot\mid x)$ the induced policy. 
When policy gradients are evaluated exactly, the NPG update is given by:
\begin{equation}
 \theta_{t+1}=\theta_t+\eta \mathcal F^\dagger(\theta_t)\nabla J(\theta_t),\qquad
 \mathcal F(\theta)=\E_{y\sim\pi_\theta(\cdot\mid x)}\left[
 \nabla\log\pi_\theta(y\mid x)\nabla\log\pi_\theta(y\mid x)^\top\right].
 \label{eq:npg}
\end{equation}
Here $\eta>0$ represents the learning rate, $\mathcal F(\theta)$ is the Fisher information matrix, and $\mathcal F^\dagger(\theta)$ is its Moore--Penrose
pseudoinverse. When policy gradients are estimated from sampled responses, we replace the gradient $\nabla J(\cdot)$ with its unbiased estimator $\widehat{\nabla J}(\cdot)$, which uses a batch of sampled responses to approximate the expected reward, 
while the KL gradient and the Fisher information matrix are computed exactly. Specifically, let $\hat\theta_t$ be the parameter vector under the sampled NPG update. The update is given by:
\begin{equation}
 \hat\theta_{t+1}=\hat\theta_t+\eta \mathcal F^\dagger(\hat\theta_t)\widehat{\nabla J}(\hat\theta_t).
 \label{eq:sampled-npg}
\end{equation}
NPG not only enjoys favorable convergence guarantees under softmax
parameterization \citep{agarwal2021theory,cen2022fast}, but also connects
to modern RL algorithms used in LLM post-training. It can be viewed as
maximizing a linearized objective while controlling the KL divergence
between consecutive policies \citep{shani2020adaptive,cai2020provably,zhong2023theoretical},
sharing the principle of limiting policy changes with TRPO~\citep{schulman2015trpo}
and with PPO and GRPO, which are used in LLM post-training
\citep{schulman2017ppo,ouyang2022training3,shao2024deepseekmath}.
We study NPG to shed light on the training dynamics of RL algorithms
used in modern LLM post-training.

\begin{proposition}
\label{prop:softmax-npg}
Under the softmax parameterization, the exact and sampled NPG directions
are given by:
\begin{align}
 \mathcal F^\dagger(\theta)\nabla J(\theta)
 &=\Big(\mathbf I-\frac{\mathbf1\mathbf1^\top}{|\mathcal Y|}\Big)
   \left[\mathbf r-\gamma(\theta-\log\pi_{\mathrm{ref}}(\cdot\mid x))\right],
 \label{eq:exact-npg-direction}\\
 \mathcal F^\dagger(\hat \theta)\widehat{\nabla J}(\hat \theta)
 &=\Big(\mathbf I-\frac{\mathbf1\mathbf1^\top}{|\mathcal Y|}\Big)
   \Big[\frac1B\sum_{b=1}^B
   \frac{r(x,y_b)}{\pi_{\hat\theta}(y_b\mid x)}\mathbf e_{y_b}
   -\gamma(\hat\theta-\log\pi_{\mathrm{ref}}(\cdot\mid x))\Big].
 \label{eq:sampled-npg-direction-main}
\end{align}
Here $B$ denotes the batch size used to estimate the policy gradient,
and $y_1,\ldots,y_B\stackrel{\mathrm{iid}}\sim\pi_{\hat\theta}(\cdot\mid x)$ are the sampled responses.
Let $\mathbf I$ be the identity matrix, $\mathbf1$ the all-ones vector, and \(\mathbf e_y\) the one-hot vector corresponding to response $y$.
\end{proposition}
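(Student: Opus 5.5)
Fix the prompt $x$, drop it from the notation, and write $\pi=\pi_\theta\in\mathbb R^{|\mathcal Y|}$. The plan is to compute three objects in closed form for the softmax parameterization: the score $\nabla\log\pi_\theta(y)$, the Fisher matrix $\mathcal F(\theta)$, and the gradient $\nabla J(\theta)$. We then show that $\nabla J(\theta)$ has the form $\mathcal F(\theta)v$ for an explicit $v$, and finally use the structure of $\mathcal F^\dagger$ to recover the projection $\mathbf I-\mathbf1\mathbf1^\top/|\mathcal Y|$.

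\textbf{Score, Fisher matrix, and gradient.} The softmax score is $\nabla\log\pi_\theta(y)=\mathbf e_y-\pi$. Therefore
\[
\mathcal F(\theta)=\sum_y\pi(y)(\mathbf e_y-\pi)(\mathbf e_y-\pi)^\top=\operatorname{diag}(\pi)-\pi\pi^\top .
\]
Since $\pi$ has full support, this matrix is positive semidefinite with kernel exactly $\operatorname{span}(\mathbf1)$: if $v^\top\mathcal F v=0$, then $\operatorname{Var}_\pi(v_y)=0$, so $v$ is constant. Hence $\operatorname{range}(\mathcal F)=\mathbf1^\perp$, and $\mathcal F^\dagger\mathcal F=\mathbf I-\mathbf1\mathbf1^\top/|\mathcal Y|$, the orthogonal projector $P$ onto $\mathbf1^\perp$.

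For the objective, write $J(\theta)=\sum_y\pi(y)\,g_\theta(y)$ with $g_\theta(y)=r(y)-\gamma(\log\pi(y)-\log\pi_{\mathrm{ref}}(y))$. The log-derivative trick applies, and the extra term from differentiating $\log\pi$ inside $g_\theta$ vanishes because $\sum_y\pi(y)\nabla\log\pi(y)=0$. This gives
\[
\nabla J=\sum_y\pi(y)(\mathbf e_y-\pi)\,g_\theta(y)=\mathcal F g_\theta ,
\]
using $\mathcal F g=\operatorname{diag}(\pi)g-\pi(\pi^\top g)$. Moreover $\log\pi=\theta-(\log\sum_{y'}e^{\theta(y')})\mathbf1$, and $\mathcal F\mathbf1=0$, so we may replace $g_\theta$ by $\mathbf r-\gamma(\theta-\log\pi_{\mathrm{ref}})$. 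Then
\[
\mathcal F^\dagger\nabla J=\mathcal F^\dagger\mathcal F\,[\mathbf r-\gamma(\theta-\log\pi_{\mathrm{ref}})]=P\,[\mathbf r-\gamma(\theta-\log\pi_{\mathrm{ref}})],
\]
which is \eqref{eq:exact-npg-direction}.

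\textbf{Sampled direction.} Only the reward part is estimated. Its gradient is $\sum_y\pi(y)r(y)(\mathbf e_y-\pi)$, and the paper's unbiased estimator is taken to be the score-function estimate $\frac1B\sum_b r(y_b)(\mathbf e_{y_b}-\hat\pi)$. For each sample we can write
\[
r(y_b)(\mathbf e_{y_b}-\hat\pi)=\mathcal F\Big(\frac{r(y_b)}{\hat\pi(y_b)}\mathbf e_{y_b}\Big).
\]
To check this, note that $\mathcal F(c\,\mathbf e_y)=c\,\pi(y)\mathbf e_y-c\,\pi(y)\pi$; take $c=r(y_b)/\hat\pi(y_b)$. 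The KL part is computed exactly and equals $\mathcal F[-\gamma(\hat\theta-\log\pi_{\mathrm{ref}})]$ by the first part. So $\widehat{\nabla J}=\mathcal F w$ with $w$ the bracket in \eqref{eq:sampled-npg-direction-main}, and applying $\mathcal F^\dagger\mathcal F=P$ gives the claim.

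\textbf{Main obstacle.} The delicate step is the pseudoinverse identity. It requires the kernel of $\mathcal F$ to be exactly $\operatorname{span}(\mathbf1)$, which uses full support of $\pi_\theta$ (automatic for softmax). It also requires recognizing that $\mathcal F^\dagger\mathcal F$ is the orthogonal projector onto $\operatorname{range}(\mathcal F^\top)=\mathbf1^\perp$. We also need to fix the estimator convention: the result relies on using the score-function estimator, or any estimator with the same $\mathcal F$-preimage modulo $\mathbf1$; other estimators could differ.
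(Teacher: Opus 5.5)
Your proof is correct and follows essentially the same route as the paper. You compute the softmax score and the Fisher matrix $\operatorname{diag}(\pi)-\pi\pi^\top$ with kernel $\operatorname{span}\{\mathbf 1\}$, write both the exact and sampled gradients as $\mathcal F$ applied to the bracketed vector (using $\mathcal F\mathbf 1=0$ and $\mathcal F(\mathbf e_y/\pi(y))=\mathbf e_y-\pi$), and then apply $\mathcal F^\dagger\mathcal F=\mathbf I-\mathbf 1\mathbf 1^\top/|\mathcal Y|$. The only cosmetic difference is that you get $\nabla J=\mathcal F g_\theta$ from the log-derivative trick, whereas the paper uses the chain rule with the Jacobian identity $\partial\pi_\theta/\partial\theta^\top=\mathcal F(\theta)$.
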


\noindent\textbf{Exponential Moving Average (EMA) momentum.}
In classical optimization, Polyak's heavy-ball momentum can accelerate convergence under suitable
curvature and parameter conditions~\citep{polyak1964some}, and global convergence guarantees
are available for smooth convex objectives~\citep{ghadimi2015global}.
EMA is its normalized form and is widely used in modern LLM training
optimizers, such as Adam and AdamW
\citep{kingma2015adam,loshchilov2019decoupled}.
EMA maintains a first-moment momentum $m_t$, where the EMA coefficient
$\beta\in(0,1)$ controls the contribution of the current gradient to
the parameter update. With exact gradients, the NPG update with EMA is given by
\begin{equation}
 m_{t+1}=\beta m_t+(1-\beta)\mathcal F^\dagger(\theta_t^{\beta})\nabla J(\theta_t^{\beta}),
 \qquad
 \theta_{t+1}^{\beta}=\theta_t^{\beta}+\eta\frac{m_{t+1}}{1-\beta^{t+1}}.
 \label{eq:npg-ema}
\end{equation}
The initial condition is $m_0=0$ and $\theta_t^{\beta}$ denotes the parameter vector updated by EMA with coefficient $\beta$ at iteration $t$. In the stochastic setting, we replace $\nabla J(\cdot)$ with its unbiased estimator and denote the corresponding parameter vector by $\hat \theta^{\beta}_t$.

Without EMA, the current natural gradient
is applied in full to the current parameter update. With EMA, a gradient's
contribution is spread across the current and subsequent updates, while
each update uses a weighted sum of the current and past natural gradients.
Thus, EMA can be viewed as a temporal reallocation of gradient contributions
to parameter updates. To isolate this reallocation from changes in the total
gradient weight, we normalize $m_t$ by $1-\beta^t$ for $t\ge1$ before
using it to update the parameters. This makes the gradient weights in each
update sum to one, following the first-moment bias correction used in Adam
and AdamW.

Let $\omega_\beta(t,t')$ denote the weight of the gradient at time $t'$ in the update at time $t$, under EMA with coefficient $\beta$.
For $\beta\in(0,1)$, \eqref{eq:npg-ema} can be rewritten as
\[
\theta_{t+1}^{\beta}=\theta_t^{\beta}+\eta\sum_{t'=0}^{t}\omega_\beta(t,t')\mathcal F^\dagger(\theta_{t'}^{\beta})\nabla J(\theta_{t'}^{\beta}),
\qquad
\omega_\beta(t,t')=\frac{(1-\beta)\beta^{t-t'}}{1-\beta^{t+1}}.
\]
The stochastic update uses the same weights.
In particular, we define $\omega_0(t,t')=0$ for all $t'\neq t$ and $\omega_0(t,t)=1$, which recovers the standard NPG update without EMA. 
For consistency, we use the same notation with \(\beta=0\) to denote the corresponding quantities for vanilla NPG without EMA.

\section{Deterministic case: acceleration with exact gradients}
\label{sec:exact}
In this section, we consider deterministic optimization with exact policy
gradients, where the natural-gradient direction is given by
\eqref{eq:exact-npg-direction}. Maximizing $J$ in~\eqref{eq:objective} yields the unique optimal policy
$\pi^\star(y\mid x)\propto\pi_{\mathrm{ref}}(y\mid x)\exp(r(x,y)/\gamma)$.
For an arbitrary initial parameter $\theta_0$, define
\[
 \theta^\star:=\log\pi_{\mathrm{ref}}(\cdot\mid x)+\frac{\mathbf r}{\gamma}
 +\frac{\mathbf1^\top\left[\theta_0-\log\pi_{\mathrm{ref}}(\cdot\mid x)
                              -\mathbf r/\gamma\right]\mathbf1}{|\mathcal Y|},
\]
which is one of the maximizers of $J$ and induces $\pi^\star$.
Here $\mathbf r$ is the reward vector and $|\mathcal Y|$ is the number of possible responses.
We compare the exact NPG iterates obtained with EMA coefficient $\beta$
against those obtained without EMA ($\beta=0$). Both runs are initialized at $\theta_0$ with
$m_0=0$ and use the same learning rate $\eta$ and number of updates
$T$.

Following the update in~\eqref{eq:npg-ema}, we recursively define
a family of functions $\{f_t:[0,1)\to\mathbb R\}_{t\ge0}$ by
$f_0(\beta)=1$ and
$f_{t+1}(\beta)=f_t(\beta)-\eta\gamma\sum_{t'=0}^t
\omega_\beta(t,t')f_{t'}(\beta)$ for $t\ge0$.
In particular, for vanilla NPG, $f_t(0)=(1-\eta\gamma)^t$.
The following lemma shows that these functions characterize the natural
policy gradient and the parameter error at each iteration.

\begin{lemma}[Exact dynamics]
\label{lem:exact-dynamics}
The exact NPG iterates in~\eqref{eq:npg-ema} satisfy, for every $t\ge0$,
\begin{align}
 \mathcal F^\dagger(\theta_t^\beta)\nabla J(\theta_t^\beta)
 =f_t(\beta)\mathcal F^\dagger(\theta_0)\nabla J(\theta_0),\qquad
 \theta_t^\beta-\theta^\star
 =f_t(\beta)(\theta_0-\theta^\star)
 .
\end{align}
\end{lemma}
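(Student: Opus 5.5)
Write $P:=\mathbf I-\mathbf1\mathbf1^\top/|\mathcal Y|$ and $g(\theta):=\mathcal F^\dagger(\theta)\nabla J(\theta)$. By Proposition~\ref{prop:softmax-npg}, $g(\theta)=P[\mathbf r-\gamma(\theta-\log\pi_{\mathrm{ref}}(\cdot\mid x))]$, which is affine in $\theta$. The plan is to rewrite this map relative to $\theta^\star$ and then run a strong induction on $t$ using the recursion defining $f_t$.

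\textbf{Step 1: express $g$ through $\theta^\star$.} Since $\theta^\star-\log\pi_{\mathrm{ref}}=\mathbf r/\gamma+c\mathbf1$ for a scalar $c$, and $P\mathbf1=0$, we get
\[
g(\theta)=P[\mathbf r-\gamma(\theta^\star-\log\pi_{\mathrm{ref}})]-\gamma P(\theta-\theta^\star)=-\gamma P(\theta-\theta^\star).
\]
By the choice of the constant in $\theta^\star$, $\mathbf1^\top(\theta_0-\theta^\star)=0$, so $P(\theta_0-\theta^\star)=\theta_0-\theta^\star$. Hence $g(\theta_0)=-\gamma(\theta_0-\theta^\star)$.

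\textbf{Step 2: invariance of the mean-zero subspace.} Every update direction lies in the range of $P$, i.e.\ it is orthogonal to $\mathbf1$. It follows that $\mathbf1^\top(\theta_t^\beta-\theta^\star)=0$ for all $t$. I will carry this fact along inside the induction rather than prove it separately.

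\textbf{Step 3: strong induction on $t$.} The hypothesis is that $\theta_s^\beta-\theta^\star=f_s(\beta)(\theta_0-\theta^\star)$ for all $s\le t$; the base case $t=0$ holds because $f_0=1$. For any such $s$, Steps 1 and 2 give
\[
g(\theta_s^\beta)=-\gamma P(\theta_s^\beta-\theta^\star)=-\gamma f_s(\beta)(\theta_0-\theta^\star)=f_s(\beta)\,g(\theta_0),
\]
which is the first identity in Lemma~\ref{lem:exact-dynamics}. Substituting into the weighted-sum form of \eqref{eq:npg-ema} gives
\[
\theta_{t+1}^\beta-\theta^\star=\Big[f_t(\beta)-\eta\gamma\sum_{t'=0}^t\omega_\beta(t,t')f_{t'}(\beta)\Big](\theta_0-\theta^\star)=f_{t+1}(\beta)(\theta_0-\theta^\star).
\]
This closes the induction.

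\textbf{Checks on the rewriting and the $\beta=0$ case.} The step I expect to need the most care is justifying that \eqref{eq:npg-ema} can be rewritten with the weights $\omega_\beta(t,t')$. Unrolling $m_{t+1}=\sum_{t'\le t}(1-\beta)\beta^{t-t'}g(\theta_{t'}^\beta)$ from $m_0=0$ and dividing by $1-\beta^{t+1}$ gives exactly those weights. For $\beta=0$, the convention $\omega_0(t,t')=\mathbb 1\{t'=t\}$ reduces the recursion to $f_{t+1}=(1-\eta\gamma)f_t$, consistent with $f_t(0)=(1-\eta\gamma)^t$.

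Because the subspace orthogonal to $\mathbf1$ is preserved, $P$ acts as the identity on all error vectors. So the dynamics are a scalar recursion along the fixed direction $\theta_0-\theta^\star$, and no genuine obstacle remains.
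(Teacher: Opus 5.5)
Your proposal is correct and follows essentially the same route as the paper: rewrite the direction as $\gamma P(\theta^\star-\theta)$ by substituting the definition of $\theta^\star$, use $\mathbf1^\top(\theta_t^\beta-\theta^\star)=0$ to drop the projection, and induct on the parameter identity through the defining recurrence of $f_t$. The only difference is cosmetic: the paper proves conservation of $\mathbf1^\top\theta_t^\beta$ separately via $\mathbf1^\top m_t=0$. In your version that step is redundant, because linearity of $P$ together with $P(\theta_0-\theta^\star)=\theta_0-\theta^\star$ already handles it inside the induction.
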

The proof is in Appendix~\ref{app:exact}. Lemma~\ref{lem:exact-dynamics} shows that the exact natural gradients
at all iterations are collinear and that the parameter error
$\theta_t^\beta-\theta^\star$ remains a scalar multiple of
$\theta_0-\theta^\star$. As long as
$f_{t'}(\beta)>0$ for all $0\le t'\le t$, the iterates up to time $t$
do not overshoot $\theta^\star$. Over these iterations, the iterates 
\(\theta_{t'}^\beta\) approach \(\theta^\star\), so the natural gradients point in the same direction and decrease in magnitude.
EMA averages the current natural gradient with earlier, larger ones, which can accelerate optimization.
The following theorem gives a sufficient range that prevents
overshoot and establishes a strict benefit from increasing $\beta$. The proof is in Appendix~\ref{app:exact}.

\begin{theorem}[Strict acceleration from momentum]
\label{thm:exact-benefit}
Assume $0<\eta\gamma<1$ and $T\ge2$.
For any $\beta_1,\beta_2$ in the range $[0,((1-\eta\gamma)/(1+\eta\gamma))^2]$ with $\beta_1<\beta_2$, we have
$J(\theta_T^{\beta_2})>J(\theta_T^{\beta_1})$.
In particular, every positive $\beta$ in this range satisfies
$J(\theta_T^\beta)>J(\theta_T^0)$. 
\end{theorem}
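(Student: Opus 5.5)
Using Lemma~\ref{lem:exact-dynamics}, the problem reduces to a scalar one. Since $\theta_T^\beta-\theta^\star=f_T(\beta)(\theta_0-\theta^\star)$, the objective along the line $\theta^\star+s(\theta_0-\theta^\star)$ is $g(s):=J(\theta^\star+s(\theta_0-\theta^\star))$. Here $J$ is concave in the induced policy, and along this line the policy is $\pi^\star$ tilted by $s\,v$ with $v=\theta_0-\theta^\star$. The function $g(s)=\langle \pi_s,\mathbf r\rangle-\gamma\KL(\pi_s\Vert\pi_{\mathrm{ref}})$ can be rewritten as $\gamma\log Z^\star-\gamma\KL(\pi_s\Vert\pi^\star)$, using $\pi^\star\propto\pi_{\mathrm{ref}}e^{\mathbf r/\gamma}$. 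Now $\KL(\pi_s\Vert\pi^\star)=s\,\E_{\pi_s}[v]-\log\E_{\pi^\star}[e^{sv}]$ has derivative $s\,\mathrm{Var}_{\pi_s}(v)$. The initial policy is suboptimal, so $v$ is not a constant vector and the variance is positive. Hence $g$ is strictly decreasing in $|s|$. It therefore suffices to show that $0\le f_T(\beta_2)<f_T(\beta_1)$, i.e., that on the range $\beta\in[0,\beta^\ast]$ with $\beta^\ast:=((1-\eta\gamma)/(1+\eta\gamma))^2$, the map $\beta\mapsto f_T(\beta)$ is nonnegative and strictly decreasing for $T\ge2$.

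The next step is to analyze $f_t$. Writing $a=\eta\gamma$ and using the momentum variable, set $M_t:=\sum_{t'\le t}(1-\beta)\beta^{t-t'}f_{t'}$, so that $M_t=\beta M_{t-1}+(1-\beta)f_t$ and $f_{t+1}=f_t-a M_t/(1-\beta^{t+1})$. I would first prove positivity. By induction, $f_t>0$ and $f_t$ is decreasing in $t$, so $M_t/(1-\beta^{t+1})$ is a weighted average of $f_0,\dots,f_t$. This average lies in $[f_t,f_0]$, which gives $f_{t+1}\ge f_t-a\cdot(\text{average})$. To keep this positive I would compare with the undamped heavy-ball linear recurrence $f_{t+1}=(1+\beta-a(1-\beta))f_t-\beta f_{t-1}$. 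This is the unnormalized form; the bias correction only lowers the effective step early on. Its characteristic polynomial $z^2-(1+\beta-a+a\beta)z+\beta$ has real positive roots exactly when $(1+\beta-a(1-\beta))^2\ge4\beta$. I expect this to be equivalent, for $\beta\le1$, to $\sqrt\beta\le(1-a)/(1+a)$, i.e., $\beta\le\beta^\ast$. This is clearly where the threshold comes from. In the real-root (overdamped) regime the sequence stays positive and monotone.

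For monotonicity in $\beta$, I would differentiate the recursion. Let $D_t:=\partial_\beta f_t$. Then $D_0=0$ and $f_1=1-a$ is independent of $\beta$, so $D_1=0$. Also $f_2=f_1-a(\omega(1,0)f_0+\omega(1,1)f_1)$ with $\omega(1,1)=1/(1+\beta)$ decreasing in $\beta$. Since $f_0>f_1$, more weight moves onto the larger $f_0$, which gives $D_2<0$; this matches the requirement $T\ge2$. For general $t$, I would show $D_t<0$ for $t\ge2$ by induction. The derivative of the update splits into two parts: (i) the change of the weights $\omega_\beta(t,\cdot)$, which shifts mass toward older and larger $f_{t'}$ and so contributes negatively because $f$ is decreasing; and (ii) the term $-a\sum\omega\,D_{t'}$, which is positive since $D_{t'}\le0$, and has to be dominated by the accumulated negative $D_t$ carried over from the previous step. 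A cleaner route is a stochastic-dominance argument. The weights $\omega_{\beta}(t,\cdot)$ for larger $\beta$ first-order dominate toward earlier indices, since the truncated geometric distribution is monotone in likelihood ratio in $\beta$. I would combine this with a comparison argument: if $f^{(2)}_{s}\le f^{(1)}_{s}$ for all $s\le t$, and the differences are suitably ordered (the increments $f_{s}-f_{s+1}$ larger under $\beta_2$), then the next step preserves the ordering.

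The main obstacle is exactly this preservation step. The feedback term works against us: a smaller past $f$ reduces the next decrement. To close the induction I would strengthen the hypothesis to a statement about the gaps $d_s:=f^{(1)}_s-f^{(2)}_s\ge0$ being nondecreasing in $s$. Then $d_{t+1}-d_t$ equals $a$ times [the weighted average of $f$ under $\omega_{\beta_2}$ minus that under $\omega_{\beta_1}$], minus $a\sum_s\omega_{\beta_2}(t,s)d_s$. The first part I would bound below using the no-overshoot geometry from the characteristic-root analysis, that is, $f_s$ decreasing at least geometrically. The second part is at most $a\,d_t$. I expect the condition $\beta\le\beta^\ast$ to be what makes this inequality hold, and I would verify it first in the unnormalized heavy-ball case, where closed forms in terms of the two real roots are available.
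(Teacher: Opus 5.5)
Your scalar reduction and the monotonicity of $J$ along the ray (derivative $-\gamma s\operatorname{Var}_{\pi_s}(v)$) are correct and match the paper. Your characteristic-root computation also correctly locates the threshold $\beta^\ast$.

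The genuine gap is in the monotonicity-in-$\beta$ step. Your strengthened induction hypothesis, that $d_s=f^{(1)}_s-f^{(2)}_s$ is nondecreasing in $s$, is false. In the safe range both residuals satisfy $0<f_s\le(1-\eta\gamma)^s$. Hence $d_s\to0$ while $d_2>0$, so the gap must eventually shrink, and no induction carrying this hypothesis can close for large $T$. Concretely, take $\eta\gamma=1/2$, $\beta_1=0$, $\beta_2=1/10\le\beta^\ast=1/9$; then $d_2\approx0.0227$, $d_3\approx0.0271$, $d_4\approx0.0216$. Heuristically, for small $\eta\gamma$ the feedback term $\eta\gamma\sum_s\omega_{\beta_2}(t,s)d_s$ overtakes the weight-shift term once $t\gtrsim1/(\eta\gamma)$. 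Comparing additive gaps discards the multiplicative structure of the dynamics. Your positivity argument is also only heuristic. The bias-corrected recursion has time-varying coefficients, and the real roots of a constant-coefficient polynomial do not by themselves control it.

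The missing idea is to pass to ratios. Using $M_t=\beta M_{t-1}+(1-\beta)f_t$, the full-history update collapses, for $t\ge1$, to $f_{t+1}=(1-\eta\gamma)f_t+c_t[(1+\eta\gamma)f_t-f_{t-1}]$ with $c_t=1-1/\sum_{k=0}^t\beta^k\in[0,\beta)$. Your unnormalized heavy-ball recursion is exactly the case $c_t\equiv\beta$.

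For $q_t=f_{t+1}/f_t$ this becomes the first-order map $q_t=1-\eta\gamma+c_t(1+\eta\gamma-1/q_{t-1})$. When $\sqrt\beta\le(1-\eta\gamma)/(1+\eta\gamma)$, the interval $[\sqrt\beta,1-\eta\gamma]$ is invariant for every $c_t\in[0,\beta]$. This gives positivity rigorously, and also $1+\eta\gamma-1/q_{t-1}<0$.

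Differentiating gives $\partial_\beta q_t=c_tq_{t-1}^{-2}\partial_\beta q_{t-1}+(\partial_\beta c_t)(1+\eta\gamma-1/q_{t-1})$, where $\partial_\beta c_t>0$ for $t\ge1$. Induction from $\partial_\beta q_0=0$ then yields $\partial_\beta q_t<0$ for all $t\ge1$. Hence $f_T=\prod_{t=0}^{T-1}q_t$ is strictly decreasing in $\beta$ for $T\ge2$. The ordered-gap statement that actually holds is multiplicative: $\log f^{(1)}_s-\log f^{(2)}_s$ is nondecreasing in $s$.
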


Thus, within the stated range, EMA accelerates deterministic
optimization at the same learning rate and update count through a
temporal reallocation of natural gradients. This raises two questions:
\emph{Why can a strictly positive exact-gradient benefit fail to yield
a reliable improvement in sampled training? How can this benefit be
reliably realized and exploited in sampled training?}

\section{Stochastic case: sampling fluctuations and expected gain}
\label{sec:stochastic}
We now study the stochastic setting, where natural gradients are estimated
from sampled responses, introducing noise into the updates. To analyze
training dynamics at small learning rates, we fix the problem, the batch
size $B\ge1$, and the optimization time $\tau=\eta T>0$, and let
$\eta=\tau/T\to0$.
We compare $J(\widehat\theta_T^\beta)$ and
$J(\widehat\theta_T^0)$ obtained with and without EMA. The two runs share
the same settings, including the initial parameter $\theta_0$, zero
initial momentum, learning rate $\eta$, batch size $B$, and update count
$T$, while using independent samples even when $\beta=0$.
Expectations and variances are over training
randomness. The objective $J$ is evaluated exactly.

By Proposition~\ref{prop:softmax-npg}, the sampled
natural-gradient direction is conditionally unbiased. Taking expectations
in the EMA updates then gives $\E[\widehat\theta_t^\beta]=\theta_t^\beta$.
To characterize the expected objective, we must also account for
sampling fluctuations.
Let $\Sigma_T^\beta$ denote a covariance coefficient characterizing the
accumulated sampling noise along the exact trajectory $\theta_t^\beta$.
It is a nonnegative weighted sum of noise covariance matrices evaluated
along this trajectory, where the noise comes from estimating the reward
vector $\mathbf r$ using a single sampled response. This coefficient is 
a deterministic positive semidefinite matrix, independent of the realized 
samples. Its precise definition is given in Appendix~\ref{app:response}. 
Using this coefficient, the following lemma expresses the expected objective 
difference as the exact difference plus a sampling correction.

\begin{lemma}[Expected objective difference]
\label{lem:locality-main}
Fix $B\ge1$ and $\tau>0$. For any $\beta\in[0,1)$, as $\eta=\tau/T\to0$, the
expected objective difference satisfies
\begin{equation}
 \begin{aligned}
 \E\left[J(\widehat\theta_T^\beta)-J(\widehat\theta_T^0)\right]
 =J(\theta_T^\beta)-J(\theta_T^0)
 +\frac\eta{2B}\operatorname{tr}\left[
 \nabla^2J(\theta_T^\beta)\Sigma_T^\beta
 -\nabla^2J(\theta_T^0)\Sigma_T^0\right]+O((\eta/B)^{3/2}).
 \end{aligned}
 \label{eq:uniform-mean}
\end{equation}
The remainder is uniform in $\beta\in[0,1)$, and
$\sup_{\beta\in[0,1)}\|\Sigma_T^\beta\|_{\mathrm{op}}$ is bounded
independently of $\eta$.
\end{lemma}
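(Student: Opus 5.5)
The plan is to treat each run separately: show that for every $\beta\in[0,1)$,
\[
\E[J(\widehat\theta_T^\beta)]=J(\theta_T^\beta)+\frac{\eta}{2B}\operatorname{tr}\big[\nabla^2J(\theta_T^\beta)\Sigma_T^\beta\big]+O((\eta/B)^{3/2})
\]
with a remainder uniform in $\beta$. Subtracting the $\beta=0$ instance then gives \eqref{eq:uniform-mean}. Since the two runs use independent samples, no cross terms arise.

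\textbf{Step 1: the deviation is linear in the noise to first order.} By Proposition~\ref{prop:softmax-npg}, the sampled direction at $\hat\theta$ equals the exact direction \eqref{eq:exact-npg-direction} plus the centered noise $\xi_t=P\big(\frac1B\sum_b \frac{r(x,y_b)}{\pi_{\hat\theta}(y_b\mid x)}\mathbf e_{y_b}-\mathbf r\big)$, where $P=\mathbf I-\mathbf1\mathbf1^\top/|\mathcal Y|$. The drift $P[\mathbf r-\gamma(\theta-\log\pi_{\mathrm{ref}})]$ is affine in $\theta$, so the deviation $\delta_t=\widehat\theta_t^\beta-\theta_t^\beta$ obeys an exactly linear recursion driven by $\xi_t$. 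It satisfies the same scalar-weight structure as in Lemma~\ref{lem:exact-dynamics}:
\[
\delta_{t+1}=\delta_t-\eta\gamma\sum_{t'\le t}\omega_\beta(t,t')\delta_{t'}+\eta\sum_{t'\le t}\omega_\beta(t,t')\xi_{t'}.
\]
Unrolling this gives $\delta_T=\eta\sum_{s<T}g_{T,s}(\beta)\xi_s$ with scalar kernels $g_{T,s}(\beta)$. These satisfy $|g|\le 1$ uniformly in $\beta$, because every update is a convex combination of past directions and the contraction factor is at most $1$ when $\eta\gamma<1$. The $\xi_s$ form a martingale difference sequence with conditional covariance $\frac1B C(\widehat\theta_s)$. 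Here $C(\theta)=P\big(\mathrm{diag}(\mathbf r/\pi_\theta)-\mathbf r\mathbf r^\top\big)P$ is the single-response covariance, which is smooth in $\theta$. I would define
\[
\Sigma_T^\beta=\eta\sum_s g_{T,s}(\beta)^2C(\theta_s^\beta),
\]
which should match Appendix~\ref{app:response}. Since $\eta T=\tau$, this gives $\|\Sigma_T^\beta\|_{\mathrm{op}}\le \tau\sup_s\|C(\theta_s^\beta)\|_{\mathrm{op}}$. The exact trajectory stays in a compact set uniformly in $\beta$, because $|f_t(\beta)|\le1$. That gives the claimed $\eta$-free bound.

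\textbf{Step 2: Taylor expansion.} Expand $J$ around $\theta_T^\beta$:
\[
J(\widehat\theta_T^\beta)=J(\theta_T^\beta)+\nabla J^\top\delta_T+\tfrac12\delta_T^\top\nabla^2J\,\delta_T+R.
\]
The linear term has zero mean because $\E\delta_T=0$, using the identity $\E[\widehat\theta_t^\beta]=\theta_t^\beta$. For the quadratic term, the martingale property gives
\[
\E[\delta_T\delta_T^\top]=\frac{\eta^2}{B}\sum_s g_{T,s}^2\,\E\,C(\widehat\theta_s).
\]
Replacing $\E\,C(\widehat\theta_s)$ by $C(\theta_s^\beta)$ costs at most $\mathrm{Lip}(C)\,\E\|\delta_s\|=O(\sqrt{\eta/B})$, which contributes $O((\eta/B)^{3/2})$. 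This uses the a priori bound $\E\|\delta_s\|^2=O(\eta/B)$, which follows from the same representation.

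\textbf{Main obstacle: controlling $R$ and the moments uniformly.} The obstacle is the cubic remainder $R$ together with the noise moments, uniformly in $\beta$. The noise involves $1/\pi_{\hat\theta}(y)$, so I need the iterates to keep $\pi_{\hat\theta}$ away from zero. For that I would prove $\E\|\delta_T\|^3=O((\eta/B)^{3/2})$ via Burkholder's inequality applied to the martingale sum. This needs conditional third moments of $\xi_s$ that are bounded on a localizing event where $\|\delta_s\|\le1$. On that event the policy stays in a compact set of full-support policies, since $\theta$ is bounded. The complement of the event has probability $O((\eta/B)^{k})$ for any $k$, by Doob's maximal inequality plus high moments. $J$ and its derivatives are bounded there, but $J$ itself is globally bounded anyway. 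The log-partition contribution to the KL term is smooth with bounded derivatives, so the remainder on the complement is negligible. The third-derivative bound of $J$ is uniform on the compact set, and all constants depend only on $\tau$, $\gamma$, $\pi_{\mathrm{ref}}$, and $\theta_0$, not on $\beta$. This yields the uniform $O((\eta/B)^{3/2})$.
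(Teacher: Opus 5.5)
Your architecture matches the paper's: an exactly linear deviation with a deterministic response kernel, $\Sigma_T^\beta$ built from squared kernels, a second-order Taylor expansion, localization near the exact path, and transfer through the boundedness of $J$. However, the step meant to give uniformity in $\beta$ is false. With bias correction, a single noise draw is not spread as a convex combination: the total weight $\sum_{k\ge s}\omega_\beta(k,s)$ it receives over later updates can exceed $1$. Already $g_{2,0}(\beta)=1+(\beta-\gamma\eta)/(1+\beta)>1$ whenever $\beta>\gamma\eta$. As $\beta\to1$, the weights $\omega_\beta(k,0)\to1/(k+1)$ add up to a harmonic number, so $\sup_\beta|g_{T,0}(\beta)|$ can be of order $\log(1/\eta)$.

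Hence neither $|g_{T,s}(\beta)|\le1$ nor your bound $\|\Sigma_T^\beta\|_{\mathrm{op}}\le\tau\sup_s\|C(\theta_s^\beta)\|_{\mathrm{op}}$ is justified. Similarly, $|f_t(\beta)|\le1$ is only established in the safe range; for general $\beta$ use $|f_t(\beta)|\le(1+\gamma\eta)^t\le e^{\gamma\tau}$. What is true, and what the paper proves in Lemma~\ref{lem:response}, is the envelope $|g_{T,s}(\beta)|\le e^{\gamma\tau}\bigl(1+\sum_{k=s+1}^T1/k\bigr)$. Its square-summability $\sum_{s<T}\bigl(1+\sum_{k=s+1}^T1/k\bigr)^2\le5T$ gives $\eta\sum_s g_{T,s}(\beta)^2\le5e^{2\gamma\tau}\tau$ uniformly in $\beta$. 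This $\ell^2$ bound, not a pointwise bound, must drive the bound on $\Sigma_T^\beta$, the second and third moments, and the tail estimate.

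The second gap is the localization. In Step~2 you compute $\E[\delta_T\delta_T^\top]$, $\E\|\delta_s\|^2$ and, via Burkholder, $\E\|\delta_T\|^3$ for the original process. You also replace $\E\,C(\widehat\theta_s)$ by $C(\theta_s^\beta)$ using a Lipschitz constant. For the unstopped process these moments involve $\E[1/\pi_{\widehat\theta_s}(y\mid x)]$, which you have no uniform control over: once the iterate leaves a neighborhood of the exact path, the inverse-probability weights can become arbitrarily large. Moreover, $C$ is not globally Lipschitz. Intersecting with a good event afterwards does not fix this. The linear term then loses its zero mean, and both $\E[\nabla J^\top\delta_T\mathbf 1_{\mathrm{bad}}]$ and the Taylor remainder on the bad event still need moments of $\delta_T$ there, not just boundedness of $J$.

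The fix is to build the stopped (auxiliary) process first. Multiply each noise term by the predictable indicator that the path has so far stayed within distance $1$ of the exact path. The resulting noise is bounded by a constant, is still a martingale difference, and gives an exactly mean-zero linear term, so the whole expansion can be carried out for the stopped iterate $\widetilde\theta_T^\beta$. Then bound the exit probability by a fixed-$t$ tail estimate plus a union bound over $t\le T$ and coordinates. Doob's inequality does not apply directly here, because the kernel depends on $t$ and so $t\mapsto\delta_t$ is not a martingale. Finally, transfer to $\widehat\theta_T^\beta$ only through $|\E J(\widehat\theta_T^\beta)-\E J(\widetilde\theta_T^\beta)|\le2\sup_\theta|J(\theta)|\,\Prob(\text{exit by }T)$. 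This is exactly the paper's construction.
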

Establishing the expansion in Lemma~\ref{lem:locality-main} requires
controlling sampling fluctuations throughout training. This is
challenging because the sampled natural-gradient direction weights
rewards by inverse sampling probabilities: a sampled response
with positive reward and very small probability can produce a large
noise term.
To control this noise, we construct an auxiliary parameter update
process whose distance from the exact iterates
is bounded. Since the exact trajectory is bounded,
the sampling probabilities in this auxiliary process stay bounded away
from zero, and its sampling noise is bounded. 
During training, the probability that the auxiliary process and the original sampled 
process differ is sufficiently small. Therefore, we can characterize the expected objective difference using the auxiliary process.
A proof sketch follows and Appendix~\ref{app:localization} gives the full proof.

\vspace{5pt}
\noindent\textbf{Proof sketch.}
Let $\varepsilon_t^\beta$ denote the estimation noise in the sampled
natural gradient at $\widehat\theta_t^\beta$. We construct an
auxiliary process driven by the same noise sequence, initialized at
$\widetilde\theta_0^\beta=\theta_0$ and defined by
\begin{align*}
 \widetilde\varepsilon_t^\beta
=\mathbf1\Big\{\max_{t'\le t}
       \|\widetilde\theta_{t'}^\beta-\theta_{t'}^\beta\|_\infty\le1\Big\}
       \varepsilon_t^\beta,\quad
 \widetilde\theta_{t+1}^\beta=\widetilde\theta_t^\beta
   +\eta\sum_{t'=0}^t\omega_\beta(t,t')
       \left[\mathcal F^\dagger(\widetilde\theta_{t'}^\beta)
       \nabla J(\widetilde\theta_{t'}^\beta)
       +\widetilde\varepsilon_{t'}^\beta\right].
\end{align*}
The two processes coincide through the first exit from the unit
neighborhood of the exact path. After this exit, the exact direction
and EMA memory continue to evolve, while no further sampling noise
enters the auxiliary process.
The construction gives
$\max_{t\le T}\|\widetilde\theta_t^\beta-\theta_t^\beta\|_\infty=O(1)$
for every sample path. Since the exact trajectory is bounded, this implies
$\inf_{t\le T,y\in\mathcal Y}\pi_{\widetilde\theta_t^\beta}(y\mid x)=\Omega(1)$
and $\max_{t<T}\|\widetilde\varepsilon_t^\beta\|_\infty=O(1)$.
These bounds allow us to estimate the covariance and third moment of the auxiliary parameter fluctuations. A second-order Taylor expansion, together with a comparison of the noise covariances along the auxiliary and exact paths, then yields
\begin{equation*}
 \E[J(\widetilde\theta_T^\beta)]
 =J(\theta_T^\beta)+\frac\eta{2B}\operatorname{tr}\left[
 \nabla^2J(\theta_T^\beta)\Sigma_T^\beta\right]
 +O((\eta/B)^{3/2}).
\end{equation*}
For each fixed update and coordinate, the weighted noise contributions
have zero conditional mean, so their partial sums form a martingale.
Using a concentration bound and a union bound, we show that the probability of path separation
is sufficiently small, uniformly in $\beta$. Since $J$ is bounded, we obtain
\begin{align*}
 \left|\E[J(\widehat\theta_T^\beta)]
       -\E[J(\widetilde\theta_T^\beta)]\right|
 &\le2\sup_{\theta}|J(\theta)|\,
 \Prob(\exists\,t\le T:\widehat\theta_t^\beta\ne\widetilde\theta_t^\beta)=o((\eta/B)^{3/2}).
\end{align*}
Combining these estimates and subtracting the expansions for $\beta$
and $0$ proves Lemma~\ref{lem:locality-main}. 

Applying the same analysis to $J^2$ also allows us to quantify the
fluctuations in the objective difference through its variance.
\begin{proposition}[Variance of the objective difference]
\label{prop:variance-main}
Under the conditions of Lemma~\ref{lem:locality-main}, the independent
runs satisfy the following expansion uniformly in $\beta\in[0,1)$:
\begin{equation}
 \begin{aligned}
 \operatorname{Var}\left[J(\widehat\theta_T^\beta)-J(\widehat\theta_T^0)\right]
 =\frac\eta B\left[
 \nabla J(\theta_T^\beta)^\top\Sigma_T^\beta\nabla J(\theta_T^\beta)
 +\nabla J(\theta_T^0)^\top\Sigma_T^0\nabla J(\theta_T^0)\right]
 +O((\eta/B)^{3/2}).
 \end{aligned}
 \label{eq:variance-main}
\end{equation}
Under the nondegeneracy condition,
specifically,
$\liminf_{\eta\to0}\nabla J(\theta_T^0)^\top\Sigma_T^0\nabla J(\theta_T^0)>0$,
uniformly over $\beta\in[0,1)$, we have
\begin{equation}
 \operatorname{Var}\big[J(\widehat\theta_T^\beta)-J(\widehat\theta_T^0)\big]
 =\Theta(\eta/B).
 \label{eq:variance-order-main}
\end{equation}
\end{proposition}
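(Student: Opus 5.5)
Since the two runs use independent samples, the variance of the difference splits as
\[
\operatorname{Var}[J(\widehat\theta_T^\beta)]+\operatorname{Var}[J(\widehat\theta_T^0)],
\]
so it suffices to prove, uniformly in $\beta\in[0,1)$,
\[
\operatorname{Var}[J(\widehat\theta_T^\beta)]=\frac\eta B\,\nabla J(\theta_T^\beta)^\top\Sigma_T^\beta\nabla J(\theta_T^\beta)+O((\eta/B)^{3/2}).
\]
We write $\operatorname{Var}[J]=\E[J^2]-(\E[J])^2$ and reuse the localization machinery from the proof of Lemma~\ref{lem:locality-main}. First we replace $\widehat\theta_T^\beta$ by the auxiliary process $\widetilde\theta_T^\beta$. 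Both $J$ and $J^2$ are bounded, and the probability that the two paths separate is $o((\eta/B)^{3/2})$ uniformly in $\beta$. Hence both $\E[J]$ and $\E[J^2]$ change by only $o((\eta/B)^{3/2})$, and so does the variance.

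For the auxiliary process we apply the same second-order expansion used for $J$, now with $g=J^2$. Its Hessian is
\[
\nabla^2 g=2J\nabla^2J+2\nabla J\nabla J^\top .
\]
This gives
\[
\E[J(\widetilde\theta_T^\beta)^2]=J(\theta_T^\beta)^2+\frac\eta{2B}\operatorname{tr}\big[(2J\nabla^2J+2\nabla J\nabla J^\top)(\theta_T^\beta)\Sigma_T^\beta\big]+O((\eta/B)^{3/2}).
\]
The remainder is controlled by the third moment of $\widetilde\theta_T^\beta-\theta_T^\beta$, which is $O((\eta/B)^{3/2})$, together with uniform bounds on the third derivatives of $J^2$ on the bounded neighborhood of the exact path. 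The first-order term drops out because $\E[\widetilde\theta_T^\beta]-\theta_T^\beta$ is of higher order. I would reuse exactly the estimate from the lemma that yields this, including its comparison of noise covariances along the auxiliary and exact paths.

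Squaring the expansion of $\E[J]$ from the lemma gives
\[
J^2+\frac\eta B J\operatorname{tr}[\nabla^2J\,\Sigma_T^\beta]+O((\eta/B)^{3/2}),
\]
where the square of the correction term, of order $(\eta/B)^2$, is absorbed. Subtracting, the $J\nabla^2J$ terms cancel and we are left with
\[
\frac\eta B\operatorname{tr}[\nabla J\nabla J^\top\Sigma_T^\beta]=\frac\eta B\,\nabla J^\top\Sigma_T^\beta\nabla J .
\]
Uniformity in $\beta$ follows because all constants come from $\sup_\beta\|\Sigma_T^\beta\|_{\mathrm{op}}$, from bounds on derivatives of $J$ over a fixed compact neighborhood of the exact trajectories, and from the uniform separation bound.

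For the $\Theta(\eta/B)$ claim, both quadratic forms are nonnegative because $\Sigma_T^\beta\succeq0$. The leading term is therefore at least $\frac\eta B\nabla J(\theta_T^0)^\top\Sigma_T^0\nabla J(\theta_T^0)$, which is bounded below by $c\,\eta/B$ for small $\eta$ under the nondegeneracy condition. This lower bound dominates the $O((\eta/B)^{3/2})$ remainder. The upper bound follows from $\sup_\beta\|\Sigma_T^\beta\|_{\mathrm{op}}<\infty$ and the boundedness of $\nabla J$.

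The main obstacle is checking that the expansion in the lemma's proof really applies to $J^2$ with an $O((\eta/B)^{3/2})$ remainder uniformly in $\beta$. In particular, the bias $\E[\widetilde\theta_T^\beta]-\theta_T^\beta$ must enter only at order $\eta/B$, through a term that is linear in $\nabla g$ at $\theta_T^\beta$. That bias term would then add $2J\nabla J^\top(\text{bias})$ to $\E[J^2]$ and the same amount to $(\E J)^2$, so it also cancels. I would verify this cancellation explicitly rather than assume the bias vanishes.
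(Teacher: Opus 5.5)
Your proposal is correct and matches the paper's proof. Like you, the paper applies the localization expansion from Lemma~\ref{lem:locality-main} to $J^2$ using $\nabla^2(J^2)=2J\nabla^2J+2\nabla J\nabla J^\top$, subtracts the square of the mean expansion so that the $J\nabla^2J$ terms cancel, adds the two variances by independence, and gets $\Theta(\eta/B)$ from the nonnegativity of the quadratic forms together with the nondegeneracy condition. Your bias concern does no harm but is unnecessary: the auxiliary drift $\gamma(\theta^\star-\widetilde\theta_t^\beta)$ is affine and the stopped noise has conditional mean zero, so $\E[\widetilde\theta_T^\beta]=\theta_T^\beta$ holds exactly, as the paper notes.
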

The proof is in Appendix~\ref{app:variance}. 
Proposition~\ref{prop:variance-main} shows that the variance of the
objective difference remains of order $\eta/B$ regardless of
the EMA coefficient. Lemma~\ref{lem:locality-main} shows that the
expected difference is characterized by the exact objective difference. 
Thus, when the exact gain is small
relative to the standard-deviation scale $\sqrt{\eta/B}$, training 
fluctuations make a consistent benefit from EMA difficult to observe.

The critical metric is $\rho=\eta/(1-\beta).$
Intuitively, EMA averages natural gradients over an effective
memory window of $1/(1-\beta)$ steps. Since each exact update moves
the parameters by $O(\eta)$, the averaged gradients span a
parameter-displacement scale of $O(\eta/(1-\beta))$, which controls
their difference from the current gradient. When this ratio is small,
gradients vary little within the window, so EMA behaves similarly to
the update without momentum and the difference vanishes. The next two subsections quantify the cases
of a vanishing or fixed positive ratio.

\subsection{A fixed EMA coefficient gives a vanishing expected gain}
\label{sec:fixed-beta}
\begin{theorem}[Fixed EMA coefficient]
\label{cor:fixed-beta}
\looseness=-1 Fix $\beta\in(0,1)$, $B\ge1$, $\tau>0$, and suppose that the
nondegeneracy condition in Proposition~\ref{prop:variance-main} holds.
As $\eta=\tau/T\to0$, the expected objective gain satisfies
\begin{equation}
 \E[J(\widehat\theta_T^\beta)-J(\widehat\theta_T^0)]
 =C(\tau,\gamma)\frac{\eta\beta}{1-\beta}+O(\eta^{3/2}),
 \label{eq:fixed-beta-mean}
\end{equation}
where $C(\tau,\gamma)$ is a positive constant depending only on
$\tau$, $\gamma$, and the fixed problem. It is independent of $\beta$.
The probability that EMA yields a higher objective value satisfies
\begin{equation}
 \lim_{\eta\to0}\Prob\big(J(\widehat\theta_T^\beta)>J(\widehat\theta_T^0)\big)
 =1/2.
 \label{eq:fixed-beta-comparison}
\end{equation}
\end{theorem}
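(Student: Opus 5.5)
We will build on Lemma~\ref{lem:locality-main}. Its sampling correction is $\frac{\eta}{2B}\operatorname{tr}[\nabla^2J(\theta_T^\beta)\Sigma_T^\beta-\nabla^2J(\theta_T^0)\Sigma_T^0]$. We aim to show that this correction is $O(\eta^{3/2})$, and then to compute the exact gain $J(\theta_T^\beta)-J(\theta_T^0)$ to first order in $\eta$.

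\textbf{Exact gain.} By Lemma~\ref{lem:exact-dynamics}, $\theta_T^\beta-\theta^\star=f_T(\beta)(\theta_0-\theta^\star)$. The objective along this ray is the smooth scalar function $g(s):=J(\theta^\star+s(\theta_0-\theta^\star))$, which is strictly decreasing in $s$ near $s=e^{-\gamma\tau}$ because the initial policy is suboptimal. This gives
\[
J(\theta_T^\beta)-J(\theta_T^0)=g'(f_T(0))\,(f_T(\beta)-f_T(0))+O((f_T(\beta)-f_T(0))^2).
\]
The main computation is the expansion of $f_T(\beta)$ for fixed $\beta$. The recursion is $f_{t+1}=f_t-\eta\gamma\sum_{t'}\omega_\beta(t,t')f_{t'}$. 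Since the weights decay geometrically, we have
\[
\sum_{t'}\omega_\beta(t,t')f_{t'}=f_t+\sum_{t'}\omega_\beta(t,t')(f_{t'}-f_t).
\]
Each difference $f_{t'}-f_t$ is approximately $(t-t')\eta\gamma f_t$, and the mean lag is $\sum_k k(1-\beta)\beta^k\approx \beta/(1-\beta)$, up to transient terms from bias correction that decay like $\beta^t$. The recursion therefore becomes
\[
f_{t+1}\approx f_t\Bigl(1-\eta\gamma-\eta^2\gamma^2\tfrac{\beta}{1-\beta}\Bigr),
\]
i.e.\ $f_t$ is (to this order) geometric with ratio $1-\eta\gamma-\eta^2\gamma^2\beta/(1-\beta)$. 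Comparing this with $(1-\eta\gamma)^T$ over $T=\tau/\eta$ steps gives
\[
f_T(\beta)-f_T(0)=-\tau\gamma^2\,e^{-\gamma\tau}\,\frac{\eta\beta}{1-\beta}+O(\eta^2).
\]
The $\beta^t$ transients and the second-order lag terms contribute only $O(\eta^2)$ for fixed $\beta$. To make this rigorous, we will write $f_t=(1-\eta\gamma)^t(1+\eta h_t)$, derive a linear recursion for $h_t$, and bound its error by a discrete Grönwall argument. This yields $C(\tau,\gamma)=-g'(e^{-\gamma\tau})\,\tau\gamma^2 e^{-\gamma\tau}>0$, which is independent of $\beta$.

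\textbf{Sampling correction.} The second step is to show that $\operatorname{tr}[\nabla^2J(\theta_T^\beta)\Sigma_T^\beta]-\operatorname{tr}[\nabla^2J(\theta_T^0)\Sigma_T^0]=O(\eta^{1/2})$, and in fact $O(\eta)$ is expected. First, $\|\theta_T^\beta-\theta_T^0\|=O(\eta)$ by the first step. Second, $\Sigma_T^\beta$ is a weighted sum of noise covariances along the trajectory. Its weights are obtained by propagating each noise injection through the EMA memory, and for fixed $\beta$ these are perturbed only over windows of length $O(1/(1-\beta))$ steps, i.e.\ time $O(\eta)$. Hence $\Sigma_T^\beta-\Sigma_T^0=O(\eta)$, using smoothness of the covariance along the bounded exact path. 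Multiplied by $\eta/(2B)$, this is $O(\eta^2)$. Combined with the $O((\eta/B)^{3/2})$ remainder of Lemma~\ref{lem:locality-main}, this gives \eqref{eq:fixed-beta-mean}.

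\textbf{Probability limit.} Set $D=J(\widehat\theta_T^\beta)-J(\widehat\theta_T^0)$. By Proposition~\ref{prop:variance-main} and the nondegeneracy condition, $\operatorname{Var}[D]=\Theta(\eta)$, while $\E D=O(\eta)=o(\sqrt\eta)$. We will show that $D/\sqrt{\eta}$ converges in distribution to a centered Gaussian with positive variance. To do this, linearize $J$ around $\theta_T^\beta$ and $\theta_T^0$ on the auxiliary processes of the localization argument. The fluctuation $\widetilde\theta_T-\theta_T$ is, to leading order, a martingale sum of $T$ bounded independent-increment terms of size $O(\eta)$. The martingale CLT, with Lindeberg's condition holding because the increments are bounded, then gives Gaussian limits for each run. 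The two runs are independent, so their difference is also asymptotically Gaussian. The path-separation event has vanishing probability, and the quadratic Taylor remainder is $O_P(\eta)$. Hence $\Prob(D>0)=\Prob(D/\sqrt\eta>o(1))\to1/2$.

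The main obstacle is the uniform expansion of $f_T(\beta)$ with an explicit constant, in particular controlling the bias-correction transient and accumulating errors over $T=\tau/\eta$ steps. We will handle both through the Grönwall bound on $h_t$.
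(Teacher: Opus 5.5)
Your proposal is correct and takes essentially the same route as the paper. You use the same $O(\eta^2)$-accurate geometric approximation $f_t(\beta)=(1-\gamma\eta-\gamma^2\eta^2\beta/(1-\beta))^t+O(\eta^2)$, which yields the same constant $C(\tau,\gamma)$. You use the covariance comparison $\|\Sigma_T^\beta-\Sigma_T^0\|_{\mathrm{op}}=O(\eta)$ to push the sampling correction to $O(\eta^2)$, and a martingale CLT on the auxiliary processes to get the limit $1/2$.

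The paper makes rigorous two points that you only sketch. The first is the bound $\sum_t|\Psi_\beta(T,t)-\Psi_0(T,t)|=O(\eta)$ behind the covariance comparison. The second is the conditional-variance condition of the martingale CLT: the increments are martingale differences with state-dependent variances, not independent terms, so this step needs the covariance-replacement estimate from the localization proof. Both are filled by tools you already invoke.
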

The proof is in Appendix~\ref{app:fixed-beta}.
Theorem~\ref{cor:fixed-beta} shows that fixed $\beta$ gives a positive
expected gain for sufficiently small $\eta$, but the gain is only of
order $\eta$, which matches the variance order in
Proposition~\ref{prop:variance-main} with fixed batch size. Sampling fluctuations therefore
overwhelm the gain, making the two algorithms difficult to distinguish
in individual runs.
This offers a possible explanation for the limited benefits of momentum
reported in prior RLVR experiments \citep{mukherjee2026adam}.

\subsection{Scaling the EMA coefficient with the learning rate}
\label{sec:fixed-horizon}
Beyond Theorem~\ref{cor:fixed-beta}, we further show that $\eta/(1-\beta)\to0$
forces the expected gain to vanish, even when $\beta$ depends on $\eta$.
The full proof is given in Appendix~\ref{app:vanishing}.
To satisfy the sufficient condition for avoiding overshoot in
Theorem~\ref{thm:exact-benefit}, this ratio must remain bounded. 
Thus, a nonvanishing expected gain requires $\eta/(1-\beta)=\Theta(1)$.
For simplicity, we set $\beta$ so that $\eta/(1-\beta)=\rho$,
where $\rho>\eta$ is a positive constant. In Appendix~\ref{app:fixed-horizon},
we prove that $\rho\in(0,1/(4\gamma)]$ ensures the sufficient condition
in Theorem~\ref{thm:exact-benefit}.
The following theorem compares the objective values
of two independent runs with and without EMA in this setting.

\begin{theorem}[EMA coefficient scaling with the learning rate]
\label{thm:fixed-horizon}
Fix $B\ge1$, $\tau>0$, and $0<\rho\le1/(4\gamma)$, and suppose that the
nondegeneracy condition in Proposition~\ref{prop:variance-main} holds.
Set $\beta=1-\eta/\rho$. As $\eta=\tau/T\to0$, the expected objective
gain satisfies
\begin{equation}
 \E[J(\widehat\theta_T^\beta)-J(\widehat\theta_T^0)]
 =C(\tau,\gamma,\rho)\beta\rho+O(\eta),
 \label{eq:constant-horizon-moments}
\end{equation}
where $C(\tau,\gamma,\rho)$ is a positive constant depending only on
$\tau$, $\gamma$, $\rho$, and the fixed problem. It is independent of
$\eta$ and $\beta$. The probability that EMA yields a higher objective value satisfies
\begin{equation}
 \Prob\big(J(\widehat\theta_T^\beta)>J(\widehat\theta_T^0)\big)
 =1-O(\eta/B).
 \label{eq:constant-horizon-probability}
\end{equation}
\end{theorem}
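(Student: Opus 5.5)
The plan has three steps: reduce to the exact trajectories via Lemma~\ref{lem:locality-main}, analyze the scalar recursion $f_t(\beta)$ in the regime $\beta=1-\eta/\rho$, and convert the resulting order-one mean gain into a probability bound using Proposition~\ref{prop:variance-main}.

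\textbf{Reduction to the exact trajectories.} By Lemma~\ref{lem:locality-main}, uniformly in $\beta$,
\[
\E[J(\widehat\theta_T^\beta)-J(\widehat\theta_T^0)]=J(\theta_T^\beta)-J(\theta_T^0)+O(\eta/B).
\]
The trace term is $O(\eta/B)$ because $\sup_\beta\|\Sigma_T^\beta\|_{\mathrm{op}}$ is bounded and the Hessian is bounded along the bounded exact path. It therefore suffices to show that
\[
J(\theta_T^\beta)-J(\theta_T^0)=C(\tau,\gamma,\rho)\beta\rho+O(\eta).
\]

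\textbf{Exact trajectories via $f_T$.} By Lemma~\ref{lem:exact-dynamics}, $\theta_T^\beta-\theta^\star=f_T(\beta)(\theta_0-\theta^\star)$. Hence $J(\theta_T^\beta)=g(f_T(\beta))$ with $g(s):=J(\theta^\star+s(\theta_0-\theta^\star))$. The function $g$ is smooth and strictly decreasing on $s\in[0,1]$: it is a KL-regularized concave objective along the ray to its maximizer, and it has a nondegenerate Hessian in the direction transverse to $\mathbf 1$.

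Next I compute the limits of $f_T$. For vanilla NPG, $f_T(0)=(1-\eta\gamma)^{\tau/\eta}=e^{-\gamma\tau}+O(\eta)$. For $\beta=1-\eta/\rho$, I rewrite the recursion as a coupled system in $(f_t,m_t)$ with bias correction:
\[
m_{t+1}=\beta m_t+(1-\beta)f_t,\qquad f_{t+1}=f_t-\eta\gamma\,\frac{m_{t+1}}{1-\beta^{t+1}}.
\]
With $s=\eta t$, this is an Euler discretization of the ODE
\[
\rho\,\dot m=f-m,\qquad \dot f=-\gamma m/(1-e^{-s/\rho}).
\]
Here $1-\beta^{t+1}=1-e^{-s/\rho}+O(\eta)$. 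The singular factor near $s=0$ is harmless because $m/(1-e^{-s/\rho})$ equals the weighted average of past $f$'s, which stays in $[\min f,1]$. A standard discrete Gronwall argument then gives $f_T(\beta)=F_\rho(\tau)+O(\eta)$, where $F_\rho$ solves the limiting delay-averaged ODE
\[
\dot F(s)=-\gamma\,\frac{\int_0^s \rho^{-1}e^{-(s-u)/\rho}F(u)\,du}{1-e^{-s/\rho}}.
\]

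\textbf{Sign and form of the gain.} Theorem~\ref{thm:exact-benefit} holds at every finite $\eta$. Taking limits, and using $\rho\le1/(4\gamma)$ (which places $\beta$ in the safe range, as shown in Appendix~\ref{app:fixed-horizon}), gives $F_\rho(\tau)\le e^{-\gamma\tau}$ and $F_\rho>0$. Strictness does not survive the limit automatically. I would establish it directly on the ODE: the averaged value $\bar F(s)$ satisfies $\bar F(s)>F(s)$ for $s>0$ because $F$ is strictly decreasing. Comparison with $\dot G=-\gamma G$ then gives $F_\rho(\tau)<e^{-\gamma\tau}$.

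Define
\[
C(\tau,\gamma,\rho):=\frac{g(F_\rho(\tau))-g(e^{-\gamma\tau})}{\rho}>0,
\]
which is independent of $\eta$. Since $\beta=1-\eta/\rho$, we have $\beta\rho=\rho-\eta$, so $C\rho=C\beta\rho+O(\eta)$. This yields~\eqref{eq:constant-horizon-moments}.

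\textbf{Probability statement.} Set $X=J(\widehat\theta_T^\beta)-J(\widehat\theta_T^0)$. The previous steps give $\E X\ge c>0$ for small $\eta$, and Proposition~\ref{prop:variance-main} gives $\operatorname{Var}X=O(\eta/B)$. Chebyshev's inequality then yields
\[
\Prob(X\le0)\le \frac{\operatorname{Var}X}{(\E X)^2}=O(\eta/B).
\]

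\textbf{Main obstacle.} The hardest step is the uniform $O(\eta)$ convergence of the discrete recursion to the ODE, together with strictness of the limiting gain. The difficulty is the bias-correction factor $1/(1-\beta^{t+1})$, which is of order $\rho/(\eta t)$ for small $t$. I would handle it by working with the normalized average $\bar f_t$ rather than $m_t$. The quantity $\bar f_t$ satisfies a well-conditioned recursion
\[
\bar f_{t+1}=\bar f_t+\omega_\beta(t,t)\,(f_{t+1}-\bar f_t)
\]
with weights bounded by $1$, which avoids dividing by a small quantity.
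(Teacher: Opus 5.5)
Your proposal is correct and follows essentially the paper's route. You reduce to the exact paths via Lemma~\ref{lem:locality-main} and pass to the same continuous residual equation (the paper's $f(s;\rho)$); the $O(\eta)$ discretization bound is Lemma~\ref{lem:continuous-path}, which handles the bias-correction singularity by directly comparing discrete and continuous weighted averages of a Lipschitz function rather than through your $(f,\bar f)$ recursion (where, with $\bar f_t=\sum_{t'\le t}\omega_\beta(t,t')f_{t'}$, the weight should be $\omega_\beta(t+1,t+1)$). You then prove $F_\rho(\tau)<e^{-\gamma\tau}$ by the same ``average exceeds current value'' comparison, define $C$ identically, and finish with $\beta\rho=\rho-\eta$ and Chebyshev. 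Two small corrections: justify monotonicity of $g$ by $g'(a)=-\gamma a\operatorname{Var}_{y\sim\pi_{\theta^\star+a(\theta_0-\theta^\star)}(\cdot\mid x)}[\theta_0(x,y)-\theta^\star(x,y)]<0$ for $a>0$ rather than by concavity ($J$ is not concave in $\theta$ under softmax), and note that passing to the limit gives only $F_\rho\ge0$, which suffices here, although the paper obtains strict positivity $F_\rho(s)\ge e^{-s/(2\rho)}$ from $f_t(\beta)\ge\beta^{t/2}$.
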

\vspace{-5pt}
The proof is in Appendix~\ref{app:fixed-horizon}.
When $\rho=\eta/(1-\beta)$ is held at a fixed positive value in the safe
regime, Theorem~\ref{thm:fixed-horizon} gives a nonvanishing expected gain,
while Proposition~\ref{prop:variance-main} shows that the standard
deviation of the objective difference is of order $\sqrt{\eta/B}$.
As $\eta\to0$, the expected gain therefore dominates sampling fluctuations,
and EMA's advantage becomes consistently observable in the stochastic case.
These findings suggest that choosing $\beta$ substantially above the
conventional fixed value can yield greater benefits in practice.

\begin{figure}[!t]
 \centering
 \subcapcentertrue
 \subfigure[Gain and Noise.\label{fig:bandit-uniform-gain-noise}]{\includegraphics[width=0.24\textwidth]{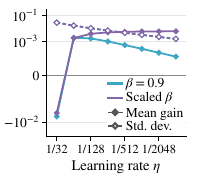}}\hfill
 \subfigure[Win Rate.\label{fig:bandit-uniform-win}]{\includegraphics[width=0.24\textwidth]{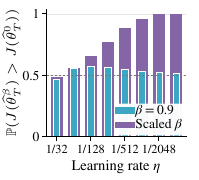}}\hfill
 \subfigure[Gain and Noise.\label{fig:bandit-random-gain-noise}]{\includegraphics[width=0.24\textwidth]{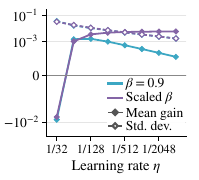}}\hfill
 \subfigure[Win Rate.\label{fig:bandit-random-win}]{\includegraphics[width=0.24\textwidth]{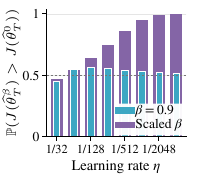}}
 \caption{Numerical bandit results with $3$ reward-one and $7$ reward-zero arms
 under uniform initialization (\ref{fig:bandit-uniform-gain-noise}, \ref{fig:bandit-uniform-win}) and random policy initialization (\ref{fig:bandit-random-gain-noise}, \ref{fig:bandit-random-win}).}
 \label{fig:bandit-results}
 \vspace{-10pt}
\end{figure}
\subsection{Reducing variability across runs with EMA}
\label{sec:smaller-lr}
We next investigate whether EMA can help reduce variability across runs while achieving comparable expected performance under the same training budget. A smaller learning rate can reduce sampling fluctuations, but it may also slow optimization over a given number of updates. The deterministic acceleration established in Section~\ref{sec:exact} suggests that increasing the EMA coefficient may compensate for this slower progress.
We compare two independent EMA runs with settings $(\eta_1,\beta_1)$ and $(\eta_2,\beta_2)$, using the same initialization, batch size $B$, and update count $T$. We write $\theta_T^{(i)}$ and $\widehat\theta_T^{(i)}$ for the parameters obtained with exact and sampled gradients, respectively, under setting $i=1,2$.
\begin{theorem}[Comparable expected performance with lower variance]
\label{thm:smaller-lr}
Fix $B>1$ and $\tau>0$.
For any $(\eta_1,\beta_1)$ with $\eta_1=\tau/T$ and $T\ge2$,
and any $\rho_2$ satisfying
$0<\rho_1=\eta_1/(1-\beta_1)<\rho_2\le1/(4\gamma)$,
there exist $0<\eta_2<\eta_1$ and $\beta_1<\beta_2<1$ such that
$\eta_2/(1-\beta_2)=\rho_2$ and $\theta_T^{(2)}=\theta_T^{(1)}$.
As $\eta_1\to0$ with $\rho_1,\rho_2$ fixed, the expected objective
difference and the limiting variance ratio satisfy
\begin{equation}
\E[J(\widehat\theta_T^{(2)})-J(\widehat\theta_T^{(1)})]
=O(\eta_1/B),
\qquad
\lim_{\tau\to\infty}\lim_{\eta_1\to0}
\frac{\operatorname{Var}[J(\widehat\theta_T^{(2)})]}
{\operatorname{Var}[J(\widehat\theta_T^{(1)})]}
=\frac{1+\sqrt{1-4\gamma\rho_2}}{1+\sqrt{1-4\gamma\rho_1}}<1.
\label{eq:smaller-lr-guarantee}
\end{equation}
\end{theorem}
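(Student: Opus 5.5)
The plan is to reduce everything to the scalar dynamics of Lemma~\ref{lem:exact-dynamics}. Write $f_T(\eta,\beta)$ to make the dependence on $\eta$ explicit. Then $\theta_T^{(i)}=\theta^\star+f_T(\eta_i,\beta_i)(\theta_0-\theta^\star)$, so the requirement $\theta_T^{(2)}=\theta_T^{(1)}$ is the single scalar equation $f_T(\eta_2,1-\eta_2/\rho_2)=f_T(\eta_1,\beta_1)$. I would first record a monotonicity fact: along the ray $\theta^\star+s(\theta_0-\theta^\star)$ with $s\ge0$, we have $J=J(\theta^\star)-\gamma\KL(\pi_\theta\Vert\pi^\star)$, and this KL is strictly increasing in $s$ (standard exponential-family convexity). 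Hence, as long as $f>0$, a larger objective is the same thing as a smaller $f_T$. The proof of Theorem~\ref{thm:exact-benefit} already shows $f_t>0$ in the safe range, and the appendix shows that $\rho\le1/(4\gamma)$ places $\beta$ in that range.

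\textbf{Existence of $(\eta_2,\beta_2)$.} Define $g(\eta)=f_T(\eta,1-\eta/\rho_2)$ for $\eta\in(0,\eta_1]$. This is continuous because $f_T$ is a finite composition of rational functions. As $\eta\to0$ we have $g(\eta)\to1$, since the total displacement is at most $\eta T$ times bounded quantities and $T$ is fixed here. At $\eta=\eta_1$, set $\beta'=1-\eta_1/\rho_2$; then $\beta'>\beta_1$ because $\rho_2>\rho_1$, and both coefficients lie in the safe range. Theorem~\ref{thm:exact-benefit} (via the monotonicity above) then gives $g(\eta_1)<f_T(\eta_1,\beta_1)<1$. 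The intermediate value theorem produces $\eta_2\in(0,\eta_1)$ with $g(\eta_2)=f_T(\eta_1,\beta_1)$. Finally, $\beta_2=1-\eta_2/\rho_2>1-\eta_1/\rho_2>\beta_1$.

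\textbf{Expectation.} I would apply the single-run expansion from the proof of Lemma~\ref{lem:locality-main}, $\E[J(\widehat\theta_T)]=J(\theta_T)+\frac{\eta}{2B}\operatorname{tr}[\nabla^2J(\theta_T)\Sigma_T]+O((\eta/B)^{3/2})$, to each run; its remainder is uniform in $\beta$. The exact terms cancel because $\theta_T^{(2)}=\theta_T^{(1)}$. Since $\eta_2<\eta_1$ and $\|\Sigma_T\|_{\mathrm{op}}$ is uniformly bounded, the leftover is $O(\eta_1/B)$. One point needs care here: the lemma is stated for $\eta=\tau/T$, while run $2$ has effective time $\eta_2T$. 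I would handle this by noting that $\eta_2/\eta_1$ converges (see the next paragraph), so run $2$ is itself a run with a fixed time horizon $\tau_2$.

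\textbf{Variance ratio.} The one-run analogue of Proposition~\ref{prop:variance-main} gives $\operatorname{Var}[J(\widehat\theta_T^{(i)})]=\frac{\eta_i}{B}\nabla J(\theta_T)^\top\Sigma_T^{(i)}\nabla J(\theta_T)+O((\eta_1/B)^{3/2})$, with a common $\nabla J$. The ratio is therefore $(\eta_2/\eta_1)\cdot q$, where $q$ is the ratio of the quadratic forms.
\begin{itemize}
\item \emph{Continuum limit.} With $\rho$ fixed and $\eta\to0$, the scalar recursion for $f$ (the momentum and the bias correction) converges to the ODE $\rho\ddot x+\dot x+\gamma x=0$ in the time variable $s=\eta t$, with $x(0)=1$, $\dot x(0)=-\gamma$. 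Its roots are $\lambda_\pm=(-1\pm\sqrt{1-4\gamma\rho})/(2\rho)$, which are real for $\rho\le1/(4\gamma)$. The slow rate is $\kappa(\rho)=2\gamma/(1+\sqrt{1-4\gamma\rho})$, so $x(s)=A(\rho)e^{-\kappa(\rho)s}(1+o(1))$.
\item \emph{Ratio of learning rates.} Matching $f_T$ gives $\kappa_2\tau_2-\kappa_1\tau=\log(A_2/A_1)+o(1)$. Hence $\lim_{\eta_1\to0}\eta_2/\eta_1=\tau_2/\tau$, and this tends to $\kappa_1/\kappa_2=(1+\sqrt{1-4\gamma\rho_2})/(1+\sqrt{1-4\gamma\rho_1})$ as $\tau\to\infty$. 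The inequality $<1$ follows from $\rho_2>\rho_1$.
\item \emph{Showing $q\to1$.} The continuum limit of the fluctuation dynamics for $\Sigma_T$ is a linear SDE $\rho\ddot\xi+\dot\xi+\gamma\xi=\text{noise}$ (projected onto $\mathbf1^\perp$). For large times its covariance converges to the stationary covariance with the noise covariance frozen at $\theta^\star$. For the damped oscillator this stationary position covariance equals the $\rho=0$ one (it is independent of the "mass" $\rho$), which can be checked directly via the Lyapunov equation. The transient, including the bias-correction period and the noise accumulated while the trajectory is away from $\theta^\star$, decays exponentially in $\tau$.
\end{itemize}

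I expect this last step to be the main obstacle. It requires justifying the interchange of limits for $\Sigma_T$ and showing that its difference from the stationary covariance is $o(1)$ uniformly as $\tau\to\infty$, and in particular that $\nabla J(\theta_T)^\top\Sigma\nabla J(\theta_T)$ does not degenerate. That last concern is delicate because $\nabla J(\theta_T)\to0$ as $\tau\to\infty$. I would therefore normalize both quadratic forms by $|f_T|^2$ and compare them in the direction $\theta_0-\theta^\star$ before taking the ratio.
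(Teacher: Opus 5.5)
Your architecture matches the paper's proof. The steps are: the IVT on $f_T(\eta,1-\eta/\rho_2)$, the single-run expansion for the mean, the ratio $(\eta_2/\eta_1)\,q$, the limit $\tau_2/\tau\to\kappa(\rho_1)/\kappa(\rho_2)$, and $q\to1$ because the long-time covariance $\Sigma_\infty=\frac1{2\gamma}[\operatorname{diag}(\mathbf r/\pi_{\theta^\star})-\mathbf r\mathbf r^\top]$ is independent of $\rho$. Your Lyapunov remark is the paper's computation $\int_0^\infty g^2=1/(2\gamma)$. For the mean you need no limit of $\eta_2/\eta_1$: the bounds behind \eqref{eq:locality-expansion} only use $t\eta\le\tau$, and $T\eta_2<\tau$. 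Two steps, however, do not go through as written.

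\emph{The $q\to1$ step.} Rescaling $\nabla J(\theta_T)$ by $f_T$ cancels in $q$ and changes nothing. What matters is whether the gradient direction stays away from the null space of $\Sigma_\infty$.
\begin{itemize}
\item The rescaled gradient tends to $-\gamma\mathcal F(\theta^\star)d$ with $d=\theta_0-\theta^\star$. Its coordinate at a reward-$1$ response $y$ is proportional to $d_y-\E_{\pi^\star}[d]$.
\item These coordinates can all vanish for an admissible suboptimal initialization. Example: one reward-$1$ response whose $d_y$ equals the $\pi^\star$-weighted mean of $d$ over two reward-$0$ responses, with $d_{y_2}\neq d_{y_3}$.
\item The theorem assumes no nondegeneracy condition. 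In that case both limiting quadratic forms are $0$, and operator-norm convergence $\Sigma(s;\rho)\to\Sigma_\infty$ gives only $0/0$.
\end{itemize}
The missing idea is structural. $\Sigma(s;\rho)$ and $\Sigma_\infty$ both vanish on the reward-$0$ coordinates and are positive definite on the span of the reward-$1$ coordinates. Hence the operator-norm convergence upgrades to a direction-free relative bound
$\sup_{\mathbf v^\top\Sigma_\infty\mathbf v>0}|\mathbf v^\top\Sigma(s;\rho)\mathbf v/\mathbf v^\top\Sigma_\infty\mathbf v-1|\to0$.
Apply it at $s=\tau$ and at $s=\tau_2\to\infty$. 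You also need, for each large finite $\tau$, a nonzero reward-$1$ coordinate of $\nabla J(\theta(\tau;\rho_1))$, so the inner limit is a ratio of positive numbers. The paper gets this because $\E_{\pi_{\theta(\tau;\rho_1)}}[d]$ is strictly decreasing in $\tau$, so a fixed reward-$1$ coordinate vanishes at most once.

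\emph{The continuum limit.} With bias correction, the $\eta\to0$ limit of the recursion is \eqref{eq:continuous-path-definition}. This is equivalent to $\rho(1-e^{-s/\rho})\ddot x+\dot x+\gamma x=0$, and it is this equation that forces $\dot x(0)=-\gamma$. Your autonomous equation is the limit without bias correction, where instead $\dot x(0)=0$. The exponentially decaying change in the leading coefficient does not alter the slow rate.

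However, your asymptotic $x(s)=Ae^{-\kappa s}(1+o(1))$ fails at $\rho_2=1/(4\gamma)$, which the theorem allows. There the roots coincide, and already for your ODE the solution is $(1+\gamma s)e^{-2\gamma s}$. The conclusion survives because you only need $-\log f(s;\rho)/s\to\kappa(\rho)$. The paper proves this uniformly in $\rho\le1/(4\gamma)$ by showing that $h_\rho=-\partial_s\log f$ satisfies a Riccati equation and converges to the smaller root $\kappa(\rho)$.

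Also, $\eta_2/\eta_1\to\tau_2/\tau$ does not follow from asymptotic matching. It comes from Lemma~\ref{lem:continuous-path} together with strict monotonicity of $f(\cdot;\rho_2)$, with $\tau_2$ defined by $f(\tau_2;\rho_2)=f(\tau;\rho_1)$.
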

\vspace{-3pt}
The proof is in Appendix~\ref{app:smaller-lr}. Increasing the EMA coefficient compensates for the smaller learning rate by matching the parameters obtained with exact gradients after the same number of updates. With sampled gradients, the expected objective values differ by $O(\eta_1/B)$, while the variance ratio converges to a constant below one in the stated asymptotic regime. Thus, EMA can help reduce variability across runs while achieving comparable expected performance under the same update count $T$ and sample budget $BT$.

\vspace{-5pt}
\section{Experiments}
\label{sec:experiments}
We use numerical bandit experiments to quantitatively validate the
predictions of
Proposition~\ref{prop:variance-main} and
Theorems~\ref{cor:fixed-beta} and~\ref{thm:fixed-horizon}.
Then we turn to RLVR experiments on Qwen3-4B-Base to test our
theoretical predictions in a practical LLM post-training setting.
Finally, we apply our principle of increasing the EMA coefficient to
Qwen2.5-32B to assess its applicability as model size increases.
\vspace{-2pt}
\subsection{Validating theoretical predictions in bandits}
\label{sec:numerical-bandit}
\noindent\textbf{Setup.}
We consider a 10-arm bandit in which $3$ arms have reward $1$ and the
remaining $7$ have reward $0$. The reference policy is set equal to
the initial policy, with KL penalty coefficient $\gamma=1$,
batch size $B=1$, and optimization time $\tau=1$.
Training uses $T=32,64,\ldots,4096$ updates with learning rate
$\eta=1/T$. We compare sampled NPG with fixed $\beta=0.9$ or
scaled $\beta=1-4\eta$ against the no-EMA baseline.
Each configuration uses $50{,}000$ independent training replicates,
with $J$ evaluated exactly after training, to estimate the
expectation and standard deviation of the objective
gap $J(\widehat\theta_T^\beta)-J(\widehat\theta_T^0)$ and the probability of outperforming an independent no-EMA run.
To measure the gain relative to these fluctuations, we compare SNR of the objective gap, defined as
$\operatorname{SNR}:=|\E[J(\widehat\theta_T^\beta)-J(\widehat\theta_T^0)]|/
\operatorname{Std}[J(\widehat\theta_T^\beta)-J(\widehat\theta_T^0)]$.
Both uniform and random policy initializations are considered.
Additional experiments with different reward configurations are provided in Appendix~\ref{app:additional-bandit}.

\vspace{5pt}
\noindent\textbf{Results.}
As the learning rate $\eta$ decreases, Figure~\ref{fig:bandit-results} shows that
the standard deviation scales as $\sqrt{\eta}$ for small $\eta$ under
both momentum schedules, as predicted by Proposition~\ref{prop:variance-main}.
With fixed $\beta=0.9$, the mean gain remains positive for small $\eta$
but scales as $\eta$, decreasing faster than the standard deviation.
With scaled $\beta=1-4\eta$, it approaches a positive constant, consistent with Theorems~\ref{cor:fixed-beta} and~\ref{thm:fixed-horizon}.
These trends hold under both uniform and random initialization.
At $\eta=1/4096$, fixed $\beta$ gives SNRs of 0.0453 and 0.0406
under uniform and random initialization, respectively, while scaled
$\beta$ gives 3.7647 and 3.3982. (More details are
reported in Table~\ref{tab:bandit-snr}.) The probability of EMA outperforming an
independent no-EMA run approaches $1/2$ with fixed $\beta$ and $1$
with scaled $\beta$.

\subsection{Practical LLM post-training}
\label{sec:llm-experiments}
\begin{figure}[!t]
 \centering
 \subcapcentertrue
 \makeatletter
 \let\llmOriginalSubcaption\@makesubfigurecaption
 \renewcommand{\@makesubfigurecaption}[2]{\hbox{\kern0.0216\textwidth\llmOriginalSubcaption{#1}{#2}\kern-0.0216\textwidth}}
 \makeatother
 \subfigure[SGD with EMA.\label{fig:llm-sgd-momentum}]{\includegraphics[width=0.24\textwidth]{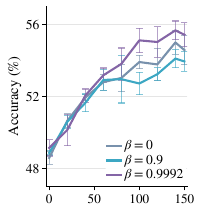}}\hfill
 \subfigure[AdamW.\label{fig:llm-adamw-momentum}]{\includegraphics[width=0.24\textwidth]{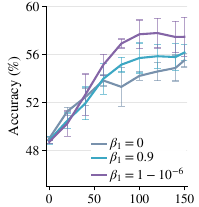}}\hfill
 \subfigure[SGD with EMA.\label{fig:llm-sgd-seeds}]{\includegraphics[width=0.24\textwidth]{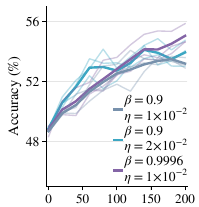}}\hfill
 \subfigure[AdamW.\label{fig:llm-adamw-seeds}]{\includegraphics[width=0.24\textwidth]{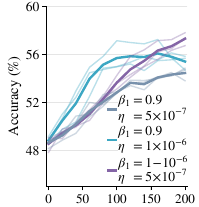}}
 \caption{RLVR experiments on Qwen3-4B-Base with GRPO. The horizontal axis shows the training step.
 \ref{fig:llm-sgd-momentum} and \ref{fig:llm-adamw-momentum} compare momentum
 coefficients at a fixed learning rate; lines show mean validation accuracy
 and error bars show $\pm1$ sample standard deviation across three seeds.
 \ref{fig:llm-sgd-seeds} and \ref{fig:llm-adamw-seeds} compare a smaller learning rate with a larger momentum coefficient
against a larger learning rate with the conventional momentum coefficient; light curves show individual runs and dark curves their means.}
 \label{fig:llm-4b-results}
 \vspace{-10pt}
\end{figure}

\noindent\textbf{Setup.}
We conduct RLVR experiments on
Qwen3-4B-Base with GRPO~\citep{shao2024deepseekmath}, using the DAPO-17k-processed dataset with $1{,}000$ problems held out for validation.
We vary $\beta$ in SGD with bias-corrected EMA and, to support our 
claim in a more practical setting, also study AdamW by varying $\beta_1$ 
while fixing $\beta_2=0.999$. Setting $\beta=0$ recovers vanilla SGD,
while $\beta_1=0$ gives an RMSProp-style optimizer.
All configurations share the same data and evaluation protocol.
We use three training generation seeds per configuration to show that our findings are consistent.
Details of the experimental setup are provided in Appendix~\ref{app:llm-experiments}.

\vspace{5pt}
\noindent\textbf{Benefits of larger momentum coefficients.}
For SGD with EMA, we compare $\beta\in\{0,0.9,0.9992\}$ at
$\eta=2\times10^{-2}$ (Figure~\ref{fig:llm-sgd-momentum}).
We choose $\beta=0.9992$ so that $\eta/(1-\beta)$ equals the upper bound
$1/(4\gamma)$ suggested by Theorem~\ref{thm:fixed-horizon}.
For AdamW, we compare $\beta_1\in\{0,0.9,1-10^{-6}\}$ at
$\eta=1\times10^{-6}$ (Figure~\ref{fig:llm-adamw-momentum}).
For both optimizers, the tuned larger momentum coefficient substantially
improves validation performance over the conventional value of $0.9$
and the baseline without first-moment momentum.
These results support our claim that choosing momentum coefficients
substantially above the conventional value of $0.9$ can yield greater
benefits in practical RLVR training.
\setlength{\intextsep}{0pt}
\begin{wrapfigure}{R}{0.5\textwidth}
\setlength{\abovecaptionskip}{-5pt}
\setlength{\belowcaptionskip}{0pt}
\centering
\includegraphics[width=\linewidth]{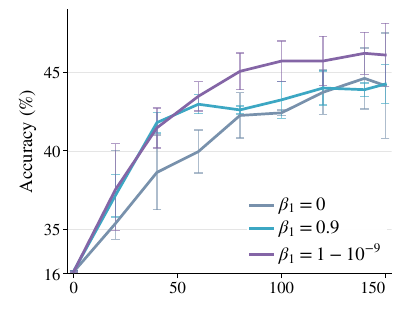}
\caption{RLVR experiment on Qwen2.5-32B with AdamW. The horizontal axis shows the training step. Lines show means and error bars show $\pm1$ sample standard deviation
across three seeds.}
\label{fig:32b-adamw}
\end{wrapfigure}

\vspace{5pt}
\noindent\textbf{More stable training without hurting performance.}
For SGD with EMA, we compare $(\eta,\beta)=(1\times10^{-2},0.9996)$
with $(2\times10^{-2},0.9)$ and include $(1\times10^{-2},0.9)$ as a baseline (Figure~\ref{fig:llm-sgd-seeds}).
Here, $\beta=0.9996$ is chosen to satisfy $\eta/(1-\beta)=1/(4\gamma)$. For AdamW, we compare $(\eta,\beta_1)=(5\times10^{-7},1-10^{-6})$
with $(1\times10^{-6},0.9)$ and include $(5\times10^{-7},0.9)$ as a
baseline (Figure~\ref{fig:llm-adamw-seeds}).
Under the same update and sample budgets, using a larger momentum
coefficient at the smaller learning rate preserves the 
low variability in final validation accuracy across seeds
while achieving slightly better performance than runs using the larger learning rate with conventional momentum.
These results show that pairing a larger momentum coefficient
with a smaller learning rate makes training more stable while
maintaining or even slightly improving performance.

\subsection{Extending to a larger model}
We apply the same GRPO setup to Qwen2.5-32B and increase the momentum coefficient in AdamW from $0.9$ to $1-10^{-9}$, using a learning rate of $2\times10^{-6}$.
Simply increasing the momentum coefficient $\beta_1$
substantially improves validation performance. At step~$150$, the conventional
$\beta_1=0.9$ achieves a mean accuracy of $44.3\%$, similar to
$44.2\%$ for the baseline without first-moment momentum, whereas
$\beta_1=1-10^{-9}$ reaches $46.1\%$, nearly 2 percentage points
higher than the conventional setting. Results are shown in Figure~\ref{fig:32b-adamw}. 

\section{Conclusion}
In this work, we explain why momentum's benefit can be obscured by
sampling noise and derive a principle for recovering and exploiting it.
We develop a theoretical model and prove that EMA strictly improves the
objective value under exact gradients within a range that prevents
overshoot. With sampled gradients, fixing $\beta$ as $\eta$ decreases
makes the expected gain vanish faster than sampling fluctuations,
whereas suitably increasing $\beta$ within the safe regime produces a
consistent improvement.
GRPO experiments on LLMs confirm the practical benefits of larger
momentum coefficients. These findings suggest using coefficients
substantially above the conventional $\beta=0.9$ to better exploit the
first-moment information already maintained by the optimizer.

\subsection*{AI use statement}
\looseness=-1 We used generative AI to assist with some mathematical derivations,
literature searches, proof checking, and language editing. The authors carefully reviewed all AI-assisted material and take full responsibility for this paper.

\bibliographystyle{ims}
\bibliography{references}
\clearpage
\appendix
\section{Proof of Proposition~\ref{prop:softmax-npg}}
\label{app:geometry}
\begin{proof}
We first identify the softmax geometry and then derive the exact and sampled natural gradients.

\vspace{5pt}
\noindent\textbf{Policy Jacobian and Fisher matrix.}
Differentiating the softmax parameterization, we have
\begin{equation}
 \frac{\partial\pi_\theta(y\mid x)}{\partial\theta(x,y')}
 =\pi_\theta(y\mid x)\big(\mathbf1_{\{y=y'\}}-\pi_\theta(y'\mid x)\big),\quad
  \nabla\log\pi_\theta(y\mid x)=\mathbf e_y-\pi_\theta(\cdot\mid x),
 \label{eq:softmax-coordinate}
\end{equation}
where $\mathbf e_y$ is the one-hot vector corresponding to $y\in\mathcal Y$.
Substituting the second identity into the definition of the Fisher
information matrix in~\eqref{eq:npg} and comparing with the first gives
\begin{equation}
 \mathcal F(\theta)=\operatorname{diag}\big(\pi_\theta(\cdot\mid x)\big)
 -\pi_\theta(\cdot\mid x)\pi_\theta(\cdot\mid x)^\top
 =\frac{\partial\pi_\theta(\cdot\mid x)}{\partial\theta^\top}.
 \label{eq:fisher-appendix}
\end{equation}
Since the policy has full support, $\mathcal F(\theta)$ is positive
semidefinite with null space $\operatorname{span}\{\mathbf1\}$. Hence
$\mathcal F^\dagger(\theta)\mathcal F(\theta)
=\mathbf I-\mathbf1\mathbf1^\top/|\mathcal Y|$.

\vspace{5pt}
\noindent\textbf{Exact and sampled natural gradients.}
By the chain rule, differentiating~\eqref{eq:objective} gives
\begin{align}
 \nabla J(\theta)
 &=\Big(\frac{\partial\pi_\theta(\cdot\mid x)}{\partial\theta^\top}\Big)^\top\mathbf r
 -\gamma\sum_{y\in\mathcal Y}\Big(
 \log\frac{\pi_\theta(y\mid x)}{\pi_{\mathrm{ref}}(y\mid x)}+1\Big)
 \nabla\pi_\theta(y\mid x)\notag\\
 &=\mathcal F(\theta)\left[\mathbf r-\gamma\left(
 \log\pi_\theta(\cdot\mid x)-\log\pi_{\mathrm{ref}}(\cdot\mid x)+\mathbf1
 \right)\right]\notag\\
 &=\mathcal F(\theta)
 \left[\mathbf r-\gamma\big(\theta-\log\pi_{\mathrm{ref}}(\cdot\mid x)\big)\right].
 \label{eq:objective-gradient-appendix}
\end{align}
The second equality follows from~\eqref{eq:fisher-appendix}.
The last uses
$\log\pi_\theta(\cdot\mid x)=\theta-
\log(\sum_{y\in\mathcal Y}\exp(\theta(x,y)))\mathbf1$
and $\mathcal F(\theta)\mathbf1=0$.
For the sampled gradient, the KL term is evaluated exactly, so
\begin{align}
 \widehat{\nabla J}(\widehat\theta)
 &=\frac1B\sum_{b=1}^B r(x,y_b)\nabla\log\pi_{\widehat\theta}(y_b\mid x)
   -\gamma\nabla\KL\big(\pi_{\widehat\theta}(\cdot\mid x)\Vert\pi_{\mathrm{ref}}(\cdot\mid x)\big)\notag\\
 &=\mathcal F(\widehat\theta)\Big[
   \frac1B\sum_{b=1}^B\frac{r(x,y_b)}{\pi_{\widehat\theta}(y_b\mid x)}\mathbf e_{y_b}
   -\gamma\big(\widehat\theta-\log\pi_{\mathrm{ref}}(\cdot\mid x)\big)\Big].
 \label{eq:sampled-objective-gradient}
\end{align}
The second equality uses
$\mathcal F(\widehat\theta)\mathbf e_{y_b}/\pi_{\widehat\theta}(y_b\mid x)
=\mathbf e_{y_b}-\pi_{\widehat\theta}(\cdot\mid x)$.
Premultiplying~\eqref{eq:objective-gradient-appendix}
and~\eqref{eq:sampled-objective-gradient} by the corresponding Fisher
pseudoinverse proves~\eqref{eq:exact-npg-direction}
and~\eqref{eq:sampled-npg-direction-main}.
\end{proof}
\section{Proofs of Lemma~\ref{lem:exact-dynamics} and Theorem~\ref{thm:exact-benefit}}
\label{app:exact}
\noindent\textbf{Optimality of $\theta^\star$.}
The definition of $\theta^\star$ gives
$\pi_{\theta^\star}(y\mid x)\propto\pi_{\mathrm{ref}}(y\mid x)e^{r(x,y)/\gamma}$.
Substituting this identity into~\eqref{eq:objective}, we obtain
\begin{equation}
 J(\theta^\star)-J(\theta)
 =\gamma\KL\big(\pi_\theta(\cdot\mid x)\Vert\pi_{\theta^\star}(\cdot\mid x)\big)\ge0.
 \label{eq:objective-gap-identity}
\end{equation}
Equality holds exactly when $\pi_\theta=\pi_{\theta^\star}$, which
verifies the unique optimal policy used in Section~\ref{sec:exact}.
\begin{proof}[Proof of Lemma~\ref{lem:exact-dynamics}]
Substituting the definition of $\theta^\star$ into
\eqref{eq:exact-npg-direction} gives
\begin{equation*}
 \mathcal F^\dagger(\theta_t^\beta)\nabla J(\theta_t^\beta)
 =\gamma\bigg(\mathbf I-\frac{\mathbf1\mathbf1^\top}{|\mathcal Y|}\bigg)
 (\theta^\star-\theta_t^\beta).
\end{equation*}
By Proposition~\ref{prop:softmax-npg}, every natural-gradient direction
has coordinate sum zero. Starting from $m_0=0$, the EMA update therefore
gives $\mathbf1^\top m_t=0$ at every step, so the parameter update preserves
$\mathbf1^\top\theta_t^\beta=\mathbf1^\top\theta_0$.
By construction, $\mathbf1^\top\theta^\star=\mathbf1^\top\theta_0$ as well.
Thus $\mathbf1^\top(\theta^\star-\theta_t^\beta)=0$, and the preceding identity
simplifies to
\begin{equation}
 \mathcal F^\dagger(\theta_t^\beta)\nabla J(\theta_t^\beta)
 =\gamma(\theta^\star-\theta_t^\beta).
 \label{eq:exact-direction-difference}
\end{equation}

\noindent\textbf{Scalar residual dynamics.}
We prove the parameter identity by induction. It holds at $t=0$
since $f_0(\beta)=1$. If it holds through iteration $t$, unrolling
\eqref{eq:npg-ema} yields
\begin{align*}
 \theta_{t+1}^\beta-\theta^\star
 &=\theta_t^\beta-\theta^\star-\eta\gamma
   \sum_{t'=0}^t\omega_\beta(t,t')(\theta_{t'}^\beta-\theta^\star)\\
 &=\Big[f_t(\beta)-\eta\gamma\sum_{t'=0}^t\omega_\beta(t,t')f_{t'}(\beta)\Big]
   (\theta_0-\theta^\star)
 =f_{t+1}(\beta)(\theta_0-\theta^\star).
\end{align*}
The last equality is the defining recurrence of $f_t$.
Substituting the parameter identity into~\eqref{eq:exact-direction-difference}
proves the natural-gradient identity.
\end{proof}
\begin{proof}[Proof of Theorem~\ref{thm:exact-benefit}]
\looseness=-1 We first establish positivity of $f_t(\beta)$ in the safe range and show that it decreases with the EMA coefficient for $t\ge2$. We then use this residual comparison to prove strict improvement of the objective.

\vspace{5pt}
\noindent\textbf{Positive residuals in the safe range.}
We first prove that $f_t(\beta)>0$ for all $t\ge0$ when
$0<\eta\gamma<1$ and $0\le\beta\le((1-\eta\gamma)/(1+\eta\gamma))^2$.
{At $\beta=0$, $f_t(0)=(1-\eta\gamma)^t>0$, so it remains to consider $\beta>0$.}
Separating the current term in the defining recurrence of $f_t$ and
substituting the preceding update gives, for $t\ge1$,
\begin{align}
 f_{t+1}(\beta)
 &=f_t(\beta)-\frac{\eta\gamma}{\sum_{t'=0}^t\beta^{t'}}
 \Big[f_t(\beta)+\beta\sum_{t'=0}^{t-1}\beta^{t-1-t'}f_{t'}(\beta)\Big]\notag\\
 &=f_t(\beta)-\frac{\eta\gamma}{\sum_{t'=0}^t\beta^{t'}}f_t(\beta)
 +\frac{\beta\sum_{t'=0}^{t-1}\beta^{t'}}{\sum_{t'=0}^t\beta^{t'}}
 \big[f_t(\beta)-f_{t-1}(\beta)\big]\notag\\
 &=(1-\eta\gamma)f_t(\beta)
 +\frac{\beta\sum_{t'=0}^{t-1}\beta^{t'}}{\sum_{t'=0}^t\beta^{t'}}
 \big[(1+\eta\gamma)f_t(\beta)-f_{t-1}(\beta)\big].
 \label{eq:second-order-f}
\end{align}
For $\beta>0$, we prove positivity and
$\sqrt\beta\le f_{t+1}(\beta)/f_{t}(\beta)\le1-\eta\gamma$
simultaneously by induction. Division gives
\begin{equation}
 \frac{f_{t+1}(\beta)}{f_t(\beta)}=1-\eta\gamma
 +\frac{\beta\sum_{t'=0}^{t-1}\beta^{t'}}{\sum_{t'=0}^t\beta^{t'}}
 \bigg(1+\eta\gamma-\frac{f_{t-1}(\beta)}{f_t(\beta)}\bigg).
 \label{eq:exact-ratio}
\end{equation}
The base case holds because $f_0(\beta)=1$,
$f_1(\beta)=1-\eta\gamma>0$, and
$f_1(\beta)/f_0(\beta)=1-\eta\gamma\ge\sqrt\beta$.
Assume $f_t(\beta)>0$ and $\sqrt\beta\le f_t(\beta)/f_{t-1}(\beta)\le1-\eta\gamma$ for some $t\ge1$. The coefficient satisfies
$0\le\beta\sum_{t'=0}^{t-1}\beta^{t'}/\sum_{t'=0}^t\beta^{t'}<\beta$,
and the upper bound on $f_t(\beta)/f_{t-1}(\beta)$ gives
$1+\eta\gamma-f_{t-1}(\beta)/f_t(\beta)
\le-(\eta\gamma)^2/(1-\eta\gamma)<0$.
Thus~\eqref{eq:exact-ratio} gives $f_{t+1}(\beta)/f_{t}(\beta)<1-\eta\gamma$. For the lower bound,
\begin{align*}
 \frac{f_{t+1}(\beta)}{f_t(\beta)}-\sqrt\beta
 &\ge1-\eta\gamma+\beta\bigg(1+\eta\gamma-\frac1{\sqrt\beta}\bigg)-\sqrt\beta\\
 &=(1-\sqrt\beta)\big[(1-\eta\gamma)-(1+\eta\gamma)\sqrt\beta\big]\ge0.
\end{align*}
The last inequality follows from the assumed range of $\beta$.
Consequently $f_{t+1}(\beta)\ge\sqrt\beta f_t(\beta)>0$.

\vspace{5pt}
\noindent\textbf{Strict decrease with the EMA coefficient.}
Differentiating~\eqref{eq:exact-ratio} with respect to $\beta$ gives
\begin{align*}
 \partial_\beta\bigg(\frac{f_{t+1}(\beta)}{f_t(\beta)}\bigg)
 &=\frac{\beta\sum_{t'=0}^{t-1}\beta^{t'}}{\sum_{t'=0}^t\beta^{t'}}
   \bigg(\frac{f_{t-1}(\beta)}{f_t(\beta)}\bigg)^2
   \partial_\beta\bigg(\frac{f_t(\beta)}{f_{t-1}(\beta)}\bigg)+\frac{\sum_{t'=1}^t t'\beta^{t'-1}}{(\sum_{t'=0}^t\beta^{t'})^2}
   \bigg(1+\eta\gamma-\frac{f_{t-1}(\beta)}{f_t(\beta)}\bigg).
\end{align*}
The first term has the same sign as
$\partial_\beta(f_t(\beta)/f_{t-1}(\beta))$
for $\beta>0$.
The second term is strictly negative for $t\ge1$. Starting from
$\partial_\beta(f_1(\beta)/f_0(\beta))=\partial_\beta(1-\eta\gamma)=0$, induction therefore gives
$\partial_\beta(f_{t+1}(\beta)/f_t(\beta))<0$. Differentiating
$f_T(\beta)=\prod_{t=1}^T f_{t}(\beta)/f_{t-1}(\beta)$ gives, for all $T\geq 2$,
\[
 \partial_\beta f_T(\beta)
 =f_T(\beta)\sum_{t=1}^T\frac{f_{t-1}(\beta)}{f_{t}(\beta)}
   \partial_\beta\bigg(\frac{f_{t}(\beta)}{f_{t-1}(\beta)}\bigg)<0.
\]
Thus, for $0<\beta_1<\beta_2$, we have $0<f_T(\beta_2)<f_T(\beta_1)<(1-\eta\gamma)^T=f_T(0)$.
The last inequality follows from multiplying the upper bound of $f_{t+1}(\beta)/f_t(\beta)$.

\vspace{5pt}
\noindent\textbf{Strict improvement of the objective.}
We now convert the residual comparison into an objective comparison.
By~\eqref{eq:objective-gradient-appendix}, for $a>0$,
\begin{equation}
 \frac{\mathrm d}{\mathrm d a}
 J(\theta^\star+a(\theta_0-\theta^\star))
 =-\gamma a\operatorname{Var}_{y\sim\pi_{\theta^\star+a(\theta_0-\theta^\star)}(\cdot\mid x)}
 [\theta_0(x,y)-\theta^\star(x,y)]<0.
 \label{eq:objective-residual-derivative}
\end{equation}
The variance is positive since the policy has full support and
$\theta_0\neq\theta^\star$.
Lemma~\ref{lem:exact-dynamics} and
$0<f_T(\beta_2)<f_T(\beta_1)$ therefore give
$J(\theta_T^{\beta_2})>J(\theta_T^{\beta_1})$.
\end{proof}
\section{Proof of Lemma~\ref{lem:locality-main}}
\label{app:localization}
\label{app:response}
\begin{proof}
We first construct an auxiliary process with bounded noise and control its exit probability and moments. We then use these estimates to derive the objective expansion for the auxiliary process and transfer it to the original process.

\vspace{5pt}
\noindent\textbf{Sampling noise and its propagation.}
Let $y_{t,b}^\beta$ be the $b$-th response at update $t$.
Conditional on the sample history, the batch consists of independent responses from
$\pi_{\widehat\theta_t^\beta}(\cdot\mid x)$.
Let $\varepsilon_{t,b}^\beta$ denote the estimation noise from the $b$-th
response at update $t$. Then we have
\begin{equation}
 \varepsilon_{t,b}^\beta=
 \left(\mathbf I-\frac{\mathbf1\mathbf1^\top}{|\mathcal Y|}\right)
 \left[\frac{r(x,y_{t,b}^\beta)}
 {\pi_{\widehat\theta_t^\beta}(y_{t,b}^\beta\mid x)}\mathbf e_{y_{t,b}^\beta}
 -\mathbf r\right], \qquad
 \varepsilon_t^\beta=\frac1B\sum_{b=1}^B\varepsilon_{t,b}^\beta.
 \label{eq:innovation}
\end{equation}
Since the sampled directions have zero coordinate sum~\eqref{eq:sampled-npg-direction-main} and the initial
momentum is zero, the updates preserve
$\mathbf1^\top\widehat\theta_t^\beta
=\mathbf1^\top\theta_0=\mathbf1^\top\theta^\star$.
Therefore, the sampled direction at update $t$ in Proposition~\ref{prop:softmax-npg}
can be written as
$\gamma(\theta^\star-\widehat\theta_t^\beta)+\varepsilon_t^\beta$.
Each $\varepsilon_{t,b}^\beta$ has conditional mean zero, so the expected
direction is $\gamma(\theta^\star-\E[\widehat\theta_t^\beta])$.
Thus, after taking expectations, the sampled EMA updates obey the same
recurrence as the exact updates. With the same initial parameter and
zero initial momentum, this gives
$\E[\widehat\theta_t^\beta]=\theta_t^\beta$ by induction.

Let $\Psi_\beta(t,t')$ denote the weight of the noise
$\varepsilon_{t'}^\beta$ in the difference between the sampled and exact parameters,
so that $\widehat\theta_t^\beta-\theta_t^\beta =\sum_{t'=0}^{t-1}\Psi_\beta(t,t')\varepsilon_{t'}^\beta.$
Subtracting the exact EMA update from the sampled EMA update gives
\begin{equation}
 \widehat\theta_{t+1}^\beta-\theta_{t+1}^\beta
 =\widehat\theta_t^\beta-\theta_t^\beta+\eta\sum_{t'=0}^t\omega_\beta(t,t')
 \big[-\gamma(\widehat\theta_{t'}^\beta-\theta_{t'}^\beta)
       +\varepsilon_{t'}^\beta\big].
\label{eq:noise-expansion}
\end{equation}
The noise at update $t'$ first affects the parameter at time $t'+1$,
so $\Psi_\beta(t,t')=0$ for $t\le t'$.
Substituting the noise expansion above into~\eqref{eq:noise-expansion} gives for all $t\ge t'$
\begin{equation}
 \Psi_\beta(t+1,t')=\Psi_\beta(t,t')
 -\gamma\eta\sum_{k=t'}^t\omega_\beta(t,k)\Psi_\beta(k,t')
 +\eta\omega_\beta(t,t').
 \label{eq:response}
\end{equation}
The last term captures the contribution of $\varepsilon_{t'}^\beta$ retained
in the EMA memory. The sum captures how this noise changes later parameter
values and hence the exact natural-gradient directions evaluated at those values.
According to \eqref{eq:response}, $\Psi_\beta(t,t')$ is a deterministic function of $\eta,\gamma,\beta,t,t'$ and does not depend on the realized samples.
The covariance coefficient in Lemma~\ref{lem:locality-main} is
\begin{equation}
 \Sigma_T^\beta=\frac1\eta\sum_{t=0}^{T-1}\Psi_\beta(T,t)^2
\bigg[
\operatorname{diag}\bigg(
\frac{\mathbf r}{\pi_{\theta_{t}^\beta}(\cdot\mid x)}
\bigg)-\mathbf r\mathbf r^\top
\bigg].
 \label{eq:discrete-covariance}
\end{equation}
The division inside $\operatorname{diag}$ is elementwise.
The bracket is the covariance of
$r(x,y)\mathbf e_y/\pi_{\theta_{t}^\beta}(y\mid x)$ under
$y\sim\pi_{\theta_{t}^\beta}(\cdot\mid x)$, using $r(x,y)^2=r(x,y)$.
Thus $\Sigma_T^\beta$ is deterministic and positive semidefinite. The next lemma states the bounds on
$\Psi_\beta(t,t')$ and $f_t(\beta)$ needed in the subsequent proofs. The proof is delayed to Section~\ref{app:response-bounds}.
\begin{lemma}[Uniform response bounds]
\label{lem:response}
Let $f_t$ be defined in Section~\ref{sec:exact} and $\Psi_\beta(t,t')$ be defined by~\eqref{eq:response}. For $\beta\in[0,1)$ and integers $t\ge0$ with $t\eta\le\tau$, we have $|f_t(\beta)|\le e^{\gamma\tau}$.
The response coefficients satisfy the following bounds, where the first inequality holds for all $t'=0,\ldots,t-1$.
\begin{equation}
  |\Psi_\beta(t,t')|\le e^{\gamma\tau}\eta
 \bigg(1+\sum_{k=t'+1}^t\frac1k\bigg),\quad
 \sum_{t'=0}^{t-1}|\Psi_\beta(t,t')|\le e^{\gamma\tau}\tau,\quad
 \sum_{t'=0}^{t-1}\Psi_\beta(t,t')^2\le5e^{2\gamma\tau}t\eta^2.
 \label{eq:response-bounds}
\end{equation}
\end{lemma}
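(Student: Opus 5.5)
The plan is to treat all four bounds as discrete Gr\"onwall estimates. Both recurrences, for $f_t$ and for $\Psi_\beta(\cdot,t')$, have the form ``new value $=$ old value minus $\eta\gamma$ times a convex combination of past values, plus a forcing term.'' The facts used throughout are that the weights satisfy $\omega_\beta(t,k)\ge0$ and $\sum_{k=0}^t\omega_\beta(t,k)=1$ for every $\beta\in[0,1)$ (for $\beta=0$ this holds by the convention $\omega_0(t,t)=1$). Every constant will then be uniform in $\beta$.

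\textbf{Bound on $f_t$.} Let $M_t=\max_{t'\le t}|f_{t'}(\beta)|$. The defining recurrence gives $|f_{t+1}(\beta)|\le|f_t(\beta)|+\eta\gamma\sum_{t'\le t}\omega_\beta(t,t')|f_{t'}(\beta)|\le(1+\eta\gamma)M_t$. Hence $M_{t+1}\le(1+\eta\gamma)M_t$, and so $|f_t(\beta)|\le(1+\eta\gamma)^t\le e^{\gamma t\eta}\le e^{\gamma\tau}$.

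\textbf{First bound on $\Psi$.} Fix $t'$ and set $P_t=\max_{t'\le k\le t}|\Psi_\beta(k,t')|$, with $\Psi_\beta(t',t')=0$. From \eqref{eq:response},
\[
|\Psi_\beta(t+1,t')|\le(1+\eta\gamma)P_t+\eta\,\omega_\beta(t,t').
\]
Unrolling this linear inequality from $t'$ gives
\[
|\Psi_\beta(t,t')|\le(1+\eta\gamma)^{t-t'}\,\eta\sum_{k=t'}^{t-1}\omega_\beta(k,t')\le e^{\gamma\tau}\eta\sum_{k=t'}^{t-1}\omega_\beta(k,t').
\]
It then remains to show the purely combinatorial estimate
\[
\sum_{k=t'}^{t-1}\omega_\beta(k,t')\le1+\sum_{k=t'+1}^{t}\frac1k\quad\text{uniformly in }\beta.
\]
The diagonal term satisfies $\omega_\beta(t',t')=1/\sum_{j\le t'}\beta^j\le1$. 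For the off-diagonal terms I would write $\omega_\beta(k,t')=\beta^{k-t'}/\sum_{j=0}^k\beta^j$ and compare each term, or partial sums, with $1/k$.

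\textbf{Second and third bounds.} For the second bound, sum \eqref{eq:response} in absolute value over $t'$. With $S_t=\sum_{t'<t}|\Psi_\beta(t,t')|$ and $\bar S_t=\max_{k\le t}S_k$, the fact that the weights sum to one gives $S_{t+1}\le(1+\eta\gamma)\bar S_t+\eta$. Gr\"onwall then gives $S_t\le t\eta(1+\eta\gamma)^t\le e^{\gamma\tau}\tau$. For the third bound, square the first bound and sum over $t'<t$. Since $\sum_{k=t'+1}^t1/k\le\log\big(t/(t'+1)\big)+\tfrac{1}{t'+1}$-type estimates hold, the Riemann-sum comparison $\sum_{t'<t}(1+\log(t/(t'+1)))^2\lesssim t\int_0^1(1+\log(1/x))^2\,dx=5t$ yields $\sum_{t'}\Psi_\beta(t,t')^2\le5e^{2\gamma\tau}t\eta^2$. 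The only care needed is with the endpoint terms, so that the constant is exactly $5$; handling them by monotonicity of $(1+\log(1/x))^2$ should suffice.

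\textbf{Main obstacle.} The weight-sum estimate in the first $\Psi$ bound is where I expect the work to lie. A naive pointwise bound $\omega_\beta(k,t')\le1/(k+1)$ fails when $k-t'<k/2$ and $\beta$ is slightly below one. I would therefore first try to bound the partial sums $\sum_{k=t'+1}^{m}\omega_\beta(k,t')$ directly. Using $1-\beta^{k+1}\ge1-\beta^{t'+2}$ shows the total off-diagonal mass is at most $\beta/\sum_{j\le t'+1}\beta^j\le1$. I would then combine this with the pointwise bound $\omega_\beta(k,t')\le1/(k-t'+1)$, obtained from $\sum_{j\le k}\beta^j\ge(k-t'+1)\beta^{k-t'}$, to reach a harmonic-type bound. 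If that comparison does not produce exactly $\sum_{k=t'+1}^t1/k$, I would accept a slightly different logarithmic bound. The square-sum computation only needs $\int_0^1(1+\log(1/x))^2\,dx<\infty$, so only the constant $5$ would change.
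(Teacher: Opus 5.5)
Your bound on $f_t$ is fine, and so is your reduction of the first inequality to the weight estimate $\sum_{k=t'}^{t-1}\omega_\beta(k,t')\le 1+\sum_{k=t'+1}^{t}1/k$. Your direct $\ell^1$ Gr\"onwall argument for the second inequality is also a valid alternative: the paper instead sums the unrolled first-line bound over $t'$ and swaps the order of summation.

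The gap is the weight estimate itself. You leave it open, and neither of the two ingredients you propose delivers it.

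\emph{The total-mass claim is false.} Your assertion that the off-diagonal mass is at most $\beta/\sum_{j\le t'+1}\beta^j\le1$ drops a factor. Summing $(1-\beta)\beta^{k-t'}/(1-\beta^{t'+2})$ over $k\ge t'+1$ gives $\beta/(1-\beta^{t'+2})=\beta/\bigl((1-\beta)\sum_{j\le t'+1}\beta^j\bigr)$, which blows up as $\beta\uparrow1$. The claim itself fails, not just the derivation. As $\beta\uparrow1$, $\omega_\beta(k,t')\to1/(k+1)$, so the off-diagonal mass tends to $\sum_{k=t'+2}^{t}1/k$. This already exceeds $1$ at $t'=0$, $t=5$. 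That logarithmic growth is exactly why the harmonic sum appears in the lemma.

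\emph{The pointwise bound has the wrong shape.} The bound $\omega_\beta(k,t')\le1/(k-t'+1)$ yields $\sum_{m=1}^{t-t'}1/m$, which is a function of $t-t'$ rather than of $t/t'$. Squaring and summing over $t'$ then only gives $O(t\eta^2\log^2 t)$, so the third inequality does not follow even up to constants. Your Riemann-sum step with $\int_0^1(1+\log(1/x))^2\,\mathrm dx$ presupposes a bound in $\log(t/t')$ that you never obtain.

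\emph{The missing idea.} Apply the $1/(k+1)$ comparison only to the bias-correction part of the weight. Writing $1/(1-\beta^{k+1})=1+\beta^{k+1}/(1-\beta^{k+1})$ gives
\[
\omega_\beta(k,t')=(1-\beta)\beta^{k-t'}+\frac{\beta^{2k+1-t'}}{\sum_{\ell=0}^{k}\beta^\ell}.
\]
The first piece, the uncorrected EMA weight, sums over $k=t',\dots,t-1$ to $1-\beta^{t-t'}\le1$. For the second piece, $\sum_{\ell\le k}\beta^\ell\ge(k+1)\beta^k$ bounds it by $\beta^{k+1-t'}/(k+1)\le1/(k+1)$, and these sum to $\sum_{k=t'+1}^{t}1/k$. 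This proves the first inequality with the stated constant.

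The third inequality then follows from the exact identity $\sum_{t'=0}^{t-1}\bigl(1+\sum_{k=t'+1}^{t}1/k\bigr)^2=5t-\sum_{k=1}^{t}1/k$. This identity also avoids your endpoint issue: the $t'=0$ term is of size $\log^2t$, so a crude Riemann comparison does not obviously give the constant $5$.
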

\noindent\textbf{The auxiliary process.}
We construct a process with bounded noise that agrees with the sampled process until it leaves a unit neighborhood of the exact path.
Use the same noise sequence as the original process and initialize
$\widetilde\theta_0^\beta=\theta_0$. As in the main text, define
\begin{equation}
 \begin{gathered}
 \widetilde\varepsilon_t^\beta
 =\mathbf1\bigg\{\max_{t'\le t}
 \|\widetilde\theta_{t'}^\beta-\theta_{t'}^\beta\|_\infty\le1\bigg\}
 \varepsilon_t^\beta,\qquad
 \widetilde\theta_{t+1}^\beta=\widetilde\theta_t^\beta
 +\eta\sum_{t'=0}^t\omega_\beta(t,t')
 \big[\mathcal F^\dagger(\widetilde\theta_{t'}^\beta)
 \nabla J(\widetilde\theta_{t'}^\beta)
 +\widetilde\varepsilon_{t'}^\beta\big].
 \end{gathered}
 \label{eq:stopping}
\end{equation}
For the bounds below, let
$\zeta^\beta=\inf\{t\ge1:
\|\widetilde\theta_t^\beta-\theta_t^\beta\|_\infty>1\}$
denote the first exit time, taking $\zeta^\beta=\infty$
if no exit occurs.
The two processes agree before exit; afterward, only fresh noise is suppressed. Since the stopping indicator is known before sampling, $\widetilde\varepsilon_t^\beta$ has conditional mean zero, and taking expectations in the auxiliary updates yields $\E[\widetilde\theta_t^\beta]=\theta_t^\beta$.

The bounds on probabilities and noise can be made explicit. Define
$M=\|\theta^\star\|_\infty
+e^{\gamma\tau}\|\theta_0-\theta^\star\|_\infty+1$
and $L=2(|\mathcal Y|e^{2M}+1)$.
Lemma~\ref{lem:exact-dynamics} gives
$\theta_t^\beta=\theta^\star+f_t(\beta)(\theta_0-\theta^\star)$,
and the bound $|f_t(\beta)|\le e^{\gamma\tau}$ in Lemma~\ref{lem:response} holds for
$t\le T$. On $\{t<\zeta^\beta\}$, the two processes agree and
$\|\widetilde\theta_t^\beta-\theta_t^\beta\|_\infty\le1$. Therefore,
\begin{equation*}
  \|\widetilde\theta_t^\beta\|_\infty
\le\|\theta_t^\beta\|_\infty+1
 \le\|\theta^\star\|_\infty
   +|f_t(\beta)|\cdot\|\theta_0-\theta^\star\|_\infty+1
 \le M.
\end{equation*}
Then we have $\pi_{\widetilde\theta_t^\beta}(y\mid x)
 \ge1/(|\mathcal Y|\exp(2M))$. According to the definition of the sampling noise in~\eqref{eq:innovation}, on $\{t<\zeta^\beta\}$, we have
$\|\varepsilon_{t,b}^\beta\|_\infty
\le2\big(1/\pi_{\widetilde\theta_t^\beta}(y_{t,b}^\beta\mid x)+1\big)
\le2\big(|\mathcal Y|e^{2M}+1\big)=L$. Applying~\eqref{eq:response} to the auxiliary
process and using the second inequality in~\eqref{eq:response-bounds} gives for every sample path
\begin{equation}
 \max_{t\le T}\|\widetilde\theta_t^\beta-\theta_t^\beta\|_\infty
 \le\max_{t\le T}\sum_{t'=0}^{t-1}|\Psi_\beta(t,t')|\cdot
 \|\widetilde\varepsilon_{t'}^\beta\|_\infty
 \le L\tau e^{\gamma\tau}.
 \label{eq:aux-compact}
\end{equation}
In particular, the auxiliary parameters remain in a fixed bounded region even after exit.

\vspace{5pt}
\noindent\textbf{Exit probability.}
We bound the probability that the auxiliary process exits the unit neighborhood of the exact trajectory by time $T$. Fix $1\le t\le T$. The first coordinate difference satisfies $(\widetilde\theta_t^\beta)_1-(\theta_t^\beta)_1=\sum_{t'=0}^{t-1}\sum_{b=1}^B\Psi_\beta(t,t')\mathbf1_{\{t'<\zeta^\beta\}}(\varepsilon_{t',b}^\beta)_1/B$.
The $tB$ summands, ordered by $t'$ and then $b$, have zero conditional mean given the preceding samples, so their partial sums form a martingale. Azuma-Hoeffding's inequality therefore gives, for every $u>0$,
\begin{equation}
 \Prob\big(|(\widetilde\theta_t^\beta)_1-(\theta_t^\beta)_1|>u\big)
 \le 2\exp\bigg[-\frac{u^2}{2\sum_{t'=0}^{t-1}\sum_{b=1}^B
 (L|\Psi_\beta(t,t')|/B)^2}\bigg]
 \le2\exp\bigg[-\frac{Bu^2}
 {10\tau L^2e^{2\gamma\tau}\eta}\bigg].
 \label{eq:coordinate-tail}
\end{equation}
The last inequality uses the third inequality in~\eqref{eq:response-bounds}. The same bound holds for every coordinate. The two processes agree through the update causing the first exit, so taking $u=1$ and a union bound over updates and coordinates gives
\begin{equation}
 \begin{aligned}
 \Prob(\exists\,t\le T:\widehat\theta_t^\beta\ne\widetilde\theta_t^\beta)
 &\le\Prob(\zeta^\beta\le T)\le2|\mathcal Y|T\exp(-B/(
 {10\tau L^2e^{2\gamma\tau}\eta})).
 \end{aligned}
 \label{eq:original-path-tail}
\end{equation}

\noindent\textbf{Moment bounds.}
We next control the second and third moments. For the second moment, we have
\begin{align}
 \E\|\widetilde\theta_t^\beta-\theta_t^\beta\|_2^2
 =\frac1{B^2}\sum_{t'=0}^{t-1}\sum_{b=1}^B\Psi_\beta(t,t')^2
 \E\left[\mathbf1_{\{t'<\zeta^\beta\}}
 \|\varepsilon_{t',b}^\beta\|_2^2\right]
 \le5\tau e^{2\gamma\tau}|\mathcal Y|L^2\frac\eta B.
 \label{eq:aux-second}
\end{align}
In the equality, cross terms vanish in expectation since the noise has conditional mean zero. The inequality uses the third inequality in~\eqref{eq:response-bounds} and the bound $\|\varepsilon_{t',b}^\beta\|_2^2\le|\mathcal Y|L^2$ on $\{t'<\zeta^\beta\}$.
For the third moment, the tail-integral formula gives $\E\|\widetilde\theta_t^\beta-\theta_t^\beta\|_2^3
 =3\int_0^\infty u^2
 \Prob\big(\|\widetilde\theta_t^\beta-\theta_t^\beta\|_2>u\big)
 \,\mathrm d u.$
If $\|\widetilde\theta_t^\beta-\theta_t^\beta\|_2$ exceeds $u$, some coordinate has magnitude above $u/\sqrt{|\mathcal Y|}$. Thus we can
bound the probability in the integral by a union bound over coordinates and then apply~\eqref{eq:coordinate-tail}:
\begin{equation}
 \max_{t\le T}\E\|\widetilde\theta_t^\beta-\theta_t^\beta\|_2^3
 \le6|\mathcal Y|\int_0^\infty u^2
 \exp\bigg(-\frac{Bu^2}
 {10\tau|\mathcal Y|L^2e^{2\gamma\tau}\eta}\bigg)\,\mathrm d u=O((\eta/B)^{3/2}).
 \label{eq:aux-moments}
\end{equation}

\noindent\textbf{Quadratic term in the expansion of $J(\theta)$.}
We replace the conditional noise covariances along the auxiliary path by those along the exact path, and control the resulting error. The covariance replacement error satisfies
\begin{align}
 &\E\bigg\|
 \mathbf1_{\{t<\zeta^\beta\}}
 \bigg[\operatorname{diag}\bigg(
 \frac{\mathbf r}{\pi_{\widetilde\theta_t^\beta}(\cdot\mid x)}
 \bigg)-\mathbf r\mathbf r^\top\bigg]
 -\bigg[\operatorname{diag}\bigg(
 \frac{\mathbf r}{\pi_{\theta_t^\beta}(\cdot\mid x)}
 \bigg)-\mathbf r\mathbf r^\top\bigg]
 \bigg\|_{\mathrm{op}}\notag\\
 &\quad\le2|\mathcal Y|e^{2M}
 \E\|\widetilde\theta_t^\beta-\theta_t^\beta\|_\infty
 +|\mathcal Y|(e^{2M}+1)\Prob(\zeta^\beta\le t)
 \label{eq:covariance-replacement-error}\\
 &\quad\le2|\mathcal Y|e^{2M}
 \sqrt{5\tau e^{2\gamma\tau}|\mathcal Y|L^2\eta/B}
 +|\mathcal Y|(e^{2M}+1)\Prob(\zeta^\beta\le T)\notag\\
 &\quad=O\big(\sqrt{\eta/B}+\Prob(\zeta^\beta\le T)\big).
 \label{eq:covariance-replacement-bound}
\end{align}
For the first inequality, on $\{t<\zeta^\beta\}$ the two $\mathbf r\mathbf r^\top$ terms cancel, and the operator norm is bounded by $\max_y|
\pi_{\widetilde\theta_t^\beta}(y\mid x)^{-1}
-\pi_{\theta_t^\beta}(y\mid x)^{-1}
|$. Both parameter vectors have infinity norm at most $M$. Applying the mean value theorem to the identity $\pi_\theta(y\mid x)^{-1}=\sum_{y'}\exp({\theta(x,y')-\theta(x,y)})$ gives the upper bound.
On exit, only the exact-path covariance remains, with norm bounded by $
\max_y1/\pi_{\theta_t^\beta}(y\mid x)+\|\mathbf r\|_2^2$.
The second inequality uses the Cauchy--Schwarz inequality and~\eqref{eq:aux-second}.

We then evaluate the quadratic term in the expansion of $J(\theta)$. Since $\widetilde\varepsilon_t^\beta$ has conditional mean zero, cross terms between distinct updates vanish in expectation.
Also, $\nabla^2J(\theta_T^\beta)\mathbf1=0$,
so the noise projection in~\eqref{eq:innovation} does not change the
quadratic form. Consequently,
\begin{align}
 &\E\big[
 (\widetilde\theta_T^\beta-\theta_T^\beta)^\top
 \nabla^2J(\theta_T^\beta)
 (\widetilde\theta_T^\beta-\theta_T^\beta)\big]\notag\\
 &\quad=\frac1B\sum_{t=0}^{T-1}\Psi_\beta(T,t)^2
 \E\operatorname{tr}\bigg[
 \nabla^2J(\theta_T^\beta)\mathbf1_{\{t<\zeta^\beta\}}
 \bigg[\operatorname{diag}\bigg(
 \frac{\mathbf r}{\pi_{\widetilde\theta_{t}^\beta}(\cdot\mid x)}
 \bigg)-\mathbf r\mathbf r^\top\bigg]\bigg]\notag\\
 &\quad=\frac\eta B\operatorname{tr}\big[
 \nabla^2J(\theta_T^\beta)\Sigma_T^\beta\big]
 +O\bigg(\frac\eta B
 \big(\sqrt{\eta/B}+\Prob(\zeta^\beta\le T)\big)\bigg)\notag\\
 &\quad=\frac\eta B\operatorname{tr}\big[
 \nabla^2J(\theta_T^\beta)\Sigma_T^\beta\big]
 +O((\eta/B)^{3/2}).
 \label{eq:uniform-aux-covariance}
\end{align}
The second equality replaces the covariance matrices by using the definition of $\Sigma_T^\beta$ in~\eqref{eq:discrete-covariance}, the replacement error in~\eqref{eq:covariance-replacement-bound}, and
$\sum_{t=0}^{T-1}\Psi_\beta(T,t)^2\le5\tau e^{2\gamma\tau}\eta$
from the third inequality in~\eqref{eq:response-bounds}.
The expected absolute error in each trace is bounded by $|\mathcal Y|\cdot\|\nabla^2J(\theta_T^\beta)\|_{\mathrm{op}}$ times the expected operator norm of the covariance difference in~\eqref{eq:covariance-replacement-bound}.
Also, $\|\nabla^2J(\theta_T^\beta)\|_{\mathrm{op}}$ is bounded independently of $\eta$ and $\beta$. The last equality uses \eqref{eq:original-path-tail} to absorb the probability.
According to~\eqref{eq:discrete-covariance}, the third inequality in~\eqref{eq:response-bounds}, and the covariance bound above, $\sup_{\beta\in[0,1)}\|\Sigma_T^\beta\|_{\mathrm{op}}$ is bounded by $5\tau e^{2\gamma\tau}|\mathcal Y|(e^{2M}+1)$ uniformly in $\eta$. Together with the Hessian bound, this implies that the quadratic correction is $O(\eta/B)$ uniformly in $\beta$.

\vspace{5pt}
\noindent\textbf{Objective expansion and transfer.}
We combine the moment and covariance bounds in a Taylor expansion,
then transfer the resulting expected objective to the sampled process.
All derivatives of $J(\theta)$ through order three are bounded on the
region containing the exact and auxiliary paths.
Taylor's formula and the third-moment bound in~\eqref{eq:aux-moments}
give an expected remainder bounded by
\[
 \frac16\sup_{\|\vartheta\|_\infty\le M+L\tau e^{\gamma\tau}}
 \|\nabla^3J(\vartheta)\|_{\mathrm{op}}
 \E\|\widetilde\theta_T^\beta-\theta_T^\beta\|_2^3
 =O((\eta/B)^{3/2}).
\]
The bounded region includes every segment joining the exact and
auxiliary endpoints. Using $\E[\widetilde\theta_T^\beta]=\theta_T^\beta$ for the linear term,
\eqref{eq:uniform-aux-covariance} for the quadratic term, and
\eqref{eq:aux-moments} for the remainder, we obtain
\[
 \E[J(\widetilde\theta_T^\beta)]
 =J(\theta_T^\beta)+\frac\eta{2B}
 \operatorname{tr}[\nabla^2J(\theta_T^\beta)\Sigma_T^\beta]
 +O((\eta/B)^{3/2}).
\]
The expected objectives of the sampled and auxiliary processes differ
by at most $2\sup_\theta|J(\theta)|\,\Prob(\zeta^\beta\le T)
=o((\eta/B)^{3/2})$.
Here $-\gamma\log(1/\min_y\pi_{\mathrm{ref}}(y\mid x))
\le J(\theta)\le1$ for every $\theta$.
Combining this estimate with the expansion for the auxiliary process yields
\begin{equation}
 \E[J(\widehat\theta_T^\beta)]
 =J(\theta_T^\beta)+\frac\eta{2B}
 \operatorname{tr}[\nabla^2J(\theta_T^\beta)\Sigma_T^\beta]
 +O((\eta/B)^{3/2}).
 \label{eq:locality-expansion}
\end{equation}
Subtracting the expansions for $\beta$ and $0$ proves the lemma.
\end{proof}
\subsection{Proof of Lemma~\ref{lem:response}}
\label{app:response-bounds}
\begin{proof}
We bound the residual $f_t(\beta)$ and then establish the three noise-response bounds.

\vspace{5pt}
\noindent\textbf{Bound of $f_t(\beta)$.}
By the defining recurrence of $f_t(\beta)$ in Section~\ref{sec:exact}, the triangle inequality gives
\begin{align*}
 |f_{t+1}(\beta)|
 \le |f_t(\beta)|+\gamma\eta\sum_{t'=0}^t
       \omega_\beta(t,t')|f_{t'}(\beta)|\le(1+\gamma\eta)\max_{t'\le t}|f_{t'}(\beta)|,
\end{align*}
where we use $\omega_\beta(t,t')\ge0$ and
$\sum_{t'=0}^t\omega_\beta(t,t')=1$.
The same upper bound also covers all earlier $|f_{t'}(\beta)|$, so
$\max_{t'\le t+1}|f_{t'}(\beta)|\le(1+\gamma\eta)\max_{t'\le t}|f_{t'}(\beta)|.$
With $f_0(\beta)=1$, we have
$\max_{t'\le t}|f_{t'}(\beta)|
 \le(1+\gamma\eta)^t
 \le e^{\gamma\eta t}\le e^{\gamma\tau},$ which proves the bound $|f_t(\beta)|\le e^{\gamma\tau}$ in Lemma~\ref{lem:response}.

\vspace{5pt}
\noindent\textbf{Proof of the first inequality in~\eqref{eq:response-bounds}.}
Fixing $t'$ and applying the triangle
inequality to~\eqref{eq:response} yields $
 |\Psi_\beta(t+1,t')|
 \le(1+\gamma\eta)\max_{t'\le k\le t}|\Psi_\beta(k,t')|
 +\eta\omega_\beta(t,t').$
We use $\omega_\beta(t,k)\ge0$ and
$\sum_{k=t'}^{t}\omega_\beta(t,k)\le1$.
Similar to the previous argument, this also bounds $\max_{t'\le k\le t+1}|\Psi_\beta(k,t')|$.
Starting from $\Psi_\beta(t',t')=0$ and iterating gives
\begin{align}
 |\Psi_\beta(t,t')|
 &\le\eta\sum_{k=t'}^{t-1}(1+\gamma\eta)^{t-1-k}\omega_\beta(k,t')
 \le e^{\gamma\tau}\eta\sum_{k=t'}^{t-1}\omega_\beta(k,t')\\
 &= e^{\gamma\tau}\eta\sum_{k=t'}^{t-1}\bigg((1-\beta)\beta^{k-t'}
   +\frac{\beta^{2k+1-t'}}{\sum_{\ell=0}^k\beta^\ell}\bigg)\\
 &\le e^{\gamma\tau}\eta\bigg(1-\beta^{t-t'}
   +\sum_{k=t'}^{t-1}\frac{\beta^{k+1-t'}}{k+1}\bigg)
   \le e^{\gamma\tau}\eta\bigg(1+\sum_{k=t'+1}^t\frac1k \bigg).
 \label{eq:response-envelope-iteration}
\end{align}
In the first line, we use $(1+\gamma\eta)^{t-1-k}\le e^{\gamma\tau}$. The second line is given by the definition of $\omega_\beta(k,t')$. In the last line, we use $\sum_{\ell=0}^k\beta^\ell\ge(k+1)\beta^k$.
The same bound holds for $\beta=0$.
Combining these inequalities proves the first inequality in~\eqref{eq:response-bounds}.

\vspace{5pt}
\noindent\textbf{Proof of the second inequality in~\eqref{eq:response-bounds}.} Using~\eqref{eq:response-envelope-iteration} and
$\sum_{t'=0}^k\omega_\beta(k,t')=1$, we sum over $t'$ and interchange
the sums to obtain
\[
 \sum_{t'=0}^{t-1}|\Psi_\beta(t,t')|
 \le e^{\gamma\tau}\eta\sum_{t'=0}^{t-1}\sum_{k=t'}^{t-1}\omega_\beta(k,t')
 =e^{\gamma\tau}\eta\sum_{k=0}^{t-1}\sum_{t'=0}^k\omega_\beta(k,t')
 =e^{\gamma\tau}t\eta\le e^{\gamma\tau}\tau.
\]

\noindent\textbf{Proof of the third inequality in~\eqref{eq:response-bounds}.}
Expanding gives
$\sum_{t'=0}^{t-1}\left(1+\sum_{k=t'+1}^t1/k\right)^2=5t-\sum_{k=1}^t 1/k\le5t.$ Combining this identity with the first inequality in~\eqref{eq:response-bounds} proves the third inequality in~\eqref{eq:response-bounds} directly.
\end{proof}
\section{Proof of Proposition~\ref{prop:variance-main}}
\label{app:variance}
\begin{proof}
Apply the expansion proved in Appendix~\ref{app:localization} to
$J(\theta)^2$. Since
$\nabla^2(J(\theta)^2)=2J(\theta)\nabla^2J(\theta)+2\nabla J(\theta)\nabla J(\theta)^\top$, it gives
\begin{align*}
 \E[J(\widehat\theta_T^\beta)^2]
 &=J(\theta_T^\beta)^2
 +\frac\eta B J(\theta_T^\beta)
 \operatorname{tr}[\nabla^2J(\theta_T^\beta)\Sigma_T^\beta] +\frac\eta B\nabla J(\theta_T^\beta)^\top
 \Sigma_T^\beta\nabla J(\theta_T^\beta)+O((\eta/B)^{3/2}).
\end{align*}
Subtracting the square of~\eqref{eq:locality-expansion} cancels the
Hessian term and yields, uniformly in $\beta$,
\begin{equation}
 \operatorname{Var}[J(\widehat\theta_T^\beta)]
 =\frac\eta B\nabla J(\theta_T^\beta)^\top
 \Sigma_T^\beta\nabla J(\theta_T^\beta)+O((\eta/B)^{3/2}).
 \label{eq:uniform-individual-variance}
\end{equation}
\noindent\textbf{Independent comparison.}
The two variances add by independence, proving the expansion in
Proposition~\ref{prop:variance-main}. Its two leading quadratic forms
are nonnegative and uniformly bounded. Under the stated
nondegeneracy condition, the no-EMA term is bounded away from zero
as $\eta\to0$. Thus, $\operatorname{Var}\big[J(\widehat\theta_T^\beta)-J(\widehat\theta_T^0)\big]
 =\Theta(\eta/B)$.
\end{proof}
\section{Proof of Theorem~\ref{cor:fixed-beta}}
\label{app:fixed-beta}
\begin{proof}
For fixed $\beta$, we first expand the exact objective gain and
compare the two sampling corrections in
Lemma~\ref{lem:locality-main} to obtain the expected objective gain.
We then combine asymptotic normality with the vanishing
mean-to-standard-deviation ratio to derive the limiting winning
probability.

\vspace{5pt}
\noindent\textbf{Exact objective gain.}
We first state the required approximation for $f_t(\beta)$.
\begin{lemma}[Approximation of $f_t(\beta)$]
\label{lem:fixed-approximations}
For fixed $\beta\in[0,1)$ and $\tau>0$, the following approximation
for $f_t$ defined in Section~\ref{sec:exact} holds uniformly for
integers $0\le t\le T$ as $\eta=\tau/T\to0$,
\begin{equation}
 f_t(\beta)=
 \left(1-\gamma\eta-\gamma^2\eta^2\frac\beta{1-\beta}\right)^t
 +O(\eta^2).
 \label{eq:fixed-approximations}
\end{equation}
\end{lemma}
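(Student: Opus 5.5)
The plan is to treat the recurrence as a perturbation of a linear constant-coefficient recurrence. With $\beta$ fixed, the bias-correction factor $1/(1-\beta^{t+1})$ equals $1$ up to an error $\beta^{t+1}$ that decays geometrically in $t$. So I will first study the uncorrected recurrence
\[
g_{t+1}=g_t-\eta\gamma(1-\beta)\sum_{t'=0}^t\beta^{t-t'}g_{t'},\qquad g_0=1,
\]
and afterwards show that $f_t(\beta)-g_t=O(\eta^2)$ uniformly in $t\le T$.

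\textbf{Step 1: reduce $g$ to a second-order recurrence.} Set $h_t=(1-\beta)\sum_{t'\le t}\beta^{t-t'}g_{t'}$. Then $h_t=\beta h_{t-1}+(1-\beta)g_t$ and $g_{t+1}=g_t-\eta\gamma h_t$. This is a two-dimensional linear system with the companion matrix
\[
A=\begin{pmatrix}1-\eta\gamma(1-\beta)&-\eta\gamma\beta\\ 1-\beta&\beta\end{pmatrix},
\]
acting on $(g_t,h_{t-1})$. Its characteristic polynomial is $\lambda^2-(1+\beta-\eta\gamma(1-\beta))\lambda+\beta$.

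I will expand the two roots in $\eta$. The dominant root is $\lambda_1=1-\gamma\eta-\gamma^2\eta^2\beta/(1-\beta)+O(\eta^3)$; this comes from substituting $\lambda=1-a\eta-b\eta^2$ and matching orders. The other root is $\lambda_2=\beta+O(\eta)$, which stays bounded away from $1$. Writing $g_t=c_1\lambda_1^t+c_2\lambda_2^t$ and fitting the initial conditions $g_0=1$ and $g_1=1-\eta\gamma$, I expect $c_1=1+O(\eta^2)$ and $c_2=O(\eta^2)$. The $O(\eta)$ term in $c_1$ should cancel because $g_1$ agrees with $\lambda_1$ to first order.

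Combining these, $g_t=\lambda_1^t+O(\eta^2)$. In turn $\lambda_1^t=(1-\gamma\eta-\gamma^2\eta^2\beta/(1-\beta))^t\,(1+O(\eta^3))^t$, and the correction factor is $1+O(\eta^2)$ uniformly for $t\le\tau/\eta$.

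\textbf{Step 2: remove the bias correction.} Let $d_t=f_t(\beta)-g_t$. The weights differ by
\[
\omega_\beta(t,t')-(1-\beta)\beta^{t-t'}=\frac{(1-\beta)\beta^{t-t'}\beta^{t+1}}{1-\beta^{t+1}}.
\]
Hence $d_{t+1}$ equals $d_t$, minus $\eta\gamma$ times an $\omega$-weighted average of the $d_{t'}$, minus a forcing term bounded by $\eta\gamma\,\beta^{t+1}(t+1)\,e^{\gamma\tau}$. Here I use $|f|\le e^{\gamma\tau}$ from Lemma~\ref{lem:response} together with $(1-\beta)/(1-\beta^{t+1})\le1$ and $\beta^{t-t'}\le1$; a sharper bound is not needed.

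This forcing term is not yet $O(\eta^2)$. To get an extra factor of $\eta$, I will use that the quantity being averaged is itself Lipschitz in $t'$. The bias correction only redistributes weight within a normalized average of $g_{t'}$, and $g_{t'}-g_t=O(\eta(t-t'))$. So the forcing is really $\eta\gamma\cdot\beta^{t+1}\cdot O(\eta\,\mathrm{poly}(t))$, whose sum over $t$ is $O(\eta^2)$. A discrete Grönwall argument of the same kind as in the proof of Lemma~\ref{lem:response} then gives $|d_t|\le e^{\gamma\tau}\sum_s(\text{forcing}_s)=O(\eta^2)$.

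\textbf{Expected obstacle.} The hardest part will be this extra factor of $\eta$ in the bias-correction error, together with verifying $c_2=O(\eta^2)$. If the cancellation in Step 2 is awkward to carry out, a fallback is to avoid the comparison entirely. Observe that $\omega_\beta$ sums to one and that $f_t(\beta)$ is Lipschitz in $t$ with constant $O(\eta)$. Then write $f_{t+1}-f_t=-\eta\gamma f_t+\eta\gamma\sum_{t'}\omega_\beta(t,t')(f_t-f_{t'})$ and expand the second term to order $\eta^2$ as $\eta^2\gamma^2 f_t\sum_{t'}\omega_\beta(t,t')(t-t')$. The resulting mean lag is $\beta/(1-\beta)$ up to a correction that decays geometrically in $t$. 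A Grönwall comparison against the target geometric sequence closes the argument, since summing the geometric corrections gives only $O(\eta^2)$ total.
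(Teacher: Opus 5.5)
Your main route (Steps 1 and 2) has a genuine gap. Both intermediate claims are false, each off by a term of exact order $\eta$; the lemma survives only because the two errors cancel against each other.

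In Step 1, the uncorrected recurrence has $h_0=(1-\beta)g_0$, so $g_1=1-\eta\gamma(1-\beta)$, not $1-\eta\gamma$. The value $1-\eta\gamma$ is $f_1(\beta)$, where $\omega_\beta(0,0)=1$. Hence $g_1-\lambda_1=\eta\gamma\beta+O(\eta^2)$. This gives
$c_2=-\eta\gamma\beta/(1-\beta)+O(\eta^2)$, $c_1=1+\eta\gamma\beta/(1-\beta)+O(\eta^2)$, and
$g_t=\lambda_1^t+\frac{\eta\gamma\beta}{1-\beta}(\lambda_1^t-\lambda_2^t)+O(\eta^2)$.
At $t=T$, $g_T$ exceeds $\lambda_1^T$ by about $\eta\gamma\beta e^{-\gamma\tau}/(1-\beta)$. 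Already for $\beta=1/2$ you get $g_2=1-5\eta\gamma/4+O(\eta^2)$, against $\lambda_1^2=1-2\eta\gamma+O(\eta^2)$.

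In Step 2, the extra factor of $\eta$ you hope for does not exist. The weight difference $\omega_\beta(t,t')-(1-\beta)\beta^{t-t'}$ is nonnegative with total mass $\beta^{t+1}$. So the bias correction does not redistribute weight inside a normalized average; it adds weight. Exactly,
$d_{t+1}=d_t-\eta\gamma\sum_{t'}\omega_\beta(t,t')d_{t'}-\eta\gamma\beta^{t+1}\sum_{t'}\omega_\beta(t,t')g_{t'}$.
The last average is close to $1$, so the forcing sums to about $\eta\gamma\beta/(1-\beta)$. Consequently
$f_t(\beta)-g_t=-\frac{\eta\gamma\beta}{1-\beta}(\lambda_1^t-\lambda_2^t)+O(\eta^2)$.
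Intuitively, the uncorrected EMA loses about $\beta/(1-\beta)$ steps' worth of early gradient weight, and the bias correction restores exactly that. To rescue this route you would have to compute both first-order corrections explicitly and show they cancel uniformly in $t$. That is not what you planned.

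Your fallback is essentially the paper's proof, and it does close the argument. You should, however, make ``expand to order $\eta^2$'' quantitative. Lipschitz continuity alone determines $\eta\gamma\sum_{t'}\omega_\beta(t,t')(f_t-f_{t'})$ only up to $O(\eta^2)$ per step, which is $O(\eta)$ after $T$ steps. The paper proceeds as follows:
\begin{itemize}
\item It applies the recurrence once more to get $|f_{t-k}(\beta)-f_t(\beta)-\gamma\eta kf_t(\beta)|\le\gamma^2\eta^2e^{\gamma\tau}\big(k(k+1)/2+k\beta/(1-\beta)\big)$.
\item It uses the finite second moment $\sum_k k^2\omega_\beta(t,t-k)\le\beta(1+\beta)/(1-\beta)^2$, so the per-step expansion error is $O(\eta^3)$.
\item It bounds the mean-lag deficit, $0\le\beta/(1-\beta)-\sum_k k\,\omega_\beta(t,t-k)\le(t+1)\beta^{t+1}/(1-\beta)$, giving a per-step error $O(\eta^2(t+1)\beta^{t+1})$ that is summable in $t$.
\end{itemize}
Since $1-\gamma\eta-\gamma^2\eta^2\beta/(1-\beta)\in(0,1)$ for small $\eta$, these one-step errors add without amplification. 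The total is $O(T\eta^3+\eta^2)=O(\eta^2)$.
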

Together with $f_T(0)=(1-\gamma\eta)^T$,
Lemma~\ref{lem:fixed-approximations} gives
\begin{align}
 f_T(0)-f_T(\beta)
 &=(1-\gamma\eta)^T
 -\left(1-\gamma\eta-\gamma^2\eta^2\frac\beta{1-\beta}\right)^T
 +O(\eta^2)\notag\\
 &=T\gamma^2\eta^2\frac\beta{1-\beta}(1-\gamma\eta)^{T-1}
 +O(T^2\eta^4+\eta^2)\notag\\
 &=\gamma^2\tau e^{-\gamma\tau}\frac{\eta\beta}{1-\beta}
 +O(\eta^2).
 \label{eq:fixed-residual-difference}
\end{align}
The second line is Taylor's formula for the power.
The last line uses $T\eta=\tau$ and
$(1-\gamma\eta)^{T-1}=e^{-\gamma\tau}+O(\eta)$.
Lemma~\ref{lem:fixed-approximations} and $T\eta=\tau$ give
$f_T(0)=e^{-\gamma\tau}+O(\eta)$ and
$f_T(\beta)=e^{-\gamma\tau}+O(\eta)$.
To obtain the objective gain along the exact path, we integrate~\eqref{eq:objective-residual-derivative} from $a=f_T(\beta)$ to $a=f_T(0)$, which gives
\begin{align}
 J(\theta_T^\beta)-J(\theta_T^0)
 &=\gamma\int_{f_T(\beta)}^{f_T(0)}
 a\operatorname{Var}_{y\sim\pi_{\theta^\star+a(\theta_0-\theta^\star)}(\cdot\mid x)}
 [\theta_0(x,y)-\theta^\star(x,y)]\,\mathrm d a\notag\\
 &=\gamma e^{-\gamma\tau}
 \operatorname{Var}_{y\sim\pi_{\theta^\star+e^{-\gamma\tau}(\theta_0-\theta^\star)}}
 [\theta_0(x,y)-\theta^\star(x,y)]
 [f_T(0)-f_T(\beta)]+O(\eta^2)\notag\\
 &=C(\tau,\gamma)\frac{\eta\beta}{1-\beta}+O(\eta^2).
 \label{eq:fixed-beta-exact-app}
\end{align}
The integrand varies by $O(\eta)$ over an interval of length
$O(\eta)$, which explains the second-line remainder. The
constant in Theorem~\ref{cor:fixed-beta} is given explicitly by
\[
 C(\tau,\gamma)=\gamma^3\tau e^{-2\gamma\tau}
 \operatorname{Var}_{y\sim\pi_{\theta^\star+e^{-\gamma\tau}(\theta_0-\theta^\star)}(\cdot\mid x)}
 [\theta_0(x,y)-\theta^\star(x,y)].
\]
This expression specifies its dependence on the problem, $\tau$,
and $\gamma$. It is independent of $\eta$, $\beta$, and $B$.
The coefficient is strictly positive under the nondegeneracy condition, which implies $\theta_0\ne\theta^\star$.

\vspace{5pt}
\noindent\textbf{Difference between the quadratic terms.}
Using $f_t(0)=(1-\gamma\eta)^t$,
Lemma~\ref{lem:fixed-approximations} and the mean value theorem give
$|f_t(\beta)-f_t(0)|
\le t\gamma^2\eta^2\beta/(1-\beta)+O(\eta^2)=O(\eta)$
uniformly for $0\le t\le T$, since $t\eta\le\tau$.
Lemma~\ref{lem:exact-dynamics} then gives
$\|\theta_t^\beta-\theta_t^0\|_\infty
=|f_t(\beta)-f_t(0)|\cdot\|\theta_0-\theta^\star\|_\infty$,
and hence
$\max_{t\le T}\|\theta_t^\beta-\theta_t^0\|_\infty=O(\eta)$.
Since $J(\theta)$ is smooth and the entries of $\theta_t^\beta$ and $\theta_t^0$ are bounded uniformly in $t\le T$ and $\eta$, $\|\nabla^3J(\theta)\|_{\mathrm{op}}$ is bounded independently of $\eta$ whenever the entries of $\theta$ satisfy the same bound. This gives a Lipschitz bound for $\nabla^2J(\theta)$ with a constant independent of $\eta$. Combining this with the preceding $O(\eta)$ bound on the parameter difference gives $\|\nabla^2J(\theta_T^\beta)-\nabla^2J(\theta_T^0)\|_{\mathrm{op}}=O(\eta)$.
To compare the two sampling corrections in Lemma~\ref{lem:locality-main},
we also need the following bound on $\Sigma_T^\beta-\Sigma_T^0$.
\begin{lemma}[Covariance comparison]
\label{lem:fixed-covariance}
Let $\Sigma_T^\beta$ be defined by~\eqref{eq:discrete-covariance}.
For fixed $\beta\in[0,1)$ and $\tau>0$, as $\eta=\tau/T\to0$, we have
$\|\Sigma_T^\beta-\Sigma_T^0\|_{\mathrm{op}}=O(\eta).$
\end{lemma}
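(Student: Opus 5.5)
Write $\Sigma_T^\beta-\Sigma_T^0$ using the definition~\eqref{eq:discrete-covariance}. Abbreviate the per-step covariance as
\[
 S_t^\beta=\operatorname{diag}(\mathbf r/\pi_{\theta_t^\beta}(\cdot\mid x))-\mathbf r\mathbf r^\top .
\]
Then split the difference into two pieces:
\[
 \Sigma_T^\beta-\Sigma_T^0=\frac1\eta\sum_{t=0}^{T-1}\Psi_\beta(T,t)^2\big(S_t^\beta-S_t^0\big)+\frac1\eta\sum_{t=0}^{T-1}\big(\Psi_\beta(T,t)^2-\Psi_0(T,t)^2\big)S_t^0 .
\]

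\textbf{First piece.} Earlier in the proof of Theorem~\ref{cor:fixed-beta} it was shown that $\max_{t\le T}\|\theta_t^\beta-\theta_t^0\|_\infty=O(\eta)$, and both paths lie in the box $\|\cdot\|_\infty\le M$. The mean-value argument for $\pi^{-1}$ used in~\eqref{eq:covariance-replacement-error} then gives $\|S_t^\beta-S_t^0\|_{\mathrm{op}}=O(\eta)$ uniformly in $t$. Combined with $\sum_t\Psi_\beta(T,t)^2\le5e^{2\gamma\tau}\tau\eta$ from Lemma~\ref{lem:response}, the first piece is $O(\eta)$.

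\textbf{Second piece.} Since $\|S_t^0\|_{\mathrm{op}}$ is uniformly bounded, it suffices to show
\[
 \sum_t|\Psi_\beta(T,t)^2-\Psi_0(T,t)^2|=O(\eta^2).
\]
Write $\Psi_\beta^2-\Psi_0^2=(\Psi_\beta-\Psi_0)(\Psi_\beta+\Psi_0)$. The factor $\Psi_\beta+\Psi_0$ is $O(\eta\log(1/\eta))$ by the first bound in~\eqref{eq:response-bounds}, and in fact $O(\eta)$ for $t$ away from $T$. The task therefore reduces to proving
\[
 \sum_t|\Psi_\beta(T,t)-\Psi_0(T,t)|=O(\eta),
\]
with a correspondingly small pointwise bound.

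For $\beta=0$ the recursion~\eqref{eq:response} is explicit: $\Psi_0(T,t)=\eta(1-\gamma\eta)^{T-1-t}$.

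For fixed $\beta$, the plan is to show $\Psi_\beta(T,t)=\eta(1-\gamma\eta)^{T-1-t}+r_{T,t}$ with $|r_{T,t}|\le C\eta(\beta^{T-t}+\eta)$, via the following steps.
\begin{itemize}
 \item Compare the recursion~\eqref{eq:response} for $\beta$ with the $\beta=0$ recursion, treating the EMA averaging as a perturbation.
 \item The forcing term $\eta\omega_\beta(t,t')$ sums over $t\ge t'$ to $\eta(1+O(\beta^{t'}/(1-\beta)))$. So the injected noise weight matches the $\beta=0$ case, but it is delivered over a geometric window of length $O(1/(1-\beta))$ rather than at a single step.
 \item Inside the sum, the averaged quantity $\sum_k\omega_\beta(t,k)\Psi_\beta(k,t')$ differs from $\Psi_\beta(t,t')$ by at most the variation of $\Psi$ over a window of $O(1/(1-\beta))$ steps. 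That variation is $O(\eta^2)$ per step, apart from the initial transient, which has geometric weight $\beta^{t-t'}$.
 \item A discrete Gronwall argument, as in the proof of Lemma~\ref{lem:response}, propagates these perturbations with amplification at most $e^{\gamma\tau}$. The same argument should give Lemma~\ref{lem:fixed-approximations}.
\end{itemize}

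Summing the resulting bound over $t$: the transient contributes $\sum_t\eta\beta^{T-t}=O(\eta/(1-\beta))$, and the bulk contributes $T\eta^2=O(\eta)$. Multiplying by $\Psi_\beta+\Psi_0=O(\eta)$ and dividing by $\eta$ gives $O(\eta)$, as required. Near $t\approx T$ the $\log$ factor in the first bound of~\eqref{eq:response-bounds} is harmless. For fixed $\beta$ one can sharpen that bound to $|\Psi_\beta|\le C\eta$, since the $\beta^{k+1-t'}/(k+1)$ terms sum to $O(1/(1-\beta))$.

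\textbf{Main obstacle.} The hardest step is the pointwise comparison $\Psi_\beta(T,t)-\Psi_0(T,t)=O(\eta\beta^{T-t}+\eta^2)$. It needs care because the bias-correction weights $\omega_\beta(t,t')$ are non-stationary for small $t$, and because the transient near $t=T$ is not small pointwise, only in aggregate.

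The fallback is a generating-function/characteristic-root analysis. For $t\gg1/(1-\beta)$ the recursion becomes a constant-coefficient second-order linear recurrence, as in~\eqref{eq:second-order-f}. Its dominant root is $1-\gamma\eta-\gamma^2\eta^2\beta/(1-\beta)+O(\eta^3)$, matching Lemma~\ref{lem:fixed-approximations}, and its subdominant root is $\approx\beta$. The early non-stationary steps are then treated as an $O(1/(1-\beta))$-step perturbation whose contribution is $O(\eta^2)$.
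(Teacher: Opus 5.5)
Your reduction is the same as the paper's. You use the same two-piece split; the paper groups it as $(\Psi_\beta^2-\Psi_0^2)S_t^\beta+\Psi_0^2(S_t^\beta-S_t^0)$, which is immaterial. You also use the same sup bound $|\Psi_\beta(T,t)|\le e^{\gamma\tau}\eta/(1-\beta)$ and the same key estimate $\sum_t|\Psi_\beta(T,t)-\Psi_0(T,t)|=O(\eta)$. One correction on the reduction: the logarithm in the first bound of \eqref{eq:response-bounds} is large for input times $t$ near $0$, where $\sum_{k>t}1/k\approx\log(T/t)$, not near $T$. That bound is therefore already $O(\eta)$ away from $0$, not away from $T$. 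The sharpened bound is also not optional: with the logarithm you would only get $O(\eta\log(1/\eta))$.

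\textbf{The pointwise bound fails near $t=0$.} Your target $|r_{T,t}|\le C\eta(\beta^{T-t}+\eta)$ is false for input times $t=O(1)$. The total weight injected at input time $t$ is $\eta\bar W_\beta(t)$ with $\bar W_\beta(t)=\sum_{s\ge t}\omega_\beta(s,t)$, and bias correction makes $\bar W_\beta(t)-1$ of order $\beta^{t}$, not negligible. For instance, $\bar W_\beta(0)=\sum_{s\ge0}(1-\beta)\beta^s/(1-\beta^{s+1})\ge1+\beta/(1+\beta)$. Hence $\Psi_\beta(T,0)-\Psi_0(T,0)\approx\eta(\bar W_\beta(0)-1)e^{-\gamma\tau}$ is of order $\eta$, while your bound claims $O(\eta^2)$. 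You computed this excess in your second bullet and then dropped it when you said the injected weight ``matches.'' Likewise, your fallback's claim that the early non-stationary steps contribute only $O(\eta^2)$ holds for $f_t$ (Lemma~\ref{lem:fixed-approximations}) but not for $\Psi_\beta(\cdot,t)$ with $t$ in the early window.

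\textbf{Repair.} With the correct forcing, your perturbation argument gives $|r_{T,t}|\le C_\beta\eta(\beta^{T-t}+\beta^{t}+\eta)$. The extra term sums to $O(\eta/(1-\beta))$, so the lemma still follows. But this pointwise analysis carries all the work and is where the error crept in, and the paper never needs it. It telescopes \eqref{eq:response} into
\[
 \Psi_\beta(t,t')=\eta W_\beta(t,t')-\gamma\eta\sum_{k=t'+1}^{t-1}W_\beta(t,k)\Psi_\beta(k,t'),\qquad W_\beta(t,k)=\sum_{\ell=k}^{t-1}\omega_\beta(\ell,k).
\]
It then subtracts the $\beta=0$ identity, where every $W_0(t,k)=1$, and applies a discrete Gronwall inequality directly to $\sum_{t'<t}|\Psi_\beta(t,t')-\Psi_0(t,t')|$. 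The only fact about the weights it needs is $\sum_{t'<t}|W_\beta(t,t')-1|\le2\beta/(1-\beta)$. This follows from $\sum_{t'<t}W_\beta(t,t')=t$ together with $W_\beta(t,t')\ge1-\beta^{t-t'}$. The bias-correction excess at early input times is thus exactly balanced in aggregate by the transient deficit at late ones, so neither has to be tracked pointwise.
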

Combining this bound with the preceding Hessian estimate, we obtain
\begin{align*}
 &\left|\operatorname{tr}[\nabla^2J(\theta_T^\beta)\Sigma_T^\beta
 -\nabla^2J(\theta_T^0)\Sigma_T^0]\right|\\
 &\quad\le |\mathcal Y|\cdot
 \|\nabla^2J(\theta_T^\beta)-\nabla^2J(\theta_T^0)\|_{\mathrm{op}}\cdot
 \|\Sigma_T^\beta\|_{\mathrm{op}}
 +|\mathcal Y|\cdot\|\nabla^2J(\theta_T^0)\|_{\mathrm{op}}\cdot
 \|\Sigma_T^\beta-\Sigma_T^0\|_{\mathrm{op}}=O(\eta).
\end{align*}
Here we add and subtract $\nabla^2J(\theta_T^0)\Sigma_T^\beta$ inside the trace, then bound each trace in absolute value. The bounds in Appendix~\ref{app:localization} give $\|\Sigma_T^\beta\|_{\mathrm{op}}=O(1)$ and $\|\nabla^2J(\theta_T^0)\|_{\mathrm{op}}=O(1)$.
Lemma~\ref{lem:locality-main} and~\eqref{eq:fixed-beta-exact-app}
therefore yield
\begin{align*}
 \E[J(\widehat\theta_T^\beta)-J(\widehat\theta_T^0)]
 &=C(\tau,\gamma)\frac{\eta\beta}{1-\beta}
 +O(\eta^2)+O(\eta^2/B)+O((\eta/B)^{3/2})\\
 &=C(\tau,\gamma)\frac{\eta\beta}{1-\beta}+O(\eta^{3/2}).
\end{align*}
\noindent\textbf{Probability of improvement.}
The mean-to-standard-deviation ratio alone does not determine the
winning probability. We also need the following normal limit to characterize the distribution of the objective difference, which is
proved in the next subsection.
\begin{lemma}[Asymptotic normality]
\label{lem:fixed-normal}
Under the assumptions of Theorem~\ref{cor:fixed-beta}, the distribution of the standardized objective difference converges to a standard normal distribution as $\eta\to0$,
\begin{equation}
 \frac{J(\widehat\theta_T^\beta)-J(\widehat\theta_T^0)
 -\E[J(\widehat\theta_T^\beta)-J(\widehat\theta_T^0)]}
 {\operatorname{Std}[J(\widehat\theta_T^\beta)-J(\widehat\theta_T^0)]}
 \xrightarrow{\mathrm d}\mathcal N(0,1).
 \label{eq:fixed-standardized-clt}
\end{equation}
\end{lemma}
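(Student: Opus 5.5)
We linearize each run around its exact trajectory and show that the nonlinear remainders are negligible at the $\sqrt{\eta/B}$ scale. The two runs are independent, so it suffices to prove a central limit theorem for each $J(\widehat\theta_T^\beta)$ separately and then combine them.

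\textbf{Step 1: reduction to the auxiliary process and linearization.} By \eqref{eq:original-path-tail}, $\widehat\theta_T^\beta=\widetilde\theta_T^\beta$ outside an event of probability $o(\eta^{k})$ for every $k$. Convergence in distribution is therefore unaffected if we replace $\widehat\theta_T^\beta$ by $\widetilde\theta_T^\beta$. A first-order Taylor expansion gives
\[
J(\widetilde\theta_T^\beta)-J(\theta_T^\beta)=\nabla J(\theta_T^\beta)^\top(\widetilde\theta_T^\beta-\theta_T^\beta)+R_T^\beta,
\qquad \E|R_T^\beta|=O(\E\|\widetilde\theta_T^\beta-\theta_T^\beta\|_2^2)=O(\eta/B),
\]
using \eqref{eq:aux-second}. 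Hence $R_T^\beta/\sqrt{\eta/B}\to0$ in probability. The means differ from $J(\theta_T^\beta)$ only by $O(\eta/B)$ (Lemma~\ref{lem:locality-main}), and the standard deviation is $\Theta(\sqrt{\eta/B})$ by Proposition~\ref{prop:variance-main}. By Slutsky's theorem, it is enough to show that
\[
S^\beta:=\sqrt{B/\eta}\,\nabla J(\theta_T^\beta)^\top(\widetilde\theta_T^\beta-\theta_T^\beta)=\sqrt{B/\eta}\sum_{t<T}\sum_{b\le B}\frac{\Psi_\beta(T,t)}{B}\mathbf1_{\{t<\zeta^\beta\}}\nabla J(\theta_T^\beta)^\top\varepsilon_{t,b}^\beta
\]
converges in distribution to $\mathcal N(0,v_\beta)$, where $v_\beta=\lim_{\eta\to 0}\nabla J(\theta_T^\beta)^\top\Sigma_T^\beta\nabla J(\theta_T^\beta)$.

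\textbf{Step 2: martingale CLT.} The summands of $S^\beta$ form a martingale difference array. We apply the martingale central limit theorem (e.g.\ McLeish, or Hall--Heyde) and verify its two conditions.

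\emph{Lindeberg condition.} Each increment is bounded by $\sqrt{B/\eta}\,|\Psi_\beta(T,t)|\,L\|\nabla J\|/B$. The first inequality in \eqref{eq:response-bounds} gives $|\Psi_\beta(T,t)|\le e^{\gamma\tau}\eta(1+\log T)$. The increments are therefore $O(\sqrt{\eta}\log(1/\eta))\to0$ uniformly, and Lindeberg holds trivially.

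\emph{Conditional variance.} The sum of conditional variances is
\[
\frac1\eta\sum_t\Psi_\beta(T,t)^2\,\mathbf1_{\{t<\zeta^\beta\}}\,\nabla J(\theta_T^\beta)^\top\big[\operatorname{diag}(\mathbf r/\pi_{\widetilde\theta_t^\beta})-\mathbf r\mathbf r^\top\big]\nabla J(\theta_T^\beta).
\]
We must show it converges in probability to $v_\beta$. By \eqref{eq:covariance-replacement-bound} together with $\sum_t\Psi_\beta(T,t)^2=O(\eta)$, its $L^1$ distance from the deterministic quantity $\nabla J(\theta_T^\beta)^\top\Sigma_T^\beta\nabla J(\theta_T^\beta)$ is $O(\sqrt{\eta/B}+\Prob(\zeta^\beta\le T))\to0$.

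\textbf{Step 3: identifying the limits (main obstacle).} The deterministic quadratic forms depend on $\eta$, so we need either a genuine limit or an argument along subsequences. For $\beta=0$ the limit exists: the $\Psi_0(T,t)$ are explicit, $\Psi_0(T,t)=\eta(1-\gamma\eta)^{T-1-t}$, and the sum becomes a Riemann sum. For fixed $\beta$, Lemma~\ref{lem:fixed-covariance} and the $O(\eta)$ closeness of the two trajectories show that the $\beta$-quadratic form differs from the $\beta=0$ one by $O(\eta)$. Both therefore converge to the same $v>0$, which is positive by the nondegeneracy condition.

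If computing the limit proves awkward, the fallback is a subsequence argument. The quadratic forms are uniformly bounded, so every sequence $\eta\to0$ has a subsequence along which they converge; the CLT holds along that subsequence, and the normalization by the true standard deviation makes the limit $\mathcal N(0,1)$ independent of the subsequence.

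\textbf{Step 4: combining the runs.} Independence gives joint convergence of $(S^\beta,S^0)$ to independent normals, so $S^\beta-S^0\to\mathcal N(0,v_\beta+v_0)$. Dividing by the standard deviation, whose square satisfies $\operatorname{Var}\cdot B/\eta\to v_\beta+v_0$ by Proposition~\ref{prop:variance-main}, and applying Slutsky's theorem yields \eqref{eq:fixed-standardized-clt}.
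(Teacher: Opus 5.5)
Your proposal is correct and follows essentially the same argument as the paper. Both couple to the auxiliary process, linearize with an $O(\eta/B)$ remainder in $L^1$, and apply the martingale CLT to the linear term, with Lindeberg coming from uniformly small increments and the conditional variance from the covariance-replacement bound; the two independent runs are then combined with Slutsky's theorem. The one difference is normalization: the paper divides directly by $\operatorname{Std}[J(\widehat\theta_T^\beta)-J(\widehat\theta_T^0)]$, so it only needs the conditional-variance ratio to tend to $1/2$, which follows from $|v_\beta-v_0|=O(\eta)$ and nondegeneracy, and this makes your Step~3 (existence of a limit of the quadratic forms, or the subsequence fallback) unnecessary.
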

The mean expansion and Proposition~\ref{prop:variance-main} give
$\E[J(\widehat\theta_T^\beta)-J(\widehat\theta_T^0)]/
\operatorname{Std}[J(\widehat\theta_T^\beta)-J(\widehat\theta_T^0)]
\to0$.
Combining this estimate with the normal limit in
Lemma~\ref{lem:fixed-normal} gives
$\Prob(J(\widehat\theta_T^\beta)>J(\widehat\theta_T^0))\to1/2$.
\end{proof}
\subsection{Proofs of the auxiliary lemmas}
\begin{proof}[Proof of Lemma~\ref{lem:fixed-approximations}]
The definition of $f_t(\beta)$ in Section~\ref{sec:exact} gives $f_{t+1}(\beta)=(1-\gamma\eta)f_t(\beta)
 -\gamma\eta(\sum_{t'=0}^t\omega_\beta(t,t')f_{t'}(\beta)-f_t(\beta)).$
We express the weighted sum by $f_t(\beta)$ to obtain the desired approximation.
The bound $|f_t(\beta)|\le e^{\gamma\tau}$ in Lemma~\ref{lem:response}
and the definition of $f_t(\beta)$ give
$|f_{t'}(\beta)-f_t(\beta)|\le\gamma\eta(t-t')e^{\gamma\tau}$
for all $0\le t'\le t\le T$.
For $0\le k\le t$, we therefore have
\begin{align*}
 &|f_{t-k}(\beta)-f_t(\beta)-\gamma\eta kf_t(\beta)|\\
 &\quad=\gamma\eta\Big|\sum_{\ell=t-k}^{t-1}\sum_{t'=0}^{\ell}
 \omega_\beta(\ell,t')\big(f_{t'}(\beta)-f_t(\beta)\big)\Big|
 \le\gamma^2\eta^2e^{\gamma\tau}\sum_{\ell=t-k}^{t-1}\sum_{t'=0}^{\ell}
 \omega_\beta(\ell,t')(t-t')\\
 &\quad\le\gamma^2\eta^2e^{\gamma\tau}\sum_{\ell=t-k}^{t-1}
 (t-\ell+\beta/(1-\beta))
 =\gamma^2\eta^2e^{\gamma\tau}(k(k+1)/2+k\beta/(1-\beta)).
\end{align*}
The first equality follows by applying the definition of $f_t(\beta)$ to express $f_{t-k}(\beta)-f_{t}(\beta)$ and using $\sum_{t'=0}^{\ell}\omega_\beta(\ell,t')=1$.
For the second inequality, we write $t-t'=(t-\ell)+(\ell-t')$ and bound the weighted average of $\ell-t'$ using $\sum_{t'=0}^{\ell}(\ell-t')\omega_\beta(\ell,t')\le\beta/(1-\beta)$.
Multiplying the resulting error bound by $\omega_\beta(t,t-k)$ and summing over $k\le t$ gives
\begin{align*}
 &\Big|\sum_{k=0}^t\omega_\beta(t,t-k)f_{t-k}(\beta)-f_t(\beta)
 -\frac{\gamma\eta\beta f_t(\beta)}{1-\beta} \Big|\\
 &\quad\le\gamma^2\eta^2e^{\gamma\tau}\sum_{k=0}^t\omega_\beta(t,t-k)
 \Big(\frac{k(k+1)}{2}+\frac{k\beta}{1-\beta}\Big)+\gamma\eta|f_t(\beta)|\cdot\Big|\sum_{k=0}^t k\omega_\beta(t,t-k)-\frac{\beta}{1-\beta}\Big|\\
 &\quad\le\gamma^2\eta^2e^{\gamma\tau}\beta(1+\beta)/(1-\beta)^2
 +\gamma\eta e^{\gamma\tau}(t+1)\beta^{t+1}/(1-\beta).
\end{align*}
For the second inequality, the definition of $\omega_\beta$ gives
$0\le\beta/(1-\beta)-\sum_{k=0}^t k\omega_\beta(t,t-k)
\le(t+1)\beta^{t+1}/(1-\beta)$ and
$\sum_{k=0}^t k^2\omega_\beta(t,t-k)\le \sum_{k=0}^{\infty}k^2\beta^k/(\sum_{k=0}^{\infty}\beta^k) = \beta(1+\beta)/(1-\beta)^2$.
We also use $|f_t(\beta)|\le e^{\gamma\tau}$ in Lemma~\ref{lem:response}.
Substituting this bound into the definition of $f_{t+1}(\beta)$ yields, for $t<T$,
\begin{equation}
 \Big|f_{t+1}(\beta)-\Big(1-\gamma\eta-\frac{\gamma^2\eta^2\beta}{1-\beta}\Big)f_t(\beta)\Big|\le\frac{\gamma^3\eta^3e^{\gamma\tau}\beta(1+\beta)}{(1-\beta)^2}+\frac{\gamma^2\eta^2e^{\gamma\tau}(t+1)\beta^{t+1}}{1-\beta}.
 \label{eq:fixed-one-step-error}
\end{equation}
Repeatedly applying~\eqref{eq:fixed-one-step-error} from $f_0(\beta)=1$ gives
\begin{align*}
 \max_{t\le T}\Big|f_t(\beta)-\Big[1-\gamma\eta-\frac{\gamma^2\eta^2\beta}{1-\beta}\Big]^t\Big|&\le\frac{T\gamma^3\eta^3e^{\gamma\tau}\beta(1+\beta)}{(1-\beta)^2}
 +\frac{\gamma^2\eta^2e^{\gamma\tau}}{1-\beta}\sum_{t'=0}^{T-1}(t'+1)\beta^{t'+1}\\
 &\le\frac{\gamma^2\eta^2\beta e^{\gamma\tau}[1+\gamma\tau(1-\beta^2)]}{(1-\beta)^3}=O(\eta^2).
\end{align*}
For the first inequality, the coefficient $1-\gamma\eta-\gamma^2\eta^2\beta/(1-\beta)$ lies in $(0,1)$ when $\eta$ is sufficiently small, so the total error is bounded by the sum of the one-step error bounds.
The second inequality uses $T\eta=\tau$ and $\sum_{t'=0}^{T-1}(t'+1)\beta^{t'+1}\le\beta/(1-\beta)^2$.
This proves Lemma~\ref{lem:fixed-approximations}.
\end{proof}
\begin{proof}[Proof of Lemma~\ref{lem:fixed-covariance}]
Using~\eqref{eq:discrete-covariance}, we add and subtract the sum with coefficients $\Psi_0(T,t)^2/\eta$ and covariance matrices evaluated at $\theta_{t}^\beta$. Decomposition gives
\begin{align}
 \|\Sigma_T^\beta-\Sigma_T^0\|_{\mathrm{op}}
 &\le\frac1\eta\sum_{t=0}^{T-1}
 \big|\Psi_\beta(T,t)^2-\Psi_0(T,t)^2\big|\cdot
 \bigg\|\operatorname{diag}\bigg(
 \frac{\mathbf r}{\pi_{\theta_{t}^\beta}(\cdot\mid x)}\bigg)
 -\mathbf r\mathbf r^\top\bigg\|_{\mathrm{op}}\\
 &\quad+\frac1\eta\sum_{t=0}^{T-1}\Psi_0(T,t)^2
 \bigg\|\operatorname{diag}\bigg(
 \frac{\mathbf r}{\pi_{\theta_{t}^\beta}(\cdot\mid x)}
 -\frac{\mathbf r}{\pi_{\theta_{t}^0}(\cdot\mid x)}\bigg)
 \bigg\|_{\mathrm{op}}\\
 &\le\frac{|\mathcal Y|(e^{2M}+1)}\eta
 \max_{t<T}\big(|\Psi_\beta(T,t)|+|\Psi_0(T,t)|\big)
 \sum_{t=0}^{T-1}|\Psi_\beta(T,t)-\Psi_0(T,t)|\\
 &\quad+2\tau|\mathcal Y|e^{2M}
 \max_{t\le T}\|\theta_{t}^\beta-\theta_{t}^0\|_\infty.
\label{eq:fixed-covariance-decomposition}
\end{align}
For the second inequality, we bound the two matrix operator norms as in~\eqref{eq:covariance-replacement-bound} by $|\mathcal Y|(e^{2M}+1)$ and $2|\mathcal Y|e^{2M}\|\theta_{t}^\beta-\theta_{t}^0\|_\infty$, respectively, with $M$ defined in Appendix~\ref{app:localization}. We also use $|\Psi_\beta(T,t)^2-\Psi_0(T,t)^2| =|\Psi_\beta(T,t)-\Psi_0(T,t)|\cdot|\Psi_\beta(T,t)+\Psi_0(T,t)|$ to bound the first sum. For the remaining term, we use $\sum_{t=0}^{T-1}\Psi_0(T,t)^2/\eta\le T\eta=\tau$, since~\eqref{eq:response} gives $\Psi_0(T,t)=\eta(1-\gamma\eta)^{T-t-1}\in[0,\eta]$ for $\gamma\eta\le1$. We next bound the three quantities. First, applying~\eqref{eq:response-envelope-iteration} to both responses and taking the maximum over $t'=0,\ldots,t-1$ gives, for all $1\le t\le T$,
\begin{equation}
 \max_{t'<t}\big(|\Psi_\beta(t,t')|+|\Psi_0(t,t')|\big)
 \le e^{\gamma\tau}\eta
 \Big(\max_{t'<t}\sum_{k=t'}^{t-1}\omega_\beta(k,t')+1\Big)
 \le e^{\gamma\tau}\eta\Big(\frac1{1-\beta}+1\Big)
 =O(\eta).
 \label{eq:fixed-response-bound}
\end{equation}
Here we use $\sum_{k=t'}^{t-1}\omega_0(k,t')=1$ and $\sum_{k=t'}^{t-1}\omega_\beta(k,t')\leq\sum_{k=t'}^{t-1}\beta^{k-t'}\le1/(1-\beta)$. For the sum of response differences, we sum $\Psi_\beta(\ell+1,t')-\Psi_\beta(\ell,t')$ over $\ell=t',\ldots,t-1$ using~\eqref{eq:response} and interchange the sums to obtain
\begin{align*}
 \Psi_\beta(t,t')
 &=\eta\sum_{\ell=t'}^{t-1}\omega_\beta(\ell,t')
 -\gamma\eta\sum_{\ell=t'}^{t-1}\sum_{k=t'}^{\ell}
 \omega_\beta(\ell,k)\Psi_\beta(k,t')\\
 &=\eta\sum_{\ell=t'}^{t-1}\omega_\beta(\ell,t')
 -\gamma\eta\sum_{k=t'+1}^{t-1}
 \left(\sum_{\ell=k}^{t-1}\omega_\beta(\ell,k)\right)\Psi_\beta(k,t').
\end{align*}
The second equality omits the $k=t'$ term since $\Psi_\beta(t',t')=0$. We subtract the identity at $\beta=0$, where each cumulative weight equals one. Summing over $t'=0,\ldots,t-1$, we have
\begin{align}
 \sum_{t'=0}^{t-1}|\Psi_\beta(t,t')-\Psi_0(t,t')|
 &\le\eta\sum_{t'=0}^{t-1}\big|\sum_{\ell=t'}^{t-1}\omega_\beta(\ell,t')-1\big|
 +\gamma\eta\sum_{t'=1}^{t-1}\sum_{k=0}^{t'-1}|\Psi_\beta(t',k)-\Psi_0(t',k)|\\
 &\quad+\gamma\eta\sum_{t'=1}^{t-1}\Big[
 \big|\sum_{\ell=t'}^{t-1}\omega_\beta(\ell,t')-1\big|
 \sum_{k=0}^{t'-1}|\Psi_\beta(t',k)|\Big]\\
 &\le 2\eta\beta(1+\gamma\tau e^{\gamma\tau})/(1-\beta)
 +\gamma\eta\sum_{t'=1}^{t-1}\sum_{k=0}^{t'-1}|\Psi_\beta(t',k)-\Psi_0(t',k)|\\
 &\le 2\eta\beta(1+\gamma\tau e^{\gamma\tau})e^{\gamma\tau}/(1-\beta)=O(\eta).
\label{eq:fixed-response-difference-sum}
\end{align}
For the second inequality, we have $\sum_{t'=0}^{t-1}\sum_{\ell=t'}^{t-1}\omega_\beta(\ell,t') =\sum_{\ell=0}^{t-1}\sum_{t'=0}^{\ell}\omega_\beta(\ell,t')=t$. Thus $\sum_{t'=0}^{t-1}|\sum_{\ell=t'}^{t-1}\omega_\beta(\ell,t')-1| =2\sum_{t'=0}^{t-1}\max\{1-\sum_{\ell=t'}^{t-1}\omega_\beta(\ell,t'),0\}$. Since $\omega_\beta(\ell,t')\ge(1-\beta)\beta^{\ell-t'}$, we obtain $\sum_{t'=0}^{t-1}|\sum_{\ell=t'}^{t-1}\omega_\beta(\ell,t')-1| \le2\sum_{t'=0}^{t-1}\beta^{t-t'}\le2\beta/(1-\beta)$. We also use $\sum_{k=0}^{t'-1}|\Psi_\beta(t',k)|\le\tau e^{\gamma\tau}$ from the second inequality in~\eqref{eq:response-bounds}. The last inequality follows by iterating the preceding bound and using $(1+\gamma\eta)^{t-1} \le e^{\gamma\eta(t-1)} \le e^{\gamma\tau}$. For the parameter differences, we substitute the representations from Lemma~\ref{lem:exact-dynamics} for $\theta_t^\beta$ and $\theta_t^0$ to obtain
\begin{equation}
 \max_{t\le T}\|\theta_t^\beta-\theta_t^0\|_\infty
 =\max_{t\le T}|f_t(\beta)-f_t(0)|\cdot
 \|\theta_0-\theta^\star\|_\infty=O(\eta).
\label{eq:fixed-parameter-difference}
\end{equation}
The last bound uses Lemma~\ref{lem:fixed-approximations} and the mean value theorem, which give $ \max_{t\le T}|f_t(\beta)-f_t(0)| \le T\gamma^2\eta^2\beta/(1-\beta) +O(\eta^2) =O(\eta)$. Substituting~\eqref{eq:fixed-response-bound}, \eqref{eq:fixed-response-difference-sum}, and~\eqref{eq:fixed-parameter-difference} into~\eqref{eq:fixed-covariance-decomposition} yields $\|\Sigma_T^\beta-\Sigma_T^0\|_{\mathrm{op}}=O(\eta)$.
\end{proof}
\begin{proof}[Proof of Lemma~\ref{lem:fixed-normal}]
We first analyze the centered objective difference for the auxiliary processes and then transfer the normal limit to the sampled processes.

\vspace{5pt}
\noindent\textbf{Expansion of the auxiliary objective.}
Taylor's formula at $\theta_T^\beta$ gives
$
 J(\widetilde\theta_T^\beta)=J(\theta_T^\beta)
 +\nabla J(\theta_T^\beta)^\top(\widetilde\theta_T^\beta-\theta_T^\beta)+R_T^\beta,
$
where $R_T^\beta$ is the remainder after truncating this expansion at the linear term. Since $\E[\widetilde\theta_T^\beta]=\theta_T^\beta$ by Appendix~\ref{app:localization}, the expected linear term is $\nabla J(\theta_T^\beta)^\top(\E[\widetilde\theta_T^\beta]-\theta_T^\beta)=0$.
Subtracting expectations in the Taylor expansion gives
\begin{equation}
 J(\widetilde\theta_T^\beta)-\E[J(\widetilde\theta_T^\beta)]
 =\nabla J(\theta_T^\beta)^\top(\widetilde\theta_T^\beta-\theta_T^\beta)
 +R_T^\beta-\E[R_T^\beta].
 \label{eq:normal-linear-approximation}
\end{equation}
According to~\eqref{eq:response} and~\eqref{eq:stopping}, we write the linear term as $\nabla J(\theta_T^\beta)^\top(\widetilde\theta_T^\beta-\theta_T^\beta)=\sum_{t=0}^{T-1}\sum_{b=1}^B Y_{t,b}^\beta$, where
$
 Y_{t,b}^\beta=\Psi_\beta(T,t)\mathbf1_{\{t<\zeta^\beta\}}
 \nabla J(\theta_T^\beta)^\top\varepsilon_{t,b}^\beta/B.$

\vspace{5pt}
\noindent\textbf{Normal limit for a single run.}
We next prove that
$(\sum_{t=0}^{T-1}\sum_{b=1}^B Y_{t,b}^\beta)/
\operatorname{Std}[J(\widehat\theta_T^\beta)-J(\widehat\theta_T^0)]\xrightarrow{\mathrm d}\mathcal N(0,1/2).$ Let $\mathscr H_{t,b-1}^\beta$ denote the history before response $b$ at update $t$, and reveal the $TB$ responses in increasing order of $t$ and then $b$. Since \(\E[Y_{t,b}^\beta\mid\mathscr H_{t,b-1}^\beta]=0\), these variables form a martingale difference sequence.
We apply the martingale central limit theorem stated in~\citet[Theorem~E.4.1]{douc2018markov} to the partial sums of $Y_{t,b}^\beta/\operatorname{Std}[J(\widehat\theta_T^\beta)-J(\widehat\theta_T^0)]$ as $T\to\infty$.
Using~\eqref{eq:innovation} and~\eqref{eq:discrete-covariance}, we write the covariance replacement as
\begin{align}
 &\sum_{t=0}^{T-1}\sum_{b=1}^B
 \E[(Y_{t,b}^\beta)^2\mid\mathscr H_{t,b-1}^\beta]
 -\frac\eta B\nabla J(\theta_T^\beta)^\top\Sigma_T^\beta\nabla J(\theta_T^\beta)\notag\\
 &\quad=\sum_{t=0}^{T-1}\sum_{b=1}^B
 \frac{\Psi_\beta(T,t)^2\mathbf1_{\{t<\zeta^\beta\}}}{B^2}
 \nabla J(\theta_T^\beta)^\top
 \bigg[\operatorname{diag}\bigg(
 \frac{\mathbf r}{\pi_{\widetilde\theta_t^\beta}(\cdot\mid x)}
 \bigg)-\mathbf r\mathbf r^\top\bigg]\nabla J(\theta_T^\beta)\notag\\
 &\qquad-\sum_{t=0}^{T-1}\sum_{b=1}^B
 \frac{\Psi_\beta(T,t)^2}{B^2}
 \nabla J(\theta_T^\beta)^\top
 \bigg[\operatorname{diag}\bigg(
 \frac{\mathbf r}{\pi_{\theta_t^\beta}(\cdot\mid x)}
 \bigg)-\mathbf r\mathbf r^\top\bigg]\nabla J(\theta_T^\beta)
 \label{eq:normal-conditional-variance}
\end{align}
The third inequality in~\eqref{eq:response-bounds} gives $\sum_{t=0}^{T-1}\sum_{b=1}^B\Psi_\beta(T,t)^2/B^2\le5\tau e^{2\gamma\tau}\eta/B$.
Equations~\eqref{eq:original-path-tail} and~\eqref{eq:covariance-replacement-bound} bound the expected operator norm of the difference between the auxiliary covariance and the exact-path covariance by $O(\sqrt{\eta/B})$.
Together with the boundedness of $\nabla J(\theta_T^\beta)$, these bounds give an expected absolute value of $O((\eta/B)^{3/2})$ for the right-hand side. For the term $\nabla J(\theta_T^\beta)^\top\Sigma_T^\beta\nabla J(\theta_T^\beta)$, the difference between the EMA and no-EMA versions can be bounded by
\begin{align}
&\big|
\nabla J(\theta_T^\beta)^\top\Sigma_T^\beta\nabla J(\theta_T^\beta)
-\nabla J(\theta_T^0)^\top\Sigma_T^0\nabla J(\theta_T^0)
\big|\\
&\quad\le
\|\nabla J(\theta_T^\beta)-\nabla J(\theta_T^0)\|_2\cdot
\|\Sigma_T^\beta\|_{\mathrm{op}}
\big(\|\nabla J(\theta_T^\beta)\|_2+\|\nabla J(\theta_T^0)\|_2\big)\notag\\
&\qquad+
\|\nabla J(\theta_T^0)\|_2^2\cdot
\|\Sigma_T^\beta-\Sigma_T^0\|_{\mathrm{op}}
=O(\eta).
\label{eq:ema-difference-bound}
\end{align}
Lemma~\ref{lem:fixed-covariance}, the parameter difference bound~\eqref{eq:fixed-parameter-difference}, the Lipschitz continuity of $\nabla J(\theta)$ and the boundedness of $\nabla J(\theta)$ and $\Sigma_T^\beta$ give the last bound. According to Proposition~\ref{prop:variance-main}, $\operatorname{Var}[J(\widehat\theta_T^\beta)-J(\widehat\theta_T^0)]$ is $\Theta(\eta/B)$. Dividing~\eqref{eq:normal-conditional-variance} by this variance gives, as $\eta\to 0$ ($T\to\infty$),
\begin{equation}
 \Big(\sum_{t=0}^{T-1}\sum_{b=1}^B
 \E[(Y_{t,b}^\beta)^2\mid\mathscr H_{t,b-1}^\beta]\Big)/\operatorname{Var}[J(\widehat\theta_T^\beta)-J(\widehat\theta_T^0)]
 \xrightarrow{\mathbb P} 1/2.
 \label{eq:normal-variance-limit}
\end{equation}
The expected absolute value of the right-hand side of~\eqref{eq:normal-conditional-variance} divided by this variance is $O(\sqrt{\eta/B})$. Markov's inequality therefore gives convergence in probability to zero. The difference~\eqref{eq:ema-difference-bound} gives the limit $1/2$. To apply the martingale central limit theorem, we also need to bound the conditional second moments
$\sum_{t=0}^{T-1}\sum_{b=1}^B
 \E\big[(Y_{t,b}^\beta)^2
 \mathbf1_{\{|Y_{t,b}^\beta|>\epsilon\operatorname{Std}[J(\widehat\theta_T^\beta)-J(\widehat\theta_T^0)]\}}
 \mid\mathscr H_{t,b-1}^\beta\big]$, where $\epsilon>0$ is a fixed constant.
The definition of $Y_{t,b}^\beta$ gives
\begin{align*}
 \max_{t<T,\,b\le B}|Y_{t,b}^\beta|
 &\le\frac{\|\nabla J(\theta_T^\beta)\|_2}{B}
 \max_{t<T,\,b\le B}\big(
 |\Psi_\beta(T,t)|\cdot
 \|\mathbf1_{\{t<\zeta^\beta\}}\varepsilon_{t,b}^\beta\|_2\big)\\
 &\le\frac{\sqrt{|\mathcal Y|}Le^{\gamma\tau}\eta}{B}
 \|\nabla J(\theta_T^\beta)\|_2
 \max_{t<T}\sum_{k=t}^{T-1}\omega_\beta(k,t)\\
 &\le\frac{\sqrt{|\mathcal Y|}Le^{\gamma\tau}\eta}{B(1-\beta)}
 \|\nabla J(\theta_T^\beta)\|_2=O(\eta/B).
\end{align*}
The Cauchy--Schwarz inequality gives the first line. The second inequality uses the stopped noise bound $\|\mathbf1_{\{t<\zeta^\beta\}}\varepsilon_{t,b}^\beta\|_\infty\le L$ in Appendix~\ref{app:localization} and the first line of~\eqref{eq:response-envelope-iteration}.
For the last inequality, we use $\sum_{k=t}^{T-1}\omega_\beta(k,t)\le\sum_{k=t}^{T-1}\beta^{k-t}\le1/(1-\beta)$. The boundedness of $\nabla J(\theta_T^\beta)$ in Appendix~\ref{app:localization} gives the final bound.
Since $\operatorname{Std}[J(\widehat\theta_T^\beta)-J(\widehat\theta_T^0)]=\Theta(\sqrt{\eta/B})$ according to Proposition~\ref{prop:variance-main}, the conditional second moments satisfy, as $\eta\to 0$ ($T\to\infty$),
\begin{equation}
  \Big(\sum_{t=0}^{T-1}\sum_{b=1}^B
 \E\big[(Y_{t,b}^\beta)^2
 \mathbf1_{\{|Y_{t,b}^\beta|>\epsilon\operatorname{Std}[J(\widehat\theta_T^\beta)-J(\widehat\theta_T^0)]\}}
 \mid\mathscr H_{t,b-1}^\beta\big]\Big)/\operatorname{Var}[J(\widehat\theta_T^\beta)-J(\widehat\theta_T^0)]\xrightarrow{\mathbb P}0.
\label{eq:normal-lindeberg}
\end{equation}
Here we use $\max_{t<T,\,b\le B}|Y_{t,b}^\beta|/\operatorname{Std}[J(\widehat\theta_T^\beta)-J(\widehat\theta_T^0)]=O(\sqrt{\eta/B})\to0$. Thus for each fixed $\epsilon>0$, every indicator is zero for all sufficiently small $\eta$.
Equations~\eqref{eq:normal-variance-limit} and~\eqref{eq:normal-lindeberg} verify the two conditions of the theorem with limiting variance $1/2$, giving
\begin{equation}
 \Big(\sum_{t=0}^{T-1}\sum_{b=1}^B Y_{t,b}^\beta\Big)
 /\operatorname{Std}[J(\widehat\theta_T^\beta)-J(\widehat\theta_T^0)]
 \xrightarrow{\mathrm d}\mathcal N(0,1/2).
 \label{eq:normal-linear-limit}
\end{equation}
\noindent\textbf{Distribution of $J(\widetilde\theta_T^\beta)-\E[J(\widetilde\theta_T^\beta)]$.} The remainder in the Taylor expansion~\eqref{eq:normal-linear-approximation} satisfies $|R_T^\beta|\le\sup_{\|\vartheta\|_\infty\le M+L\tau e^{\gamma\tau}}\|\nabla^2J(\vartheta)\|_{\mathrm{op}}\cdot\|\widetilde\theta_T^\beta-\theta_T^\beta\|_2^2/2$, where $M$ and $L$ are constants defined in Appendix~\ref{app:localization}. Therefore,~\eqref{eq:aux-second} gives $\E|R_T^\beta-\E[R_T^\beta]|\le2\E|R_T^\beta|=O(\eta/B)$. Dividing by $\operatorname{Std}[J(\widehat\theta_T^\beta)-J(\widehat\theta_T^0)]$ and applying Markov's inequality gives $|R_T^\beta-\E[R_T^\beta]|/\operatorname{Std}[J(\widehat\theta_T^\beta)-J(\widehat\theta_T^0)] \xrightarrow{\mathbb P}0$. Applying Slutsky's theorem~\citep[Chapter~3, Exercise~31]{kroese2023advanced} to~\eqref{eq:normal-linear-approximation}, using~\eqref{eq:normal-linear-limit} and the remainder bound above, gives
$
  (J(\widetilde\theta_T^\beta)-\E[J(\widetilde\theta_T^\beta)]
  )/\operatorname{Std}[J(\widehat\theta_T^\beta)-J(\widehat\theta_T^0)]
  \xrightarrow{\mathrm d}\mathcal N(0,1/2).$
The same argument holds for the no-EMA run. Combining the two independent limits gives
\begin{equation}
\frac{J(\widetilde\theta_T^\beta)-J(\widetilde\theta_T^0)
 -\E[J(\widetilde\theta_T^\beta)-J(\widetilde\theta_T^0)]}
 {\operatorname{Std}[J(\widehat\theta_T^\beta)-J(\widehat\theta_T^0)]}
 \xrightarrow{\mathrm d}\mathcal N(0,1).
\end{equation}
\noindent\textbf{Transfer to the sampled processes.}
As the auxiliary process and the sampled process differ only after the exit time, we have
\begin{equation}
 \E|J(\widehat\theta_T^\beta)-J(\widetilde\theta_T^\beta)|
 \le2\sup_\theta|J(\theta)|\,\Prob(\zeta^\beta\le T)=o(\sqrt{\eta/B}).
\end{equation}
Here we use the exit probability bound in~\eqref{eq:original-path-tail} and $-\gamma\log(1/\min_y\pi_{\mathrm{ref}}(y\mid x))
\le J(\theta)\le1$.
The centered discrepancy satisfies $\mathbb E|J(\widehat\theta_T^\beta)-J(\widetilde\theta_T^\beta)-\mathbb E[J(\widehat\theta_T^\beta)-J(\widetilde\theta_T^\beta)]|\le2\mathbb E|J(\widehat\theta_T^\beta)-J(\widetilde\theta_T^\beta)|=o(\sqrt{\eta/B})$. Since the standard deviation is $\Theta(\sqrt{\eta/B})$ by Proposition~\ref{prop:variance-main}, Markov's inequality gives
\[
 \frac{J(\widehat\theta_T^\beta)-J(\widetilde\theta_T^\beta)-\mathbb E[J(\widehat\theta_T^\beta)-J(\widetilde\theta_T^\beta)]}{\operatorname{Std}[J(\widehat\theta_T^\beta)-J(\widehat\theta_T^0)]}\xrightarrow{\mathbb P}0.
\]
The same estimate holds when $\beta=0$. Slutsky's theorem therefore transfers the standard normal limit to the standardized difference of the sampled objectives.
\end{proof}
\section{Vanishing expected gain as \texorpdfstring{$\eta/(1-\beta)\to0$}{eta/(1-beta) tends to zero}}
\label{app:vanishing}
We formalize and prove the vanishing expected gain claim at the beginning of Section~\ref{sec:fixed-horizon}.
\begin{lemma}[Vanishing expected gain]
\label{lem:vanishing-gain}
Fix $B\ge1$ and $\tau>0$, and consider the two runs in
Section~\ref{sec:stochastic}. For any deterministic
$\beta\in[0,1)$, possibly depending on $\eta$, if $\eta/(1-\beta)\to0$ as $\eta=\tau/T\to0$, then the expected gain satisfies
 $\E[J(\widehat\theta_T^\beta)-J(\widehat\theta_T^0)]
 \to0$ as $\eta\to 0$.
\end{lemma}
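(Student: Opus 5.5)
By Lemma~\ref{lem:locality-main}, the expected gain equals $J(\theta_T^\beta)-J(\theta_T^0)$ plus a sampling correction of size $\frac{\eta}{2B}\operatorname{tr}[\cdots]+O((\eta/B)^{3/2})$. The remainder is uniform in $\beta\in[0,1)$, and so is the bound on $\sup_\beta\|\Sigma_T^\beta\|_{\mathrm{op}}$. The correction is therefore $O(\eta/B)\to0$ even when $\beta$ depends on $\eta$, and only the exact difference has to be handled. By Lemma~\ref{lem:exact-dynamics}, $\theta_T^\beta-\theta_T^0=(f_T(\beta)-f_T(0))(\theta_0-\theta^\star)$. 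Both iterates lie in a fixed bounded set, since $|f_t|\le e^{\gamma\tau}$ by Lemma~\ref{lem:response}, and $J$ is Lipschitz on that set. It therefore suffices to show $f_T(\beta)-f_T(0)\to0$ whenever $\rho:=\eta/(1-\beta)\to0$.

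I would redo the one-step comparison in the proof of Lemma~\ref{lem:fixed-approximations}, this time tracking the dependence on $\beta$ through $\rho$ rather than treating $\beta$ as fixed. Write
\[
f_{t+1}(\beta)=(1-\gamma\eta)f_t(\beta)-\gamma\eta\Big(\sum_{t'}\omega_\beta(t,t')f_{t'}(\beta)-f_t(\beta)\Big).
\]
By the recurrence, $|f_{t'}-f_t|\le\gamma\eta(t-t')e^{\gamma\tau}$. The mean lag satisfies $\sum_{t'}(t-t')\omega_\beta(t,t')\le\beta/(1-\beta)$. Together these give
\[
\Big|\sum_{t'}\omega_\beta(t,t')f_{t'}-f_t\Big|\le\gamma e^{\gamma\tau}\,\eta\beta/(1-\beta)\le\gamma e^{\gamma\tau}\rho .
\]
Hence
\[
|f_{t+1}(\beta)-(1-\gamma\eta)f_t(\beta)|\le\gamma^2e^{\gamma\tau}\eta\rho .
\]
Let $d_t=f_t(\beta)-f_t(0)$, where $f_t(0)=(1-\gamma\eta)^t$. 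Then $|d_{t+1}|\le(1-\gamma\eta)|d_t|+\gamma^2e^{\gamma\tau}\eta\rho$ with $d_0=0$. Summing over at most $T$ steps gives
\[
|d_T|\le T\eta\,\gamma^2e^{\gamma\tau}\rho=\gamma^2\tau e^{\gamma\tau}\rho\to0 .
\]
The key point is that the crude first-order bound only requires the average lag, weighted by $\eta$, to be small. The second-order expansion in Lemma~\ref{lem:fixed-approximations} is not needed.

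Combining the pieces, $|J(\theta_T^\beta)-J(\theta_T^0)|\le C\,|f_T(\beta)-f_T(0)|=O(\rho)$, and the sampling correction is $O(\eta/B)$. The expected gain is therefore $O(\rho+\eta/B)\to0$.

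The step that needs the most care is confirming that the uniformity in Lemma~\ref{lem:locality-main} truly covers a $\beta$ that varies with $\eta$. The lemma states both that the remainder is uniform over $\beta\in[0,1)$ and that $\sup_\beta\|\Sigma_T^\beta\|_{\mathrm{op}}$ is bounded, so I expect this to go through directly. The Hessian $\nabla^2J$ is bounded on the compact region containing both trajectories, which keeps the trace term bounded by a constant times $\eta/B$.
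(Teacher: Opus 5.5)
Your proposal is correct and matches the paper's proof. The paper likewise bounds $|f_T(\beta)-f_T(0)|$ by unrolling the same comparison with $(1-\gamma\eta)^t$, using $|f_{t'}(\beta)-f_t(\beta)|\le\gamma\eta e^{\gamma\tau}(t-t')$ and the mean-lag bound $\beta/(1-\beta)$ to get $\gamma^2\tau e^{\gamma\tau}\eta\beta/(1-\beta)$; it then passes to $J$ via a gradient bound on $\{\|\theta\|_\infty\le M\}$, and absorbs the sampling correction as $O(\eta/B)$ uniformly in $\beta$ through Lemma~\ref{lem:locality-main} and the uniform Hessian bound. Your $\gamma^2\tau e^{\gamma\tau}\rho$ is the same bound up to the harmless factor $\beta\le1$.
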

\begin{proof}
We first bound the exact gain and then control the sampling correction.

\vspace{5pt}
\noindent\textbf{Exact difference.}
Subtracting the definitions of $f_T(\beta)$ and $f_T(0)$ in Section~\ref{sec:exact} gives
\begin{align}
 |f_T(\beta)-f_T(0)|&=\big|(1-\gamma\eta)\big(f_{T-1}(\beta)-f_{T-1}(0)\big)
 -\gamma\eta\sum_{t'=0}^{T-1}\omega_\beta(T-1,t')\big(f_{t'}(\beta)-f_{T-1}(\beta)\big)\big|\notag\\
 &=\gamma\eta\big|\sum_{t=0}^{T-1}(1-\gamma\eta)^{T-t-1}
 \sum_{t'=0}^t\omega_\beta(t,t')\big(f_{t'}(\beta)-f_t(\beta)\big)\big|\notag\\
 &\le\gamma^2\eta^2e^{\gamma\tau}\sum_{t=0}^{T-1}\sum_{t'=0}^t(t-t')\omega_\beta(t,t')
 \le\gamma^2\tau e^{\gamma\tau}\eta\beta/(1-\beta).
 \label{eq:vanishing-exact-difference}
\end{align}
The first line uses $\sum_{t'=0}^{T-1}\omega_\beta(T-1,t')=1$ to add $\gamma \eta \sum_{t'=0}^{T-1}\omega_\beta(T-1,t')f_{T-1}(\beta)$ and subtract $\gamma \eta f_{T-1}(\beta)$.
Repeatedly substituting this relation and using $f_0(\beta)=f_0(0)=1$ yields the second line. The definition of $f_t(\beta)$ and the bound $|f_t(\beta)|\le e^{\gamma\tau}$ in Lemma~\ref{lem:response} give $|f_{t'}(\beta)-f_t(\beta)|\le\gamma\eta e^{\gamma\tau}(t-t')$. Applying $(1-\gamma\eta)^{T-t-1}\le 1$, $T\eta=\tau$ and $\sum_{t'=0}^t(t-t')\omega_\beta(t,t')\le\beta/(1-\beta)$, which follows by summing the weights defined in Section~\ref{sec:setting}, gives the last line. For the objective difference,
we have
\begin{align*}
 |J(\theta_T^\beta)-J(\theta_T^0)|
 \le\sup_{\|\theta\|_\infty\le M}\|\nabla J(\theta)\|_2\cdot\|\theta_T^\beta-\theta_T^0\|_2\leq \frac{\gamma^2\tau e^{\gamma\tau}\eta\beta\|\theta_0-\theta^\star\|_2\sup_{\|\theta\|_\infty\le M}\|\nabla J(\theta)\|_2}{1-\beta}.
\end{align*}
The first inequality follows from the mean value theorem. The second inequality uses Lemma~\ref{lem:exact-dynamics} to express $\|\theta_T^\beta-\theta_T^0\|_2$ in terms of $|f_T(\beta)-f_T(0)|$ and applies~\eqref{eq:vanishing-exact-difference}.
The supremum is finite and independent of $\eta$ and $\beta$, since $J(\theta)$ is smooth and the infinity norms of $\theta_T^\beta$ and $\theta_T^0$ are bounded by $M$, which is defined in Appendix~\ref{app:localization}.

\vspace{5pt}
\noindent\textbf{Expected gain.}
Lemma~\ref{lem:locality-main} and the uniform Hessian bound in Appendix~\ref{app:localization} imply that the expected and exact gains differ by $O(\eta/B)$ uniformly in $\beta$. Hence, there exists a constant $C$ independent of $\eta$ and $\beta$ such that
\[
 |\E[J(\widehat\theta_T^\beta)-J(\widehat\theta_T^0)]|
 \le C\Big(\frac{\eta\beta}{1-\beta}+\frac\eta B\Big).
\]
The upper bound goes to zero when $\eta/{(1-\beta)}$ and $\eta$ both go to zero.
\end{proof}
\section{Proof of Theorem~\ref{thm:fixed-horizon}}
\label{app:fixed-horizon}
\label{app:continuous}
\begin{proof}
We identify the limiting exact gain at fixed $\rho$, then use Lemma~\ref{lem:locality-main} and Proposition~\ref{prop:variance-main} to control the sampled comparison.

\vspace{5pt}
\noindent\textbf{Continuous residual version.}
Here $\beta=1-\eta/\rho$ varies with $\eta$, so the fixed-$\beta$ analysis in Section~\ref{sec:fixed-beta} does not apply directly. For fixed $\rho>0$, we introduce a continuous residual $f(s;\rho)$ at optimization time $s\ge0$, corresponding to $f_t(\beta)$ at $s=t\eta$. It describes the remaining displacement from $\theta^\star$ through $\theta(s;\rho)=\theta^\star+f(s;\rho)(\theta_0-\theta^\star)$ and is defined by
\begin{equation}
 \partial_s f(s;\rho)
 =-\gamma\frac{\int_0^s e^{-(s-u)/\rho}f(u;\rho)\,\mathrm d u}
 {\rho(1-e^{-s/\rho})},\qquad f(0;\rho)=1.
 \label{eq:continuous-path-definition}
\end{equation}
The normalized average at $s=0$ is interpreted by continuity. The following lemma bounds the error between $f_t(\beta)$ and its continuous approximation $f(t\eta;\rho)$. Its proof is provided in Appendix~\ref{app:continuous-path-proof}.
\begin{lemma}[Continuous approximation of the exact path]
\label{lem:continuous-path}
Let $f_t$ be defined in Section~\ref{sec:exact} and $f(s;\rho)$ by~\eqref{eq:continuous-path-definition}. For $0<\eta<\rho$, $\beta=1-\eta/\rho$, and $T\eta\le\tau$,
\begin{equation}
 \max_{t\le T}|f_t(\beta)-f(t\eta;\rho)|
 \le\frac72\gamma e^{2\gamma\tau}\eta.
 \label{eq:continuous-path-bound}
\end{equation}
\end{lemma}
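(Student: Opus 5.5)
The plan is a consistency-plus-stability argument. Write the discrete recurrence as $f_{t+1}(\beta)=f_t(\beta)-\eta\gamma\,g_t$, where $g_t=\sum_{t'\le t}\omega_\beta(t,t')f_{t'}(\beta)$ is a normalized weighted average of past residuals. Write \eqref{eq:continuous-path-definition} as $\partial_s f=-\gamma\,g(s)$, where $g(s)$ is the average of $f(\cdot;\rho)$ under the lag law $\mu_s$: the exponential distribution with mean $\rho$, truncated to $[0,s]$.

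\textbf{Step 1 (a priori bounds on the continuous path).} Since $g(s)$ is an average of past values of $f$, the argument used for $|f_t(\beta)|\le e^{\gamma\tau}$ in Lemma~\ref{lem:response} carries over (Grönwall on $\sup_{u\le s}|f(u;\rho)|$). It gives $|f(s;\rho)|\le e^{\gamma s}$ and $|g(s)|\le e^{\gamma\tau}$ on $[0,\tau]$. Hence $f(\cdot;\rho)$ is Lipschitz with constant $\Lambda:=\gamma e^{\gamma\tau}$, uniformly in $\rho$.

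\textbf{Step 2 (stability).} Let $e_t=f_t(\beta)-f(t\eta;\rho)$ and define the local defect
\[
\delta_t=f((t+1)\eta;\rho)-f(t\eta;\rho)+\eta\gamma\sum_{t'}\omega_\beta(t,t')f(t'\eta;\rho).
\]
Then $e_{t+1}=e_t-\eta\gamma\sum_{t'}\omega_\beta(t,t')e_{t'}-\delta_t$. The weights are nonnegative and sum to one, so, as in Lemma~\ref{lem:response},
\[
\max_{t'\le t+1}|e_{t'}|\le(1+\gamma\eta)\max_{t'\le t}|e_{t'}|+|\delta_t|.
\]
With $e_0=0$ this yields $\max_{t\le T}|e_t|\le e^{\gamma\tau}\sum_{t<T}|\delta_t|$. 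It therefore suffices to show $|\delta_t|\le c\,\gamma e^{\gamma\tau}\eta^2$ with an absolute constant $c$; combined with $T\eta\le\tau$, this gives the stated form. The constant $7/2$ should come out of tracking $c$; I would not fight for it until the structure works.

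\textbf{Step 3 (consistency).} Split
\[
\delta_t=-\gamma\int_{t\eta}^{(t+1)\eta}\big(g(s)-g(t\eta)\big)\,\mathrm ds-\eta\gamma\big(g(t\eta)-\hat g_t\big),
\]
where $\hat g_t$ is the discrete $\omega_\beta$-average of the continuous values $f(t'\eta;\rho)$.

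For the first term I need $|g'|\lesssim\Lambda$ uniformly in $\rho$. Differentiating the normalized average gives $g'(s)=\kappa(s)\,(f(s)-g(s))$ with $\kappa(s)=e^{-s/\rho}/\big(\rho(1-e^{-s/\rho})\big)$. By the Lipschitz bound, $|f(s)-g(s)|\le\Lambda\,\E_{\mu_s}[\text{lag}]$, and the mean lag of the truncated exponential is at most $\min(s/2,\rho)$. An elementary inequality should then give $\kappa(s)\,\E_{\mu_s}[\text{lag}]\le1$ uniformly in $s$ and $\rho$.

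For the second term, $|g(t\eta)-\hat g_t|\le\Lambda\,W_1$ between the two lag laws. These are $\mu_{t\eta}$ and the law of $\eta K$, where $K$ is geometric with ratio $\beta$ truncated to $\{0,\dots,t\}$. I would couple in two stages. First, $\eta\lfloor L/\eta\rfloor$ with $L\sim\mu_{t\eta}$ is within $\eta$ of $L$ and has truncated-geometric law with ratio $q'=e^{-\eta/\rho}\ge\beta$. Second, for the two truncated geometrics with ratios $\beta\le q'$, which are stochastically ordered, $W_1$ equals the difference of their means. The derivative of the mean in $q$ is $\lesssim(1-q)^{-2}$, and $q'-\beta\le\eta^2/(2\rho^2)$, so the mean difference is $O(1)$ in index units, i.e.\ $O(\eta)$ in time, uniformly in $\rho>\eta$ and $t$.

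The main obstacle is this uniform-in-$\rho$ control in Step~3: both the bound on $|g'|$ near $s=0$, where the normalization degenerates, and the mean-difference estimate for the truncated geometrics, which must hold both when $t\ll\rho/\eta$ (little truncation effect) and when $t\gg\rho/\eta$. If a single inequality fails, I would treat those two regimes separately.
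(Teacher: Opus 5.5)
Your route is the paper's: a local-defect bound fed into a discrete Gr\"onwall recursion. The defect splits into a Taylor term controlled by $|g'|$ and an average comparison that discretizes the exponential lag and swaps the decay parameter. Your $g'$ is $-\partial_s^2 f/\gamma$, which the paper bounds directly. For the swap, the paper moves the continuous rate from $\rho$ to $r=-\eta/\log\beta$, while you move the discrete ratio from $\beta$ to $e^{-\eta/\rho}$; the two are equivalent. The structure works, but three steps are wrong as written, and the first is exactly where the paper needs an extra term.

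First, for $L\sim\mu_{t\eta}$, the variable $\lfloor L/\eta\rfloor$ is a truncated geometric on $\{0,\dots,t-1\}$, whereas the $\omega_\beta(t,\cdot)$ lag lives on $\{0,\dots,t\}$: the discrete average has $t+1$ terms, while the continuous one integrates only over $[0,t\eta]$. The two laws you compare are then not stochastically ordered in general, so ``$W_1$ equals the difference of means'' does not apply. The fix is to insert the ratio-$e^{-\eta/\rho}$ geometric on $\{0,\dots,t\}$.
\begin{itemize}
\item Enlarging the support gives an ordered pair whose mean gap is at most $t\,\Prob(K=t)\le t/(t+1)$, i.e.\ less than one grid step. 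This is the paper's $\omega_\beta(t,0)$ term in $(\mathrm I)$.
\item At common support the ratio swap is likelihood-ratio ordered, with $\partial_q\E K=\operatorname{Var}(K)/q\le(1-q)^{-2}$. The bound holds because the truncated variance equals $q/(1-q)^2-(t+1)^2q^{t+1}/(1-q^{t+1})^2$.
\item Integrating from $\beta$ to $e^{-\eta/\rho}$ gives a mean gap at most $1/(1-e^{-x})-1/x<0.6$ grid steps, where $x=\eta/\rho<1$. This is uniform in $t$ and $\rho$, so no regime split is needed.
\end{itemize}

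Second, the correct derivative is $g'(s)=(f(s)-g(s))/\big(\rho(1-e^{-s/\rho})\big)$; your $\kappa$ carries a spurious factor $e^{-s/\rho}$. With the correct kernel, the crude mean-lag bound $\min(s/2,\rho)$ only gives the constant $1/(1-e^{-2})>1$. Using the exact mean $\rho-s/(e^{s/\rho}-1)$ instead, $\kappa(s)\,\E_{\mu_s}[\mathrm{lag}]\le1$ reduces to $1-x\le e^{-x}$, as in the paper.

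Third, the defect scales as $\gamma\Lambda\eta^2=\gamma^2e^{\gamma\tau}\eta^2$. If you bound every factor $(1+\gamma\eta)^{T-1-t}$ by $e^{\gamma\tau}$ and then use $T\eta\le\tau$, you get $c\gamma^2\tau e^{2\gamma\tau}\eta$, which exceeds the stated bound by a factor $\gamma\tau$. Keep the geometric sum instead: $\gamma\eta\sum_{t<T}(1+\gamma\eta)^t=(1+\gamma\eta)^T-1\le e^{\gamma\tau}$.

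With these repairs your pieces total $\tfrac12+1+1+0.6<\tfrac72$ in units of $\gamma^2e^{\gamma\tau}\eta^2$ per step, recovering the paper's constant. Finally, you take existence of $f(\cdot;\rho)$ for granted. The paper obtains it, together with $|f(s;\rho)|\le e^{\gamma s}$, by Picard iteration.
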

At $T\eta=\tau$, Lemma~\ref{lem:continuous-path} and Lemma~\ref{lem:exact-dynamics} give $\|\theta_T^\beta-\theta(\tau;\rho)\|_2=O(\eta)$. Since $f_T(0)=(1-\gamma\eta)^T=e^{-\gamma\tau}+O(\eta)$, we have $\|\theta_T^0-[\theta^\star+e^{-\gamma\tau}(\theta_0-\theta^\star)]\|_2=O(\eta)$. Applying the mean value theorem to $J(\theta)$ for each pair of parameter vectors yields
\begin{equation}
 J(\theta_T^\beta)-J(\theta_T^0)
 =J(\theta(\tau;\rho))
 -J(\theta^\star+e^{-\gamma\tau}(\theta_0-\theta^\star))+O(\eta).
 \label{eq:fixed-horizon-exact-app}
\end{equation}
The discrete parameter vectors have infinity norm at most $M$ as defined in Appendix~\ref{app:localization}, and their continuous limits satisfy the same bound. Since $J(\theta)$ is smooth, $\sup_{\|\vartheta\|_\infty\le M}\|\nabla J(\vartheta)\|_2$ is finite and independent of $\eta$. Thus each $O(\eta)$ parameter error produces an $O(\eta)$ objective error.

\vspace{5pt}
\noindent\textbf{Positive gain in the safe range.}
For $0<\rho\le1/(4\gamma)$ and $\eta<\rho$, we have $ 0<\beta=1-\eta/\rho\le1-4\gamma\eta \le((1-\gamma\eta)/(1+\gamma\eta))^2$, which lies in the safe range in Theorem~\ref{thm:exact-benefit}. Multiplying the ratio bounds $f_{t'}(\beta)/f_{t'-1}(\beta)\ge\sqrt\beta$ from the proof of Theorem~\ref{thm:exact-benefit} gives
\[
 f_t(\beta)
 =\prod_{t'=1}^t\big(f_{t'}(\beta)/f_{t'-1}(\beta)\big)
 \ge\beta^{t/2}
 =(1-\eta/\rho)^{t/2}.\]
By~\eqref{eq:continuous-path-bound}, letting $\eta\to0$ with $t\eta\to s$ yields $ f(s;\rho)\ge e^{-s/(2\rho)}>0. $ The integral in~\eqref{eq:continuous-path-definition} is positive, so $f(s;\rho)$ is strictly decreasing. Its average over preceding times is therefore strictly larger than its current value, implying $\partial_s f(s;\rho)<-\gamma f(s;\rho)$ for $s>0$. Hence we have $\frac{\mathrm d}{\mathrm ds}\big[e^{\gamma s}f(s;\rho)\big] =e^{\gamma s}\big[\partial_s f(s;\rho)+\gamma f(s;\rho)\big]<0.$ Since $f(0;\rho)=1$, this gives, for all $s>0$,
\begin{equation}
 0<f(s;\rho)<e^{-\gamma s}.
 \label{eq:continuous-safe-path}
\end{equation}
By~\eqref{eq:objective-residual-derivative}, $J(\theta^\star+a(\theta_0-\theta^\star))$ is strictly decreasing in $a>0$. Hence~\eqref{eq:continuous-safe-path} gives $J(\theta(\tau;\rho))> J(\theta^\star+e^{-\gamma\tau}(\theta_0-\theta^\star))$. The coefficient in the main theorem is therefore
\begin{align}
 C(\tau,\gamma,\rho)
 &:=\frac{J(\theta(\tau;\rho))
 -J(\theta^\star+e^{-\gamma\tau}(\theta_0-\theta^\star))}{\rho}\notag\\
 &=\frac\gamma\rho\int_{f(\tau;\rho)}^{e^{-\gamma\tau}}
 a\operatorname{Var}_{y\sim\pi_{\theta^\star+a(\theta_0-\theta^\star)}(\cdot\mid x)}
 [\theta_0(x,y)-\theta^\star(x,y)]\,\mathrm d a>0.
 \label{eq:constant-gain-coefficient}
\end{align}
This specifies its dependence on $\rho$, $\tau$, $\gamma$, and the fixed problem. It does not depend on $\eta$, $\beta$, or $B$. Using $\rho=\beta\rho+\eta$ in~\eqref{eq:fixed-horizon-exact-app} gives $J(\theta_T^\beta)-J(\theta_T^0) =C(\tau,\gamma,\rho)\beta\rho+O(\eta)$.

\vspace{5pt}
\noindent\textbf{Expected gain and winning probability.}
Lemma~\ref{lem:locality-main}, together with the uniform Hessian bound in Appendix~\ref{app:localization}, gives
\[
 \E[J(\widehat\theta_T^\beta)-J(\widehat\theta_T^0)]
 =J(\theta_T^\beta)-J(\theta_T^0)+O(\eta/B)=C(\tau,\gamma,\rho)\beta\rho+O(\eta)+O(\eta/B).
\]
The $O(\eta)$ error is the discrete-path approximation, and the $O(\eta/B)$ term is the sampling correction. At each fixed positive $\rho$, the expected gain tends to $C(\tau,\gamma,\rho)\rho>0$. Thus it is at least half this limit for sufficiently small $\eta$. Proposition~\ref{prop:variance-main} gives $ \operatorname{Var}[J(\widehat\theta_T^\beta)-J(\widehat\theta_T^0)] \le C\eta/B$ for sufficiently small $\eta$, where $C$ is a constant independent of $\eta$ and $\beta$. Chebyshev's inequality then gives
\begin{equation}
 \Prob(J(\widehat\theta_T^\beta)\le J(\widehat\theta_T^0))
 \le\frac{\operatorname{Var}[J(\widehat\theta_T^\beta)-J(\widehat\theta_T^0)]}
 {\big(\E[J(\widehat\theta_T^\beta)-J(\widehat\theta_T^0)]\big)^2}
 \le\frac{4C}{C(\tau,\gamma,\rho)^2\rho^2}\frac\eta B
 =O(\eta/B).
\end{equation}
\end{proof}
\subsection{Proof of Lemma~\ref{lem:continuous-path}}
\label{app:continuous-path-proof}
\begin{proof}
We construct the continuous solution and estimate the error of the discrete approximation.

\vspace{5pt}
\noindent\textbf{Construction of the continuous solution.}
We first establish existence and uniqueness for
\eqref{eq:continuous-path-definition} by successive substitution in
its integral form. According to the definition of the continuous version in~\eqref{eq:continuous-path-definition}, we have $f(v;\rho)=f(0;\rho)-\gamma\int_0^v((({\int_0^s e^{-(s-u)/\rho}f(u;\rho)\mathrm d u})/
 {\rho(1-e^{-s/\rho})})\mathrm d s)$. Define the iterates $f^{[n]}(v;\rho)$ for $n\ge0$ by
\[
 f^{[n+1]}(v;\rho)
 =1-\gamma\int_0^v
 \frac{\int_0^s e^{-(s-u)/\rho}f^{[n]}(u;\rho)\mathrm d u}
 {\rho(1-e^{-s/\rho})}\mathrm d s,
\]
where we start from $f^{[0]}(v;\rho)=1$ and $f(0;\rho)=1$. Since the term ${\int_0^s e^{-(s-u)/\rho}f^{[n]}(u;\rho)\mathrm d u}/(
 {\rho(1-e^{-s/\rho})})$ can be interpreted as a weighted average of $f^{[n]}(u;\rho)$ over $[0,s]$, it is bounded by the maximum absolute value on $[0,s]$. Subtracting two successive iterates therefore gives, for $n\ge1$, $ |f^{[n+1]}(v;\rho)-f^{[n]}(v;\rho)|
 \le\gamma\int_0^v\max_{u\le s}
 |f^{[n]}(u;\rho)-f^{[n-1]}(u;\rho)|\,\mathrm d s.$
Since $f^{[1]}(v;\rho)-f^{[0]}(v;\rho)=-\gamma v$, induction yields
\begin{equation}
 \max_{s\le v}
 |f^{[n+1]}(s;\rho)-f^{[n]}(s;\rho)|
 \le\frac{(\gamma v)^{n+1}}{(n+1)!}.
 \label{eq:continuous-path-iterate-bound}
\end{equation}
The corrections therefore have a summable uniform bound on {$v\in[0,\tau]$}.
Thus the iterates converge uniformly to a solution $f$. {For uniqueness, suppose $f$ and $g$ are two solutions. Applying the same integral estimate to their difference gives $|f(v;\rho)-g(v;\rho)|\le\gamma\int_0^v\max_{u\le s}|f(u;\rho)-g(u;\rho)|\,\mathrm ds$. Repeatedly applying this estimate gives $\max_{s\le\tau}|f(s;\rho)-g(s;\rho)|\le((\gamma\tau)^n/n!)\max_{s\le\tau}|f(s;\rho)-g(s;\rho)|$. Letting $n\to\infty$ proves uniqueness.}

\vspace{5pt}
\noindent\textbf{Continuous change over one time step.} Summing the bounds on successive differences in~\eqref{eq:continuous-path-iterate-bound} and using $f^{[0]}(v;\rho)=1$, we have $|f(s;\rho)|\le 1+\sum_{n=0}^{\infty}{(\gamma s)^{n+1}}/{(n+1)!}=e^{\gamma s}$ for $s\in[0,\tau]$. Since the derivative $\partial_s f(s;\rho)$ is a weighted average of $f(u;\rho)$, we have $|\partial_s f(s;\rho)|\le\gamma\max_{u\le s}|f(u;\rho)|\le\gamma e^{\gamma\tau}$ on $[0,\tau]$. The mean value theorem then gives for $s,s'\in[0,\tau]$,
\begin{equation}
 |f(s;\rho)-f(s';\rho)|\le\gamma e^{\gamma\tau}|s-s'|.
 \label{eq:continuous-path-lipschitz}
\end{equation}
To control the second derivative, we differentiate~\eqref{eq:continuous-path-definition} and use~\eqref{eq:continuous-path-lipschitz} to obtain, for $0<s\le\tau$,
\begin{equation}
|\partial_s^2 f(s;\rho)|
\le\frac{\gamma\int_0^s e^{-(s-u)/\rho}
|f(u;\rho)-f(s;\rho)|\,\mathrm du}{\rho^2(1-e^{-s/\rho})^2}\le
 \frac{\gamma^2e^{\gamma\tau}(1-(1+s/\rho)e^{-s/\rho})}{(1-e^{-s/\rho})^2}
\le\gamma^2e^{\gamma\tau}.
\end{equation}
The last inequality uses $1-s/\rho\le e^{-s/\rho}$. Therefore for $t<T$, the continuous change has main term $\eta\partial_s f(t\eta;\rho)$ and error bounded by
\begin{equation}
\begin{aligned}
|f((t+1)\eta;\rho)-f(t\eta;\rho)
-\eta\partial_s f(t\eta;\rho)|&=\Big|\int_{t\eta}^{(t+1)\eta}
[\partial_s f(s;\rho)-\partial_s f(t\eta;\rho)]\,\mathrm ds\Big|\\
&\le\gamma^2e^{\gamma\tau}
\int_{t\eta}^{(t+1)\eta}(s-t\eta)\,\mathrm ds
=\frac12\gamma^2e^{\gamma\tau}\eta^2.
\end{aligned}
\label{eq:continuous-step-error}
\end{equation}
The same estimate holds for $t=0$ by continuity of $\partial_s f(s;\rho)$ at $s=0$.

\vspace{5pt}
\noindent\textbf{Comparison of the discrete and continuous averages.}
We next control $|(f_{t+1}(\beta)-f_t(\beta))/\eta-\partial_s f(t\eta;\rho)|$.
For $1\le t<T$, set $r=-\eta/\log\beta$, so $e^{-\eta/r}=\beta$ and $\rho-\eta\le r\le\rho$. We can write the discrete average as
\[
 \sum_{t'=0}^t\omega_\beta(t,t')f_{t'}(\beta)
 =\sum_{k=0}^t\int_{k\eta}^{(k+1)\eta}
 \frac{e^{-a/r}f_{t-k}(\beta)}{r(1-e^{-(t+1)\eta/r})}\,\mathrm d a.
\]
Using the definition of $f_t(\beta)$ in Section~\ref{sec:exact} and~\eqref{eq:continuous-path-definition}, we can decompose the difference between the discrete and continuous changes as
\begin{equation}
\begin{aligned}
 &|(f_{t+1}(\beta)-f_t(\beta))/\eta-\partial_s f(t\eta;\rho)|\\
 &\quad\le\gamma\max_{t'\le t}|f_{t'}(\beta)-f(t'\eta;\rho)|\\
 &\qquad+\gamma\underbrace{
 \bigg|\sum_{k=0}^t\int_{k\eta}^{(k+1)\eta}
 \frac{e^{-a/r}f((t-k)\eta;\rho)}{r(1-e^{-(t+1)\eta/r})}\,\mathrm d a-\int_0^{t\eta}\frac{e^{-a/r}f(t\eta-a;\rho)}{r(1-e^{-t\eta/r})}\mathrm d a\bigg|
 }_{(\mathrm I)}\\
 &\qquad+\gamma\underbrace{\bigg|\int_0^{t\eta}f(t\eta-a;\rho)
 \bigg[\frac{e^{-a/r}}{r(1-e^{-t\eta/r})}
 -\frac{e^{-a/\rho}}{\rho(1-e^{-t\eta/\rho})}\bigg]\,\mathrm d a\bigg|}_{(\mathrm{II})}.
\end{aligned}
\label{eq:continuous-rate-decomposition}
\end{equation}
The first term bounds the error caused by replacing $f_{t'}(\beta)$ with $f(t'\eta;\rho)$. Term $(\mathrm I)$ compares the grid values of $f(\cdot;\rho)$ with its values inside each interval and accounts for the different normalizations on $[0,t\eta]$ and $[0,(t+1)\eta]$. Term $(\mathrm{II})$ accounts for the change of weight parameter from $r$ to $\rho$ while keeping $f(\cdot;\rho)$ fixed. Here we set $a=t\eta-u$ to match the continuous expression in~\eqref{eq:continuous-path-definition} with the discrete average.
Splitting off the last interval in $(\mathrm I)$ gives
\begin{equation}
\begin{aligned}
 (\mathrm I)&\le\sum_{k=0}^{t-1}\int_{k\eta}^{(k+1)\eta}
 \big|f((t-k)\eta;\rho)-f(t\eta-a;\rho)\big|
 \frac{e^{-a/r}}{r(1-e^{-(t+1)\eta/r})}\,\mathrm d a\\
 &\qquad+\omega_\beta(t,0)\int_0^{t\eta}
 \big|f(0;\rho)-f(t\eta-a;\rho)\big|
 \frac{e^{-a/r}}{r(1-e^{-t\eta/r})}\,\mathrm d a\\
 &\quad\le\gamma e^{\gamma\tau}\eta
 \frac{1-e^{-t\eta/r}}{1-e^{-(t+1)\eta/r}}
 +\omega_\beta(t,0)\gamma e^{\gamma\tau}t\eta
 \le2\gamma e^{\gamma\tau}\eta.
\end{aligned}
\label{eq:continuous-average-comparison-I}
\end{equation}
The first inequality uses $(1-e^{-t\eta/r})/(1-e^{-(t+1)\eta/r})=1-\omega_\beta(t,0)$ to account for the different normalizations. The second applies~\eqref{eq:continuous-path-lipschitz} with input differences at most $\eta$ and $t\eta$, respectively. The last inequality uses $\omega_\beta(t,0)=\beta^t/\sum_{k=0}^t\beta^k\le1/(t+1)$. To bound $(\mathrm{II})$, differentiate with respect to the weight parameter while keeping $f(\cdot;\rho)$ fixed. For $x>0$, we define the weighted mean of $a$ as $m(x):=\int_{0}^{t\eta}(ae^{-a/x})/(x(1-e^{-t\eta/x}))\,\mathrm da=x-t\eta/(e^{t\eta/x}-1)\in[0,t\eta]$. So the derivative satisfies
\begin{equation}
\begin{aligned}
 &\Big|\frac{\partial}{\partial x}
 \int_0^{t\eta}f(t\eta-a;\rho)
 \frac{e^{-a/x}}{x(1-e^{-t\eta/x})}\,\mathrm d a\Big|\\
 &\quad=\Big|\frac1{x^2}\int_0^{t\eta}
 f(t\eta-a;\rho)(a-m(x))
 \frac{e^{-a/x}}{x(1-e^{-t\eta/x})}\,\mathrm da\Big|\\
 &\quad=\Big|\frac1{x^2}\int_0^{t\eta}
 \big[f(t\eta-a;\rho)-f(t\eta-m(x);\rho)\big](a-m(x))
 \frac{e^{-a/x}}{x(1-e^{-t\eta/x})}\,\mathrm d a\Big|\\
 &\quad\le\frac{\gamma e^{\gamma\tau}}{x^2}
 \int_0^{t\eta}(a-m(x))^2
 \frac{e^{-a/x}}{x(1-e^{-t\eta/x})}\,\mathrm d a=\gamma e^{\gamma\tau}
 \Big[1-\frac{(t\eta/x)^2e^{t\eta/x}}{(e^{t\eta/x}-1)^2}\Big]
 \le\gamma e^{\gamma\tau}.
\end{aligned}
\end{equation}
The second equality uses the zero weighted mean of $a-m(x)$, and the first inequality applies~\eqref{eq:continuous-path-lipschitz}.
Integrating this derivative bound from $r$ to $\rho$ gives $(\mathrm{II})\le\gamma e^{\gamma\tau}(\rho-r)\le\gamma e^{\gamma\tau}\eta$. Substituting the bounds for $(\mathrm I)$ and $(\mathrm{II})$ into~\eqref{eq:continuous-rate-decomposition} gives
\begin{equation}
 \begin{aligned}
 &|(f_{t+1}(\beta)-f_t(\beta))/\eta-\partial_s f(t\eta;\rho)|\le\gamma\max_{t'\le t}|f_{t'}(\beta)-f(t'\eta;\rho)|
 +3\gamma^2e^{\gamma\tau}\eta.
 \end{aligned}
 \label{eq:continuous-average-comparison}
\end{equation}
At $t=0$, the left-hand side equals zero, so this bound also holds there.
The triangle inequality together with~\eqref{eq:continuous-step-error} and~\eqref{eq:continuous-average-comparison} gives, for $t<T$,
\begin{equation}
\begin{aligned}
|f_{t+1}(\beta)-f((t+1)\eta;\rho)|&\le |f_t(\beta)-f(t\eta;\rho)|
+\eta\Big|
\frac{f_{t+1}(\beta)-f_t(\beta)}{\eta}
-\partial_s f(t\eta;\rho)\Big|\\
&\qquad+|f((t+1)\eta;\rho)-f(t\eta;\rho)
-\eta\partial_s f(t\eta;\rho)|\\
&\le(1+\gamma\eta)\max_{t'\le t}|f_{t'}(\beta)-f(t'\eta;\rho)|
 +7\gamma^2e^{\gamma\tau}\eta^2/2.
\end{aligned}
\label{eq:continuous-error-propagation}
\end{equation}
Since $f_0(\beta)=f(0;\rho)=1$, repeatedly applying~\eqref{eq:continuous-error-propagation} gives
\begin{equation}
 \max_{t\le T}|f_t(\beta)-f(t\eta;\rho)|
 \le\frac72\gamma^2e^{\gamma\tau}\eta^2\sum_{t=0}^{T-1}(1+\gamma\eta)^t=\frac72\gamma e^{\gamma\tau}\eta\big[(1+\gamma\eta)^T-1\big]
 \le\frac72\gamma e^{2\gamma\tau}\eta.
 \end{equation}
The last inequality uses $(1+\gamma\eta)^T-1\le e^{\gamma T\eta}\le e^{\gamma\tau}$.
\end{proof}
\section{Proof of Theorem~\ref{thm:smaller-lr}}
\label{app:smaller-lr}
\begin{proof}
We first match the exact endpoints at finite step size. The mean comparison then follows from the sampling expansion; the variance comparison requires the continuous response and its limit at long optimization times. In this section, $f_T(\eta,\beta)$ displays the learning-rate argument because the two runs use different rates.

\vspace{5pt}
\noindent\textbf{Finite-step matching and expected objectives.}
The assumptions give $0<\eta_1\le\rho_1<\rho_2\le1/(4\gamma)$ and $ 0\le\beta_1<1-\eta_1/\rho_2 \le1-4\gamma\eta_1 \le((1-\gamma\eta_1)/(1+\gamma\eta_1))^2. $ By the residual comparison in the proof of Theorem~\ref{thm:exact-benefit}, $f_T(\eta_1,1-\eta_1/\rho_2)<f_T(\eta_1,\beta_1)<1$. For fixed $T$, the definition of $f_t$ in Section~\ref{sec:exact} gives $f_T(\eta,1-\eta/\rho_2)\to1$ as $\eta\to0$. Since $f_T(\eta,1-\eta/\rho_2)$ is continuous for $\eta\in(0,\eta_1]$, there exists a deterministic $\eta_2\in(0,\eta_1)$ such that
\[
 f_T(\eta_2,1-\eta_2/\rho_2)=f_T(\eta_1,\beta_1).
\]
Set $\beta_2=1-\eta_2/\rho_2$. Then $\beta_2>\beta_1$, and Lemma~\ref{lem:exact-dynamics} gives $\theta_T^{(2)}=\theta_T^{(1)}$. Applying the individual expansion~\eqref{eq:locality-expansion} to both runs and using $\theta_T^{(2)}=\theta_T^{(1)}$ proves the expected objective comparison.

\vspace{5pt}
\noindent\textbf{Limiting optimization times.}
For the variance comparison, we use the continuous approximation from Lemma~\ref{lem:continuous-path}. We first identify the limiting optimization time of the matched run. Fix $\tau,\rho_1,\rho_2$. By~\eqref{eq:continuous-safe-path}, $f(\cdot;\rho_2)$ is continuous, strictly decreasing from one, and tends to zero. Hence there is a unique $\tau_2>0$ such that
\begin{equation}
 f(\tau_2;\rho_2)=f(\tau;\rho_1),\qquad
 \theta(\tau_2;\rho_2)=\theta(\tau;\rho_1).
 \label{eq:continuous-endpoint-match}
\end{equation}
Combining Lemma~\ref{lem:continuous-path} with the exact endpoint matching gives $f(T\eta_2;\rho_2)=f(\tau;\rho_1)+O(\eta_1) =f(\tau_2;\rho_2)+O(\eta_1)$. Since $f(\cdot;\rho_2)$ is continuous and strictly decreasing, $f(T\eta_2;\rho_2)\to f(\tau_2;\rho_2)$ implies
\begin{equation}
 T\eta_2\to\tau_2,\qquad
 \frac{\eta_2}{\eta_1}=\frac{T\eta_2}{\tau}
 \to\frac{\tau_2}{\tau}
 \quad\text{as }\eta_1\to0.
 \label{eq:matched-time-limit}
\end{equation}
To compare the variance coefficients, we introduce the continuous counterpart of the noise response in~\eqref{eq:response}. Here $s=t\eta$ is optimization time. Fix $\rho$ and a noise input time $s_1>0$. For a unit noise input in one coordinate at update $t'$, where $t'\eta\to s_1$ as $\eta\to0$, let $\Delta\theta_t^\beta$ and $\Delta m_t^\beta$ denote the discrete differences in that parameter coordinate and its EMA momentum before bias correction from the dynamics without this input. With $\beta=1-\eta/\rho$, define the continuous responses for $s_2>s_1$ by
\[
 \Delta\theta(s_2;\rho)=\lim_{\eta\to0}\Delta\theta_t^\beta/\eta,\qquad
 \Delta m(s_2;\rho)=\lim_{\eta\to0}\Delta m_t^\beta/\eta,
\]
where $t\eta\to s_2$ as $\eta\to0$. The updates in~\eqref{eq:npg-ema} and the difference dynamics in~\eqref{eq:noise-expansion} give
\[
 \partial_{s_2}\Delta\theta(s_2;\rho)=\Delta m(s_2;\rho)/(1-e^{-s_2/\rho}),\quad
 \partial_{s_2}\Delta m(s_2;\rho)=-(\gamma\Delta\theta(s_2;\rho)+\Delta m(s_2;\rho))/\rho.
\]
Let $\Phi_\rho(s_2,s_1)$ be the transition matrix for $(\Delta\theta(s_2;\rho),\Delta m(s_2;\rho))^\top$. It is defined by
\begin{equation}
 \partial_{s_2}\Phi_\rho(s_2,s_1)=
 \begin{pmatrix}0&(1-e^{-s_2/\rho})^{-1}\\-\gamma/\rho&-1/\rho\end{pmatrix}
 \Phi_\rho(s_2,s_1),\qquad \Phi_\rho(s_1,s_1)=\mathbf I_2.
 \label{eq:continuous-response}
\end{equation}
The initial condition is $\Delta\theta(s_1;\rho)=0$ and $\Delta m(s_1;\rho)=1/\rho$. Thus, the continuous counterpart of the normalized discrete response $\Psi_\beta(T,t)/\eta$ is
\[
 \Psi(s_2,s_1;\rho):=\Delta\theta(s_2;\rho)=\mathbf e_1^\top\Phi_\rho(s_2,s_1)\mathbf e_2/\rho.
\]
Here $\mathbf e_1=(1,0)^\top$ selects the parameter component and $\mathbf e_2/\rho=(0,1)^\top/\rho$ represents the initial state. We define the continuous-time counterpart of the covariance coefficient $\Sigma_T^\beta$ in~\eqref{eq:discrete-covariance} by
\begin{equation}
 \Sigma(s_2;\rho)
 =\int_0^{s_2}\Psi(s_2,s_1;\rho)^2
 \bigg[\operatorname{diag}\bigg(
 \frac{\mathbf r}{\pi_{\theta(s_1;\rho)}(\cdot\mid x)}\bigg)
 -\mathbf r\mathbf r^\top\bigg]\mathrm d s_1.
 \label{eq:continuous-covariance}
\end{equation}
As the optimization time $s_2\to\infty$, evaluating the limiting integral in~\eqref{eq:continuous-covariance} gives
\[
 \Sigma_\infty=\frac1{2\gamma}
 \bigg[\operatorname{diag}\bigg(
 \frac{\mathbf r}{\pi_{\theta^\star}(\cdot\mid x)}\bigg)
 -\mathbf r\mathbf r^\top\bigg].
\]
The following lemma establishes convergence of the covariance coefficients and provides uniform relative error control. Its proof is given in the next subsection.
\begin{lemma}[Limits of the covariance coefficient]
\label{lem:continuous-covariance}
Fix $0<\rho\le1/(4\gamma)$. If $\eta\to0$, $T\eta\to s>0$, and $\beta=1-\eta/\rho$, then $\Sigma_T^\beta\to\Sigma(s;\rho)$. Moreover,
\begin{equation}
 \Sigma(s;\rho)\longrightarrow\Sigma_\infty,
 \qquad
 \sup_{\mathbf v:\,\mathbf v^\top\Sigma_\infty\mathbf v>0}
 \Big|\frac{\mathbf v^\top\Sigma(s;\rho)\mathbf v}{\mathbf v^\top\Sigma_\infty\mathbf v}-1\Big|
 \longrightarrow0
 \quad(s\to\infty).
 \label{eq:relative-covariance-limit}
\end{equation}
\end{lemma}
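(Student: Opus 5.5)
The plan has two parts: the discrete-to-continuous limit $\Sigma_T^\beta\to\Sigma(s;\rho)$ at fixed $s$, and then the long-time limit $s\to\infty$ together with the relative bound.

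\textbf{Discrete-to-continuous limit.} I would write the recursion~\eqref{eq:response} as a two-dimensional linear recursion for the pair (parameter response, unnormalized momentum response). With $\beta=1-\eta/\rho$ it reads
\[
\Delta m_{t+1}=\beta\Delta m_t-(1-\beta)\gamma\Delta\theta_t,\qquad
\Delta\theta_{t+1}=\Delta\theta_t+\eta\,\Delta m_{t+1}/(1-\beta^{t+1}).
\]
After dividing by $\eta$, this is a one-step (Euler-type) discretization of~\eqref{eq:continuous-response}, with initial momentum $1/\rho$ at time $t'\eta$. The coefficient $(1-e^{-s/\rho})^{-1}$ is singular at $s=0$, so I would run a Grönwall/consistency argument on $[\delta,s]$ for each fixed $\delta>0$. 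This follows the template of Lemma~\ref{lem:continuous-path} and gives $\Psi_\beta(T,t)/\eta\to\Psi(s,s_1;\rho)$ uniformly for $t\eta\ge\delta$.

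The first bound in~\eqref{eq:response-bounds} controls the strip $t\eta<\delta$. It gives $|\Psi_\beta(T,t)|/\eta\le e^{\gamma\tau}(1+\log(T/t)+1)$, whose square has Riemann sum $\eta\sum_{t<\delta/\eta}(\cdot)^2=O(\delta\log^2\delta)$, which is small. The covariance brackets satisfy $\theta_t^\beta\to\theta(t\eta;\rho)$ uniformly by Lemma~\ref{lem:continuous-path}, and the map $\theta\mapsto\operatorname{diag}(\mathbf r/\pi_\theta)-\mathbf r\mathbf r^\top$ is Lipschitz on the bounded region. Hence the Riemann sum $\eta\sum_t(\Psi_\beta(T,t)/\eta)^2[\cdots]$ converges to the integral~\eqref{eq:continuous-covariance}, by dominated convergence followed by $\delta\to0$.

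\textbf{Long-time limit.} For large $s_1$ the transition coefficient is $1+O(e^{-s_1/\rho})$, so $\Psi(s_2,s_1;\rho)$ is close to $x(s_2-s_1)$. Here $x$ solves $\rho x''+x'+\gamma x=0$ with $x(0)=0$ and $x'(0)=1/\rho$. For $\rho\le1/(4\gamma)$ the characteristic roots $(-1\pm\sqrt{1-4\gamma\rho})/(2\rho)$ are real and negative, so $x$ decays exponentially. The energy identity $\tfrac{d}{du}(\rho x'^2+\gamma x^2)/2=-x'^2$ gives $\int_0^\infty x'^2=1/(2\rho)$. Multiplying the ODE by $x$ and integrating gives $\gamma\int_0^\infty x^2=\rho\int_0^\infty x'^2=1/2$, so $\int_0^\infty x^2=1/(2\gamma)$, which is the constant in $\Sigma_\infty$.

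I would then split $\Sigma(s;\rho)$ at $s_1=s/2$.
\begin{itemize}
\item[(i)] For $s_1\ge s/2$: $\Psi$ is within $O(e^{-s/(2\rho)})$ of $x(s-s_1)$ times an exponentially decaying envelope. Also $\theta(s_1;\rho)\to\theta^\star$, since $0<f<e^{-\gamma s_1}$ by~\eqref{eq:continuous-safe-path}. So this part converges to $\Sigma_\infty$.
\item[(ii)] For $s_1<s/2$: this part needs a uniform exponential decay bound on $\Phi_\rho(s_2,s_1)$ for the nonautonomous system. I expect this to be the main obstacle. I would first try the Lyapunov function $V=\rho m^2+\gamma\theta^2$ adapted to the limiting matrix, treating the excess $(1-e^{-s_2/\rho})^{-1}-1$ as an integrable perturbation. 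Grönwall then gives $|\Phi_\rho(s_2,s_1)|\le Ce^{-c(s_2-s_1)}$ with $C$ independent of $s_1$. With this bound the contribution of (ii) is $O(e^{-cs})$.
\end{itemize}

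\textbf{Relative bound.} For any $\theta$, the quadratic form $\mathbf v^\top[\operatorname{diag}(\mathbf r/\pi_\theta)-\mathbf r\mathbf r^\top]\mathbf v$ equals the variance of $Y=r(x,y)v_y/\pi_\theta(y\mid x)$ under $y\sim\pi_\theta$. Incorrect responses have positive probability and give $Y=0$, so the variance vanishes iff $v_y=0$ for every correct $y$. The kernel is therefore the same for every $\theta$ and for $\Sigma_\infty$.

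On the complementary subspace (coordinates of correct responses), all these matrices are positive definite. They depend continuously on $\pi_\theta$, and by the previous steps $\Sigma(s;\rho)$ converges to $\Sigma_\infty$ while having the same kernel as $\Sigma_\infty$. Restricting to this fixed subspace and using the uniform lower eigenvalue bound of $\Sigma_\infty$ there, operator-norm convergence on the subspace gives the uniform relative convergence in~\eqref{eq:relative-covariance-limit}.
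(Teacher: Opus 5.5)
The discrete-to-continuous step, the computation $\int_0^\infty x^2=1/(2\gamma)$, and the relative bound are essentially the paper's argument. The gap is in step (ii).

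The excess coefficient $(1-e^{-u/\rho})^{-1}-1=(e^{u/\rho}-1)^{-1}$ behaves like $\rho/u$ as $u\to0$. It is integrable on $[s_0,\infty)$ for each fixed $s_0>0$, but $\int_{s_1}^\infty(e^{u/\rho}-1)^{-1}\,\mathrm du=-\rho\log(1-e^{-s_1/\rho})\to\infty$ as $s_1\to0$. Gronwall therefore only yields a constant of order $(\rho/s_1)^{C\rho}$. That constant need not be square integrable near $0$, in which case it gives no control of the (ii) integral.

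Moreover, the bound you are after is false. Start from $(0,1/\rho)$ at a tiny $s_1$. Just after $s_1$ one has $\Delta m\approx1/\rho$ and $(1-e^{-u/\rho})^{-1}\ge\rho/u$, so $\Delta\theta(\sqrt{s_1})\ge(\tfrac12-o(1))\log(1/s_1)$. Hence $\|\Phi_\rho(s_2,s_1)\|_{\mathrm{op}}$ is not bounded uniformly over $0<s_1<s_2$, and no $C$ independent of $s_1\in(0,s/2)$ exists.

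Separately, $V=\rho m^2+\gamma\theta^2$ only satisfies $\dot V=-2m^2$ along the limiting flow. That gives boundedness, not exponential decay. You need either a strict quadratic Lyapunov function for the Hurwitz limiting matrix, or the semigroup bound $\|e^{\mathbf K v}\|_{\mathrm{op}}\le C_0e^{-cv}$ with variation of constants, which is what the paper uses.

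The missing step is to isolate a fixed initial window, as the paper does by treating $s_1\ge1$ and $0<s_1<1$ separately.

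For $s_1\ge1$, the perturbation integral is at most $\rho e^{-1/\rho}/(1-e^{-1/\rho})$. So Gronwall does give $\|\Phi_\rho(s,s_1)\|_{\mathrm{op}}\le Ce^{-c(s-s_1)}$ uniformly. This covers your (i) and the part of (ii) with $s_1\ge1$.

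For $0<s_1<1$, factor $\Phi_\rho(s,s_1)=\Phi_\rho(s,1)\Phi_\rho(1,s_1)$. Bound the state at time $1$ through the averaging structure rather than by perturbation:
\begin{itemize}
\item Passing the first inequality of \eqref{eq:response-bounds} to the limit gives $|\Psi(u,s_1;\rho)|\le e^{\gamma u}[1+\log(u/s_1)]$, hence $|\Delta\theta(1)|\le e^{\gamma}[1+\log(1/s_1)]$.
\item Integrating the momentum row of \eqref{eq:continuous-response} then gives $|\Delta m(1)|\le\rho^{-1}+\gamma e^{\gamma}[1+\log(1/s_1)]$.
\end{itemize}
This logarithmic growth is square integrable on $(0,1)$, so these inputs contribute $O(e^{-2c(s-1)})$.

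The growth is logarithmic rather than a power because the bias-corrected momentum is a normalized average with total weight at most one. A generic perturbation bound throws this away. The same continuous envelope is also what your first part needs on the continuous side of the $\delta\to0$ limit, to show $\int_0^\delta\Psi(s,s_1;\rho)^2\,\mathrm ds_1\to0$.
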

\noindent\textbf{Variance ratio at matched endpoints.}
We first note that the gradients at $\theta_T^{(1)}$ and $\theta_T^{(2)}$ are identical because the exact endpoints satisfy $\theta_T^{(1)}=\theta_T^{(2)}$. For the first run, $T\eta_1=\tau$. Lemma~\ref{lem:continuous-path} and the representation of the exact path in Lemma~\ref{lem:exact-dynamics} give $\|\theta_T^{(1)}-\theta(\tau;\rho_1)\|_2=O(\eta_1)$. The continuity of $\nabla J(\theta)$ therefore yields, as $\eta_1\to0$,
\[
 \nabla J(\theta_T^{(1)})=\nabla J(\theta_T^{(2)})
 \to\nabla J(\theta(\tau;\rho_1)).
\]
According to Lemma~\ref{lem:continuous-covariance}, the two discrete covariance coefficients converge to their continuous counterparts. Thus, the quadratic forms in the variance expression in~\eqref{eq:uniform-individual-variance} for the two runs converge to $\nabla J(\theta(\tau;\rho_1))^\top \Sigma(\tau;\rho_1)\nabla J(\theta(\tau;\rho_1))$ and $\nabla J(\theta(\tau;\rho_1))^\top \Sigma(\tau_2;\rho_2)\nabla J(\theta(\tau;\rho_1))$, respectively.

To obtain the limiting variance ratio from the leading variance terms, we first verify these limiting quadratic forms are nonzero for all sufficiently large fixed $\tau$. As shown in the proof of Lemma~\ref{lem:continuous-covariance}, $\Sigma(\tau;\rho_1)$ and $\Sigma(\tau_2;\rho_2)$ are positive definite on the reward-$1$ subspace and zero on its orthogonal complement. It therefore suffices to find a nonzero reward-$1$ coordinate of the common limiting gradient. Along the exact path $\theta(\tau;\rho_1)=\theta^\star+f(\tau;\rho_1)(\theta_0-\theta^\star)$, fix any reward-$1$ response $y$. Since $f(\tau;\rho_1)>0$ and the policy has full support, \eqref{eq:objective-gradient-appendix} shows that its gradient coordinate vanishes exactly when $\theta_0(x,y)-\theta^\star(x,y)$ equals the average of $\theta_0-\theta^\star$ under $\pi_{\theta(\tau;\rho_1)}$. The derivative of this average with respect to $\tau$ is
\[
 \partial_\tau f(\tau;\rho_1)\,
 \operatorname{Var}_{y'\sim\pi_{\theta(\tau;\rho_1)}(\cdot\mid x)}
 [\theta_0(x,y')-\theta^\star(x,y')]<0,
\]
because $\partial_\tau f(\tau;\rho_1)<0$ and the variance is positive by nonoptimal initialization and full support. The average is therefore strictly decreasing, so the chosen gradient coordinate can vanish at most once for $\tau>0$ and is nonzero for all sufficiently large $\tau$. Both quadratic forms are consequently strictly positive for these $\tau$. Using $\eta_2/\eta_1\to\tau_2/\tau$ and~\eqref{eq:uniform-individual-variance}, we obtain the limiting variance ratio
\begin{equation}
 \lim_{\eta_1\to0}
 \frac{\operatorname{Var}[J(\widehat\theta_T^{(2)})]}
 {\operatorname{Var}[J(\widehat\theta_T^{(1)})]}
 =\frac{\tau_2}{\tau}
 \frac{\nabla J(\theta(\tau;\rho_1))^\top
 \Sigma(\tau_2;\rho_2)\nabla J(\theta(\tau;\rho_1))}
 {\nabla J(\theta(\tau;\rho_1))^\top
 \Sigma(\tau;\rho_1)\nabla J(\theta(\tau;\rho_1))}.
 \label{eq:matched-variance-ratio}
\end{equation}
The relative bound in Lemma~\ref{lem:continuous-covariance} makes the quotient of quadratic forms in~\eqref{eq:matched-variance-ratio} tend to one, even though the common gradient tends to zero. It remains to determine $\tau_2/\tau$.

\vspace{5pt}
\noindent\textbf{Long-time residual decay.}
To determine $\tau_2/\tau$, define the logarithmic decay rate $h_\rho(s)=-\partial_s\log f(s;\rho)$. Differentiating~\eqref{eq:continuous-path-definition} gives, for $s>0$,
\begin{equation}
 h_\rho'(s)=h_\rho(s)^2
 -(h_\rho(s)-\gamma)/[\rho(1-e^{-s/\rho})],
 \qquad h_\rho(0)=\gamma.
 \label{eq:continuous-log-rate}
\end{equation}
Define $\kappa(\rho)=2\gamma/(1+\sqrt{1-4\gamma\rho})$, the smaller root of $\rho\kappa^2-\kappa+\gamma=0$. We now show that $h_\rho(s)\to\kappa(\rho)$ as $s\to\infty$. At $h_\rho(s)=\kappa(\rho)$,~\eqref{eq:continuous-log-rate} gives $h_\rho'(s)=-\kappa(\rho)^2e^{-s/\rho}/(1-e^{-s/\rho})<0$. Since $h_\rho(0)=\gamma<\kappa(\rho)$ and $h_\rho(s)$ cannot reach $\kappa(\rho)$ from below, we have $h_\rho(s)<\kappa(\rho)$. Using~\eqref{eq:continuous-log-rate}, we obtain
\[
 h_\rho'(s)\ge(\kappa(\rho)-h_\rho(s))^2
  -\kappa(\rho)^2e^{-s/\rho}/(1-e^{-s/\rho}).
  \]
This uses $\kappa(\rho)-\gamma=\rho\kappa(\rho)^2$ and $2\rho\kappa(\rho)\le1$. For any $0<\varepsilon<\kappa(\rho)-\gamma$, the bound gives $h_\rho'(s)\ge\varepsilon^2/2$ whenever $h_\rho(s)\le\kappa(\rho)-\varepsilon$ and $s$ is sufficiently large. Hence $h_\rho(s)>\kappa(\rho)-\varepsilon$ for all sufficiently large $s$. With the upper bound $h_\rho(s)<\kappa(\rho)$, we have $h_\rho(s)\to\kappa(\rho)$ as $s\to\infty$.
Integrating $h_\rho(s)=-\partial_s\log f(s;\rho)$ and using $f(0;\rho)=1$ gives
\[
 -\log f(s_2;\rho)/s_2
 =(1/s_2)\int_0^{s_2}h_\rho(s_1)\,\mathrm d s_1
 \to\kappa(\rho)\qquad\text{as }s_2\to\infty.
\]
Taking logarithms in~\eqref{eq:continuous-endpoint-match} and applying the preceding limit to both endpoints gives
\[
 \begin{aligned}
 \lim_{\tau\to\infty}\tau_2/\tau
 &=\lim_{\tau\to\infty}
 \frac{-\log f(\tau;\rho_1)/\tau}
 {-\log f(\tau_2;\rho_2)/\tau_2}\\
 &=\kappa(\rho_1)/\kappa(\rho_2)
 =(1+\sqrt{1-4\gamma\rho_2})/(1+\sqrt{1-4\gamma\rho_1})<1.
 \end{aligned}
\]
Substituting into~\eqref{eq:matched-variance-ratio} proves the double-limit variance ratio in Theorem~\ref{thm:smaller-lr}.
\end{proof}
\subsection{Proof of Lemma~\ref{lem:continuous-covariance}}
\begin{proof}
We pass to the continuous covariance, compute its long-time limit, and control relative error.

\vspace{5pt}
\noindent\textbf{Discrete-to-continuous covariance.}
Fix $0<\rho\le1/(4\gamma)$ and set $\beta=1-\eta/\rho$. Let $\eta\to0$ with $T\eta\to s>0$, and fix $0<a<s$. The updates in~\eqref{eq:npg-ema} and~\eqref{eq:noise-expansion} give the following discrete counterpart of the transition matrix in~\eqref{eq:continuous-response}.
\begin{equation}
  \begin{pmatrix}
 1-\gamma\eta/\sum_{k=0}^{t}\beta^k&\rho\beta/\sum_{k=0}^{t}\beta^k\\
 -\gamma\eta/\rho&\beta
 \end{pmatrix}.
\end{equation}
For $a\le t\eta$, we compare it with the continuous one-step transition in~\eqref{eq:continuous-response}. Taylor expansion gives
\[
 \begin{aligned}
 \Phi_\rho((t+1)\eta,t\eta)
 &=\Phi_\rho(t\eta,t\eta)
 +\eta\partial_{s_2}\Phi_\rho(s_2,t\eta)\big|_{s_2=t\eta}
 +O_a(\eta^2)\\
 &=\begin{pmatrix}
 1&\eta/(1-e^{-t\eta/\rho})\\
 -\gamma\eta/\rho&1-\eta/\rho
 \end{pmatrix}
 +O_a(\eta^2).
 \end{aligned}
\]
For the discrete matrix, the first-row entries satisfy $1-\gamma\eta/\sum_{k=0}^{t}\beta^k=1-\gamma\eta^2/\big[\rho(1-\beta^{t+1})\big]=1+O_a(\eta^2)$ and $\rho\beta/\sum_{k=0}^{t}\beta^k=\eta\beta/(1-\beta^{t+1})=\eta/(1-e^{-t\eta/\rho})+O_a(\eta^2)$. The second row agrees exactly with the continuous expansion. Thus the discrete transition matrix equals $\Phi_\rho((t+1)\eta,t\eta)+O_a(\eta^2)$ in operator norm, where $O_a(\cdot)$ suppresses constants that may depend on $a$.
By~\eqref{eq:npg-ema} and~\eqref{eq:response}, the responses at iteration $t+1$ to a unit noise input in one coordinate at update $t$ are $\eta/\sum_{k=0}^{t}\beta^k$ for that parameter coordinate and $\eta/\rho$ for its EMA momentum before bias correction. After division by $\eta$, these responses are $1/\sum_{k=0}^{t}\beta^k$ and $1/\rho$, respectively. The preceding Taylor expansion gives $\Phi_\rho((t+1)\eta,t\eta)\mathbf e_2/\rho=(\eta/\big[\rho(1-e^{-t\eta/\rho})\big],1/\rho-\eta/\rho^2)^\top+O_a(\eta^2)$. Combining the $O_a(\eta)$ response error at $(t+1)\eta$ with the $O_a(\eta^2)$ error in each subsequent transition gives
\begin{equation}
 \max_{a\le t\eta<T\eta}|\Psi_\beta(T,t)/\eta-\Psi(T\eta,t\eta;\rho)|\le C_a\eta(1+C_a\eta)^T+C_a\eta^2\sum_{k=0}^{T-1}(1+C_a\eta)^k=O_a(\eta).
 \label{eq:continuous-response-approximation}
\end{equation}
Each one-step transition has operator norm at most $1+C_a\eta$. The first term bounds the initial response differences after propagation to $T\eta$. The sum bounds the contribution of the subsequent transition errors. The final bound uses $(1+C_a\eta)^T\le e^{C_aT\eta}$ and $T\eta\to s$.
The above bounds give $|\Psi_\beta(T,t)/\eta-\Psi(s,s_1;\rho)|\to0$ as $T\eta\to s$, $t\eta\to s_1>0$. Taking the limit in the first inequality in~\eqref{eq:response-bounds} gives, for $s_1\leq s$,
\begin{equation}
 |\Psi(s,s_1;\rho)|\le e^{\gamma s}[1+\log(s/s_1)].
 \label{eq:continuous-response-envelope}
\end{equation}
We first consider the covariance contribution over $[a,s]$. Lemmas~\ref{lem:continuous-path} and~\ref{lem:exact-dynamics} give $\max_{t\le T}\|\theta_t^\beta-\theta(t\eta;\rho)\|_2=O(\eta)$. The safe-range residual bounds in the proof of Theorem~\ref{thm:exact-benefit} and~\eqref{eq:continuous-safe-path} also give $\|\theta_t^\beta\|_\infty,\|\theta(s_1;\rho)\|_\infty \le\|\theta^\star\|_\infty+\|\theta_0-\theta^\star\|_\infty$. Using~\eqref{eq:discrete-covariance}, we obtain
\[
 \begin{aligned}
 &\eta\sum_{t=\lceil a/\eta\rceil}^{T-1}(\Psi_\beta(T,t)/\eta)^2
 \big[\operatorname{diag}(\mathbf r/\pi_{\theta_t^\beta}(\cdot\mid x))-\mathbf r\mathbf r^\top\big]\\
 &\quad=\eta\sum_{t=\lceil a/\eta\rceil}^{T-1}\Psi(T\eta,t\eta;\rho)^2
 \big[\operatorname{diag}(\mathbf r/\pi_{\theta(t\eta;\rho)}(\cdot\mid x))-\mathbf r\mathbf r^\top\big]+O_a(\eta).
 \end{aligned}
\]
The equality uses~\eqref{eq:continuous-response-approximation} to replace $\Psi_\beta(T,t)/\eta$ by $\Psi(T\eta,t\eta;\rho)$ and the parameter approximation above to replace $\theta_t^\beta$ by $\theta(t\eta;\rho)$. The bounded parameter entries give uniformly bounded covariance matrices. Applying the analysis used to obtain~\eqref{eq:covariance-replacement-error} to the parameter approximation above gives a covariance replacement error of $O(\eta)$. Together with~\eqref{eq:continuous-response-envelope}, these estimates give an error of $O_a(\eta)$ in each summand and a total error of $\eta T O_a(\eta)=O_a(\eta)$. The sum on the right is a Riemann sum for the continuous integrand in~\eqref{eq:continuous-covariance}, so it converges to the integral over $[a,s]$ as $\eta\to0$ and $T\eta\to s$.
For the omitted early samples, the first inequality in~\eqref{eq:response-bounds} gives
\[
 \limsup_{\eta\to0}\eta\sum_{t=0}^{\lceil a/\eta\rceil-1}
 |\Psi_\beta(T,t)/\eta|^2
 \le e^{2\gamma s}\int_0^a[1+\log(s/s_1)]^2\,\mathrm d s_1.
\]
For $t\ge1$, the inequality $\sum_{k=t+1}^T1/k\le\log(T/t)$ compares the sum with the integral. The $t=0$ term $\eta|\Psi_\beta(T,0)/\eta|^2$ is $O(\eta\log^2(1/\eta))$ and tends to zero. Equation~\eqref{eq:continuous-response-envelope} bounds the continuous contribution $\int_0^a|\Psi(s,s_1;\rho)|^2\,\mathrm d s_1$ by the same integral, which tends to zero as $a\to0$. Together with the uniform reward covariance bound, these estimates make both omitted contributions arbitrarily small by letting $a\to0$ after taking $\eta\to0$. Combining this with the comparison on $[a,s]$ gives $\Sigma_T^\beta\to\Sigma(s;\rho)$ in operator norm as $\eta\to0$ and $T\eta\to s$.

\vspace{5pt}
\noindent\textbf{Long-time limit of the covariance coefficient.}
We establish the existence of $\lim_{s\to\infty}\Sigma(s;\rho)$ and compute its value.
The coefficient matrix in~\eqref{eq:continuous-response} can be written as
\[
 \begin{pmatrix}0&(1-e^{-s/\rho})^{-1}\\-\gamma/\rho&-1/\rho\end{pmatrix}
 =\begin{pmatrix}0&1\\-\gamma/\rho&-1/\rho\end{pmatrix}+\begin{pmatrix}0&e^{-s/\rho}/(1-e^{-s/\rho})\\0&0\end{pmatrix}.
 \label{eq:continuous-response-decomposition}
\]
Let $\mathbf K_\rho$ denote the first matrix and $\mathbf R_\rho(s)$ denote the remainder. For $s\ge1$, the remainder's operator norm satisfies $\|\mathbf R_\rho(s)\|_{\mathrm{op}}=e^{-s/\rho}/(1-e^{-s/\rho})\le e^{-s/\rho}/(1-e^{-1/\rho})$. Both eigenvalues of $\mathbf K_\rho$ have negative real parts, so $\|e^{\mathbf K_\rho v}\|_{\mathrm{op}}\le C_0e^{-cv}$ for some $C_0\ge1$ and $c>0$ depending only on $\rho$ and $\gamma$. Variation of constants in~\eqref{eq:continuous-response} gives
\[
 \Phi_\rho(s,s_1)=e^{\mathbf K_\rho(s-s_1)}
 +\int_{s_1}^s e^{\mathbf K_\rho(s-u)}\mathbf R_\rho(u)\Phi_\rho(u,s_1)\,\mathrm d u.
\]
Taking operator norms and using the bound on $\|e^{\mathbf K_\rho v}\|_{\mathrm{op}}$ yields, for $s\ge s_1\ge1$,
\begin{align}
   \|\Phi_\rho(s,s_1)\|_{\mathrm{op}}&\le C_0e^{-c(s-s_1)}+C_0e^{-c(s-s_1)}\int_{s_1}^s\|\mathbf R_\rho(u)\|_{\mathrm{op}}e^{c(u-s_1)}\|\Phi_\rho(u,s_1)\|_{\mathrm{op}}\,\mathrm d u\notag\\
  &\le C_0e^{-c(s-s_1)}
 \exp\bigg(C_0\int_{s_1}^s\|\mathbf R_\rho(u)\|_{\mathrm{op}}\,\mathrm d u\bigg)\notag\\
  &\le Ce^{-c(s-s_1)}.\label{eq:exponential-response}
\end{align}
The second inequality uses Gronwall's inequality. The next inequality uses $\int_{s_1}^s\|\mathbf R_\rho(u)\|_{\mathrm{op}}\,\mathrm d u\le\int_1^\infty e^{-u/\rho}/(1-e^{-1/\rho})\,\mathrm d u=\rho e^{-1/\rho}/(1-e^{-1/\rho})$. Therefore, there exists a constant $C$ depending only on $\rho$ and $\gamma$ such that $\|\Phi_\rho(s,s_1)\|_{\mathrm{op}}\le Ce^{-c(s-s_1)}$ for all $s\ge s_1\ge1$.
The lower bound $s_1\ge1$ keeps this integral uniformly bounded, whereas $\|\mathbf R_\rho(u)\|_{\mathrm{op}}\sim\rho/u$ as $u\to0$.
For $0<s_1<1$, \eqref{eq:continuous-response-envelope} bounds the parameter response on $[s_1,1]$ by $e^\gamma[1+\log(1/s_1)]$. Integrating the second row of~\eqref{eq:continuous-response} with $\Delta m(s_1;\rho)=1/\rho$ gives $|\Delta m(1;\rho)|\le\rho^{-1}+\gamma e^\gamma[1+\log(1/s_1)]$. Using $\Phi_\rho(s,s_1)=\Phi_\rho(s,1)\Phi_\rho(1,s_1)$, \eqref{eq:exponential-response}, and the uniform reward covariance bound, the operator norm of the contribution of these inputs to~\eqref{eq:continuous-covariance} is at most $C_1e^{-2c(s-1)}\int_0^1[1+\log(1/s_1)]^2\,\mathrm d s_1$. Here $C_1>0$ is a constant independent of $s$ and $s_1$. As $s\to\infty$, this contribution tends to zero, so the inputs with $s_1\in[0,1]$ can be omitted when evaluating the covariance limit.

For inputs with $s_1\ge1$, write $v=s-s_1$ and define $g(v)=\mathbf e_1^\top e^{\mathbf K_\rho v}\mathbf e_2/\rho$. For fixed $v\ge0$, the decomposition in~\eqref{eq:continuous-response-decomposition} gives $\sup_{u\in[s-v,s]}\|\mathbf R_\rho(u)\|_{\mathrm{op}}\to0$. The variation-of-constants formula above then yields $\Psi(s,s-v;\rho)\to g(v)$. Equation~\eqref{eq:continuous-safe-path} also gives $\theta(s-v;\rho)\to\theta^\star$. Extend the integrand in~\eqref{eq:continuous-covariance} after this change of variables by zero for $v>s-1$. By~\eqref{eq:exponential-response} and the uniform reward covariance bound established above, its operator norm is at most $C_1e^{-2cv}$, with $C_1$ chosen to cover both covariance estimates. Since this bound is integrable on $[0,\infty)$, dominated convergence allows the limit and integral to be interchanged. Together with the vanishing contribution of $0<s_1<1$, this gives
\[
 \lim_{s\to\infty}\Sigma(s;\rho)
 =\bigg(\int_0^\infty g(v)^2\,\mathrm d v\bigg)
 \big[\operatorname{diag}\big(\mathbf r/\pi_{\theta^\star}(\cdot\mid x)\big)
 -\mathbf r\mathbf r^\top\big].
\]
Integration by parts in $\rho g''+g'+\gamma g=0$ with $g(0)=0$ and $g'(0)=1/\rho$ gives $\int_0^\infty g(v)^2\,\mathrm d v=1/(2\gamma)$, so this limit equals $\Sigma_\infty$.

\vspace{5pt}
\noindent\textbf{Relative covariance bound.}
For a vector $\mathbf v$, let $v_y$ denote its entry corresponding to response $y$. By the covariance interpretation in~\eqref{eq:discrete-covariance}, the null space of a reward covariance matrix consists of vectors $\mathbf v$ for which $r(x,y)v_y/\pi_\theta(y\mid x)$ is constant over $y$. Full support and the existence of a reward-$0$ response force this constant to be zero, which is equivalent to $v_y=0$ on all reward-$1$ coordinates. Thus these matrices are positive definite on the span of $\{\mathbf e_y:r(x,y)=1\}$ and vanish on its orthogonal complement.
With initial state $\mathbf e_2/\rho$, the first row of~\eqref{eq:continuous-response} gives $\Psi(s,s_1;\rho)>0$ for $s_1<s$ sufficiently close to each fixed $s>0$. Equation~\eqref{eq:continuous-covariance} therefore gives $\Sigma(s;\rho)$ the same positive definiteness and null space as $\Sigma_\infty$. Let $\lambda_*>0$ denote the smallest eigenvalue of $\Sigma_\infty$ restricted to this subspace. The relative error satisfies
\[
 \sup_{\mathbf v:\,\mathbf v^\top\Sigma_\infty\mathbf v>0}
 \big|\mathbf v^\top\Sigma(s;\rho)\mathbf v/(\mathbf v^\top\Sigma_\infty\mathbf v)-1\big|
 \le\|\Sigma(s;\rho)-\Sigma_\infty\|_{\mathrm{op}}/\lambda_*
 \to0.
\]
Removing the reward-$0$ coordinates changes neither quadratic form, so the supremum can be restricted to the reward-$1$ subspace. On this subspace, the absolute difference of the quadratic forms is at most $\|\Sigma(s;\rho)-\Sigma_\infty\|_{\mathrm{op}}\cdot\|\mathbf v\|_2^2$, while $\mathbf v^\top\Sigma_\infty\mathbf v\ge\lambda_*\|\mathbf v\|_2^2$.
\end{proof}
\section{Experimental details}
\label{app:experimental-details}
\begin{figure}[!t]
 \centering
 \subcapcentertrue
 \subfigure[Gain and Noise.\label{fig:bandit-five-uniform-gain-noise}]{\includegraphics[width=0.24\textwidth]{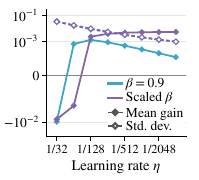}}\hfill
 \subfigure[Win Probability.\label{fig:bandit-five-uniform-win}]{\includegraphics[width=0.24\textwidth]{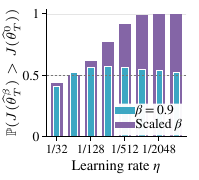}}\hfill
 \subfigure[Gain and Noise.\label{fig:bandit-five-random-gain-noise}]{\includegraphics[width=0.24\textwidth]{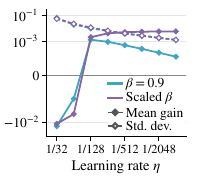}}\hfill
 \subfigure[Win Probability.\label{fig:bandit-five-random-win}]{\includegraphics[width=0.24\textwidth]{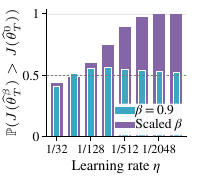}}
 \caption{Numerical bandit results with $5$ reward-one and $5$ reward-zero arms
 under uniform initialization (\ref{fig:bandit-five-uniform-gain-noise}, \ref{fig:bandit-five-uniform-win}) and random policy initialization (\ref{fig:bandit-five-random-gain-noise}, \ref{fig:bandit-five-random-win}).}
 \label{fig:bandit-five-results}
\end{figure}

\subsection{Bandit results}
\label{app:additional-bandit}

\noindent\textbf{Settings.}
We consider a 10-armed bandit problem and maximize
\begin{equation*}
 J(\theta)=\sum_{y=1}^{10}\pi_\theta(y\mid x)r(x,y)
 -\mathrm{KL}\left(\pi_\theta(\cdot\mid x)\Vert
 \pi_{\theta_0}(\cdot\mid x)\right).
\end{equation*}
We set $\gamma=1$, $B=1$,
and $\tau=1$, use $\eta=1/T$ for $T=32,64,\ldots,4096$, and compare
fixed $\beta=0.9$ and scaled $\beta=1-4\eta$ with $\beta=0$.
Each configuration uses $50{,}000$ independent training replicates,
with $J(\theta)$ evaluated exactly after training.
We vary the reward vector and policy initialization across four settings:
\begin{itemize}[leftmargin=*,itemsep=3pt]
 \item \textbf{3:7, uniform.} Reward vector
 $(1,1,1,0,0,0,0,0,0,0)$; initial policy
 $\pi_{\theta_0}(\cdot\mid x)=(0.1,\ldots,0.1)$.
 Results appear in Figures~\ref{fig:bandit-uniform-gain-noise}
 and~\ref{fig:bandit-uniform-win}.
 \item \textbf{3:7, random.} Reward vector
 $(1,1,1,0,0,0,0,0,0,0)$; initial policy
$\pi_{\theta_0}(\cdot\mid x)\approx (0.0388,\allowbreak 0.0713,\allowbreak 0.1502,\allowbreak 0.3932,\allowbreak 0.0100,\allowbreak 0.0459,\allowbreak 0.0635,\allowbreak 0.0928,\allowbreak 0.0891,\allowbreak 0.0453)$.
 Results appear in Figures~\ref{fig:bandit-random-gain-noise}
 and~\ref{fig:bandit-random-win}.
 \item \textbf{5:5, uniform.} Reward vector
 $(1,1,1,1,1,0,0,0,0,0)$; initial policy
 $\pi_{\theta_0}(\cdot\mid x)=(0.1,\ldots,0.1)$.
 Results appear in Figures~\ref{fig:bandit-five-uniform-gain-noise}
 and~\ref{fig:bandit-five-uniform-win}.
 \item \textbf{5:5, random.} Reward vector
 $(1,1,1,1,1,0,0,0,0,0)$; initial policy
 $\pi_{\theta_0}(\cdot\mid x)\approx (0.1320,\allowbreak 0.1596,\allowbreak 0.0361,\allowbreak 0.0342,\allowbreak 0.0221,\allowbreak 0.0846,\allowbreak 0.2399,\allowbreak 0.0667,\allowbreak 0.1895,\allowbreak 0.0353)$.
 Results appear in Figures~\ref{fig:bandit-five-random-gain-noise}
 and~\ref{fig:bandit-five-random-win}.
\end{itemize}
Random initializations are drawn separately from $\operatorname{Dirichlet}(1,\ldots,1)$ and are held fixed across learning rates, EMA coefficients, and training replicates.

\vspace{5pt}
\noindent\textbf{Results.}
All four settings exhibit the same trends as $\eta$ decreases:
the standard deviation scales as $\sqrt{\eta}$ under both schedules,
while the mean gain decays as $\eta$ with fixed $\beta$ and approaches
a positive constant with scaled $\beta$.
Consequently, the winning probability approaches $1/2$ in the fixed
setting and $1$ in the scaled setting.
These results support the robustness of the theoretical predictions
across reward vectors and policy initializations.
Table~\ref{tab:bandit-snr} reports the SNR defined in
Section~\ref{sec:numerical-bandit} for each setting and learning rate.

\begin{table}[!htbp]
 \centering
 \vspace{6pt}
 \begingroup
 \small
 \renewcommand{\arraystretch}{1.12}
 \setlength{\tabcolsep}{3pt}
 {
 \begin{tabular*}{\textwidth}{@{\extracolsep{\fill}}ll*{8}{r}@{}}
 \toprule
 \multirow{2}{*}{Setting} & \multirow{2}{*}{$\beta$}
 & \multicolumn{8}{c}{Learning rate $\eta$} \\
 \cmidrule(l){3-10}
 & & $1/32$ & $1/64$ & $1/128$ & $1/256$ & $1/512$
 & $1/1024$ & $1/2048$ & $1/4096$ \\
 \midrule
 \multirow{2}{*}{3:7, uniform} & $0.9$ & 0.1271 & 0.1236 & 0.1732 & 0.1530 & 0.1183 & 0.0876 & 0.0632 & 0.0453 \\
 & $1-4\eta$ & 0.0719 & 0.1118 & 0.3720 & 0.7229 & 1.1844 & 1.7900 & 2.6258 & 3.7647 \\
 \addlinespace[4pt]
 \multirow{2}{*}{3:7, random} & $0.9$ & 0.1809 & 0.0904 & 0.1536 & 0.1393 & 0.1071 & 0.0796 & 0.0573 & 0.0406 \\
 & $1-4\eta$ & 0.1220 & 0.0585 & 0.3178 & 0.6486 & 1.0712 & 1.6238 & 2.3713 & 3.3982 \\
 \addlinespace[4pt]
 \multirow{2}{*}{5:5, uniform} & $0.9$ & 0.2625 & 0.0402 & 0.1546 & 0.1726 & 0.1524 & 0.1226 & 0.0931 & 0.0673 \\
 & $1-4\eta$ & 0.1849 & 0.0310 & 0.2455 & 0.7047 & 1.4036 & 2.3824 & 3.7419 & 5.5492 \\
 \addlinespace[4pt]
 \multirow{2}{*}{5:5, random} & $0.9$ & 0.3162 & 0.0083 & 0.1343 & 0.1539 & 0.1310 & 0.1002 & 0.0746 & 0.0534 \\
  & $1-4\eta$ & 0.2428 & 0.1064 & 0.1862 & 0.6293 & 1.2251 & 1.9919 & 3.0391 & 4.4557 \\
 \bottomrule
 \end{tabular*}
 }
 \endgroup
 \setlength{\abovecaptionskip}{7pt}
 \setlength{\belowcaptionskip}{0pt}
 \caption{SNR of the objective difference between independent EMA
 and no-EMA runs. Each ratio gives the numbers of reward-one and reward-zero arms, respectively.}
 \label{tab:bandit-snr}
\end{table}

\subsection{Large language model experiments}
\label{app:llm-experiments}

\noindent\textbf{Settings.} We use Qwen3-4B-Base and Qwen2.5-32B models using the verl training framework. The parameters are set as follows.

\begin{table}[htbp]
\centering
\label{tab:llm-training-settings}
\small
\setlength{\tabcolsep}{3pt}
\renewcommand{\arraystretch}{1.15}
\begin{tabular*}{\textwidth}{@{\extracolsep{\fill}}llll@{}}
\toprule
\textbf{Parameter} & \textbf{Value} & \textbf{Parameter} & \textbf{Value} \\
\midrule
Training dataset & DAPO-17k (English) & Generations per prompt & 8 \\
Prompts per batch & 16 & Updates per RL step & 1 \\
Max prompt length & 2,048 & Max response length & 4,096 \\
Rollout temperature & 1.0 & Rollout top-$p$ & 1.0 \\
Clip ratio & 0.2 & Gradient clipping norm & 1.0 \\
KL coefficient & 0.01 & Training generation seeds & 1001, 2001, 3001 \\
\bottomrule
\end{tabular*}
\caption{Shared training settings for the LLM experiments.}
\end{table}

\noindent\textbf{Prompts and rewards.}
We use the original prompts provided in the DAPO-17k English dataset.
As in DAPO's implementation, we restrict answer extraction to the final 300 characters of each response.
A response receives a reward of $1$ if the extracted answer matches the ground truth, and $0$ otherwise.
The same rule applies to responses truncated at the length limit; truncation alone does not force a zero reward.

\vspace{5pt}
\noindent\textbf{Validation.}
To assess the optimization performance, we evaluate on held-out data drawn from the same distribution as the training data. We construct this validation set by randomly holding out 1,000 problems from the DAPO-17k English dataset and excluding them from training. Evaluation is performed every 20 optimization steps, with an additional evaluation at step 150 for runs ending at that step.
For each problem, we sample eight responses at a temperature of 0.7 and top-$p$ of 1.0, with a maximum response length of 4,096 tokens. Each response receives a binary correctness score from the answer verifier. We report the average score over all 8,000 responses, equivalently the mean per-problem accuracy. All runs use the same held-out problems.

\end{document}